\newtheorem{theorem} {Theorem}
\newtheorem{lemma} {Lemma}
\newcommand{\ssnote}[1]{\ifthenelse{\boolean{include-notes}}%
 {\textcolor{blue}{\textbf{SS: #1}}}{}}
\newcommand{\sjnote}[1]{\ifthenelse{\boolean{include-notes}}%
 {\textcolor{purple}{\textbf{SJ: #1}}}{}}
\title{\LARGE \bf
Efficient Touch Based Localization through Submodularity%
\vspace{-1em}
}
\author{ \parbox{\linewidth}{\centering Shervin Javdani, Matthew Klingensmith, J. Andrew Bagnell, Nancy S. Pollard, Siddhartha S. Srinivasa\\
         The Robotics Institute, 
         Carnegie Mellon University\\
         {\tt \small \{sjavdani, mklingen, dbagnell, nsp, siddh\}@cs.cmu.edu}}
}
\newcommand{\eref}[1]{(\ref{#1})}
\newcommand{\sref}[1]{Section~\ref{#1}}
\newcommand{\figref}[1]{Fig.~\ref{#1}}
\newcommand{\tabref}[1]{Table~\ref{#1}}
\newcommand{\ttexttilde}{\hbox{$\scriptstyle\sim$}}
\newcommand{\allactionset}{\mathbb{A}}
\newcommand{\actionset}{A}
\newcommand{\tactionset}{\tilde{\actionset}}
\newcommand{\actionitem}{a}
\newcommand{\tactionitem}{\tilde{\actionitem}}
\newcommand{\allobservationset}{\mathbb{O}}
\newcommand{\allobservationsetaction}{\mathbb{O}_\actionitem}
\newcommand{\observationset}{\mathit{O}}
\newcommand{\tobservationset}{\tilde{\observationset}}
\newcommand{\observationitem}{o}
\newcommand{\tobservationitem}{\tilde{\observationitem}}
\newcommand{\realization}{\phi}
\newcommand{\noisyrealization}{ {\hat{\phi}}}
\newcommand{\partialrealization}{\psi}
\newcommand{\partialrealizationrandom}{\Psi}
\newcommand{\partialgivenactions}{\partialrealization_\actionset}
\newcommand{\partialgivenactionsrandom}{\partialrealizationrandom_\actionset}
\newcommand{\randrealization}{\Phi}
\newcommand{\noisyrandrealization}{\widehat{\Phi}}
\newcommand{\policy}{\pi}
\newcommand{\greedypolicy}{\pi^{greedy}}
\newcommand{\optpolicy}{\policy^*}
\newcommand{\infogain}{IG}
\newcommand{\weighting}{\omega}
\newcommand{\noisingfunc}{\Omega}
\newcommand{\noisingfuncaction}{\noisingfunc_\actionitem}
\newcommand{\noisingfuncactionset}{\noisingfunc_\actionset}
\newcommand{\noisingfuncallaction}{\noisingfunc_\allactionset}
\newcommand{\noisingfuncallotheraction}{\noisingfunc_{\allactionset \backslash \actionset}}
\newcommand{\numnoisy}{K}
\newcommand{\probmass}{m}
\newcommand{\proofprobmass}{m}
\newcommand{\probmassall}{M}
\newcommand{\probmasspartial}{\probmassall_{\partialrealization}}
\newcommand{\probmasspartialarg}[1]{\probmassall_{\partialrealization_{#1}}}
\newcommand{\probmasspartialaction}{\probmass_{\partialrealization,\actionitem,\observationitem}}
\newcommand{\probmasspartialargaction}[1]{\probmass_{\partialrealization_{#1},\actionitem,\observationitem}}
\newcommand{\probmasspartialactionprime}{\probmass_{\partialrealization,\actionitem,\observationitem'}}
\newcommand{\noisyprobwithweightingnumer}{\weighting_{\actionitem_{\noisyrealization'}} (\actionitem_{\realization})  } 
\newcommand{\noisyprobwithweightingnumerobs}{\weighting_{\observationitem} (\actionitem_{\realization})  } 
\newcommand{\noisyprobwithweightingnumert}{\weighting_{\tactionitem_{\noisyrealization'}} (\tactionitem_{\realization})  } 
\newcommand{\noisyprobwithweightingnumertobs}{\weighting_{\tobservationitem} (\tactionitem_{\realization})  } 
\newcommand{\noisyprobwithweightingnumernoapos}{\weighting_{\actionitem_{\noisyrealization}} (\actionitem_{\realization})  } 
\newcommand{\noisyprobwithweightingdenom}{\sum_{\noisyrealization'' \in \noisingfuncaction(\realization)} \weighting_{\actionitem_{\noisyrealization''} } (\actionitem_{\realization}) }
\newcommand{\noisyprobwithweightingdenomt}{\sum_{\noisyrealization'' \in \noisingfunc_{\tactionitem}(\realization)} \weighting_{\tactionitem_{\noisyrealization''} } (\tactionitem_{\realization}) }
\newcommand{\noisyprobwithweighting}{\frac{\noisyprobwithweightingnumer}{\noisyprobwithweightingdenom} }
\newcommand{\noisyprobwithweightingobs}{\frac{\noisyprobwithweightingnumerobs}{\noisyprobwithweightingdenom} }
\newcommand{\noisyprobwithweightingmaxval}{\max_{\noisyrealization' \in \noisingfuncaction(\realization)} \weighting_{\actionitem_{\noisyrealization'} } (\actionitem_{\realization}) }
\newcommand{\sumrealization}{\sum_{\realization \in \randrealization}}
\newcommand{\sumnoisyfromrealization}[1]{\sum_{\noisyrealization \in \noisingfunc_{#1}(\realization)}}
\newcommand{\sumnoisyfromnoisyrealization}[1]{\sum_{\noisyrealization' \in \noisingfunc_{#1}(\noisyrealization)}}
\newcommand{\prodpartial}{\prod_{ \{\tactionitem,\tobservationitem\} \in \partialrealization}}
\newcommand{\prodpartialdelta}{\prodpartial \delta_{\tactionitem_{\noisyrealization} \tobservationitem}}
\newcommand{\prodpartialweightobs}{\prodpartial \frac{\noisyprobwithweightingnumertobs}{\noisyprobwithweightingdenomt} }
\newcommand{\prodpartialweightobskappa}{\prodpartial \frac{\noisyprobwithweightingnumertobs}{\kappa} }
\newcommand{\prodpartialdeltaweight}{\prodpartial \frac{\noisyprobwithweightingnumert}{\noisyprobwithweightingdenomt} \delta_{\tactionitem_{\noisyrealization'} \tobservationitem}}
\newcommand{\footlabel}[2]{%
    \addtocounter{footnote}{1}%
    \footnotetext[\thefootnote]{%
        \addtocounter{footnote}{-1}%
        \refstepcounter{footnote}\label{#1}%
        #2%
    }%
    $^{\ref{#1}}$%
}
\newcommand{\utilthresh}{Q}
\newcommand{\scalestopsyellowcol}{0.34}
\newcommand{\scalestopsyellowcolleft}{310}
\newcommand{\scalestopsyellowcolbottom}{180}
\newcommand{\scalestopsyellowcolright}{375}
\newcommand{\scalestopsyellowcoltop}{180}
\newcommand{\actionfig}{0.22}
\newcommand{\actionfigleft}{310}
\newcommand{\actionfigbottom}{250}
\newcommand{\actionfigright}{360}
\newcommand{\actionfigtop}{20}
\newcommand{\setcoverfigscale}{0.32\columnwidth}
\newcommand{\setcoverfigleft}{287}
\newcommand{\setcoverfigbottom}{140}
\newcommand{\setcoverfigright}{360}
\newcommand{\setcoverfigtop}{290}
\newcommand{\robotresultsscale}{0.32\columnwidth}
\newcommand{\robotresultsleft}{750}
\newcommand{\robotresultsbottom}{0}
\newcommand{\robotresultsright}{500}
\newcommand{\robotresultstop}{250}
\begin{document}

\maketitle
\thispagestyle{empty}
\pagestyle{empty}

\begin{abstract}
Many robotic systems deal with uncertainty by performing a sequence of information gathering actions. In this work, we focus on the problem of efficiently constructing such a sequence by drawing an explicit connection to submodularity. Ideally, we would like a method that finds the \emph{optimal} sequence, taking the minimum amount of time while providing sufficient information. Finding this sequence, however, is generally intractable. As a result, many well-established methods select actions greedily. Surprisingly, this often performs well. Our work first explains this high performance -- we note a commonly used metric, reduction of Shannon entropy, is submodular under certain assumptions, rendering the greedy solution comparable to the optimal plan in the \emph{offline} setting. However, reacting \emph{online} to observations can increase performance. Recently developed notions of \emph{adaptive submodularity} provide guarantees for a greedy algorithm in this \emph{online} setting. In this work, we develop new methods based on adaptive submodularity for selecting a sequence of information gathering actions online. In addition to providing guarantees, we can capitalize on submodularity to attain additional computational speedups. We demonstrate the effectiveness of these methods in simulation and on a robot.
\end{abstract}

\section{Introduction}

Uncertainty is a fundamental problem in robotics. It accumulates from various sources such as noisy sensors, inaccurate models, and poor calibration. This is particularly problematic for \emph{fine manipulation tasks}~\cite{lazano_1984_fine_motion}, such as grasping and pushing a small button on a drill, hooking the fingers of a hand around a door handle and turning it (two running examples in our paper), or inserting a key into a keyhole. Because these tasks require high accuracy, failing to account for uncertainty often results in catastrophic failure.

To alleviate these failures, many works perform a sequence of uncertainty reducing actions prior to attempting the task~\cite{cassandra_1996_acting_under_uncertainty, fox_1998_active_localization, hsiao_2008_robust_belief, hebert_next_best_touch_2012, batting_particle_filter}. In this work, we address the efficient automatic construction of such a sequence when information gaining actions are \emph{guarded moves}~\cite{will_1975_guarded}. See \figref{fig:door_touching_ex} for an example sequence which enabled a successful grasp of door handle with a noisy pose estimate.



Ideally, the selected actions reduce uncertainty enough to accomplish the task while optimizing a performance criterion like minimum energy or time. Computing the \emph{optimal} such sequence can be formulated as a Partially Observable Markov Decision Process (POMDP)~\cite{kaelbling_1998_pomdp}. However, finding optimal solutions to POMDPs is PSPACE complete~\cite{papad_1987_mdp_complexity}. Although several promising approximate methods have been developed~\cite{roy_2005_belief_compression,smith_2005_point_pomdp, kurniawati_2008_sarsop, shani_2012_survey_pointbased}, they are still not well suited for many manipulation tasks due to the continuous state and observation spaces. 

Previous work on uncertainty reduction utilizes online planning within the POMDP framework, looking at locally reachable states during each decision step~\cite{ross_2008_online_pomdp}. In general, these methods limit the search to a low horizon~\cite{kaijen_thesis}, often using the \emph{greedy} strategy of selecting actions with the highest expected benefit in one step~\cite{cassandra_1996_acting_under_uncertainty, fox_1998_active_localization, hsiao_2008_robust_belief, hebert_next_best_touch_2012}. This is out of necessity - computational time increases exponentially with the search depth. However, this simple greedy strategy often works surprisingly well.

\begin{figure}[t]
\centering
\setlength\fboxsep{0pt}
\setlength\fboxrule{0.5pt}
  \fbox{\includegraphics[width=0.495\columnwidth, trim=480 300 330 55, clip=true]{./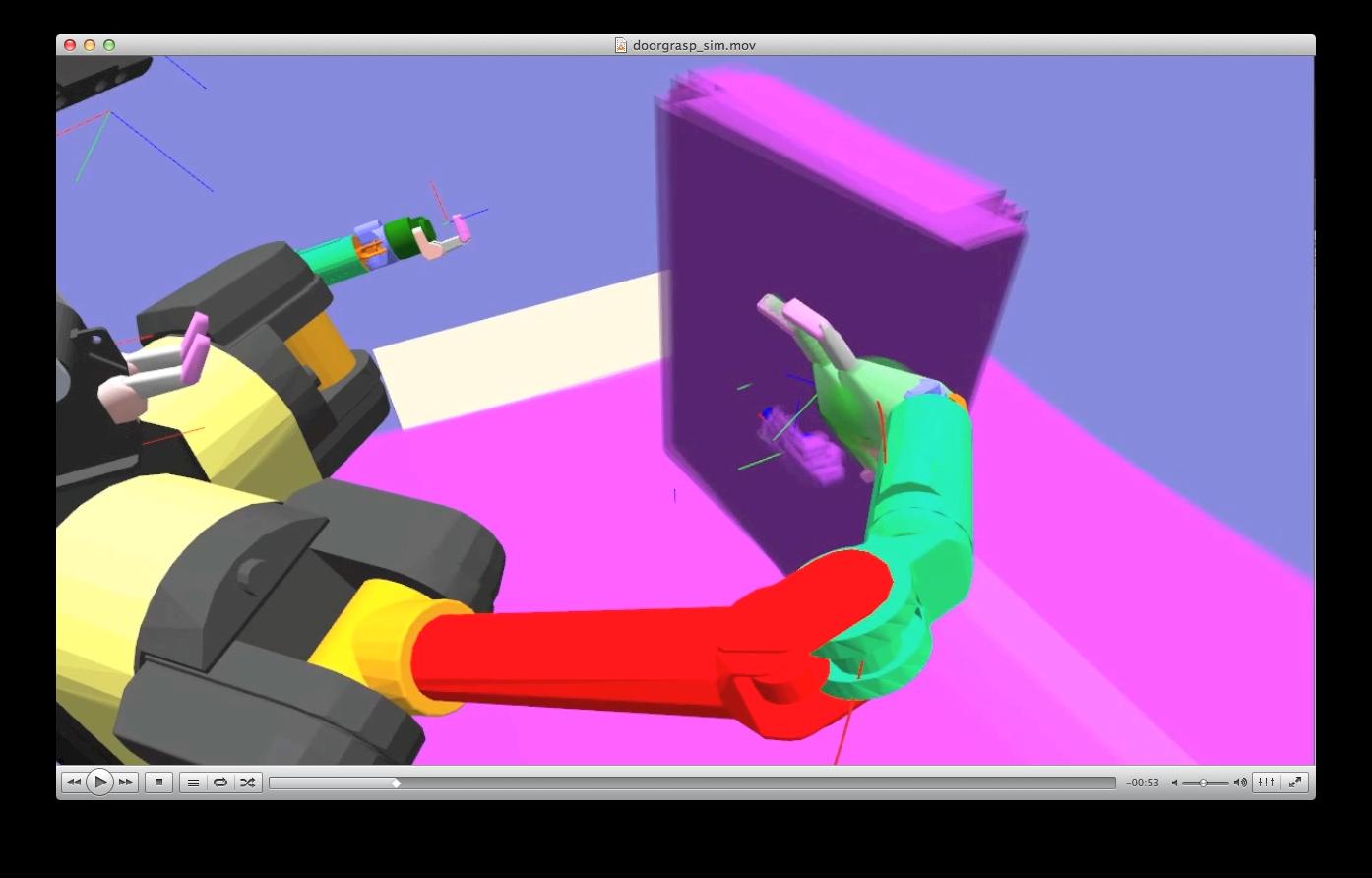}}\hspace{-1.5mm}
	\fbox{\includegraphics[width=0.495\columnwidth, trim=480 300 330 55, clip=true]{./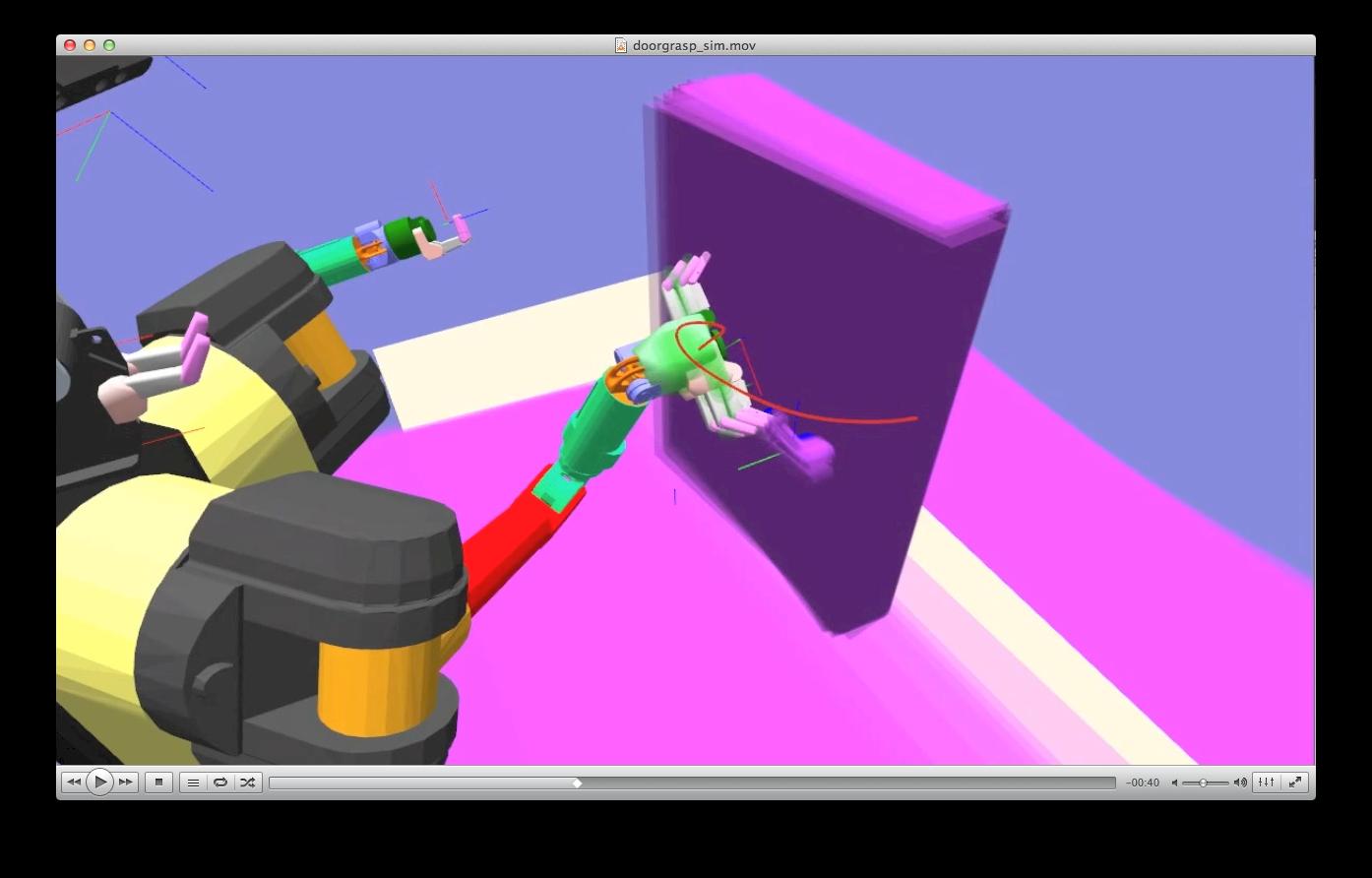}}
  \fbox{\includegraphics[width=0.495\columnwidth, trim=480 300 330 55, clip=true]{./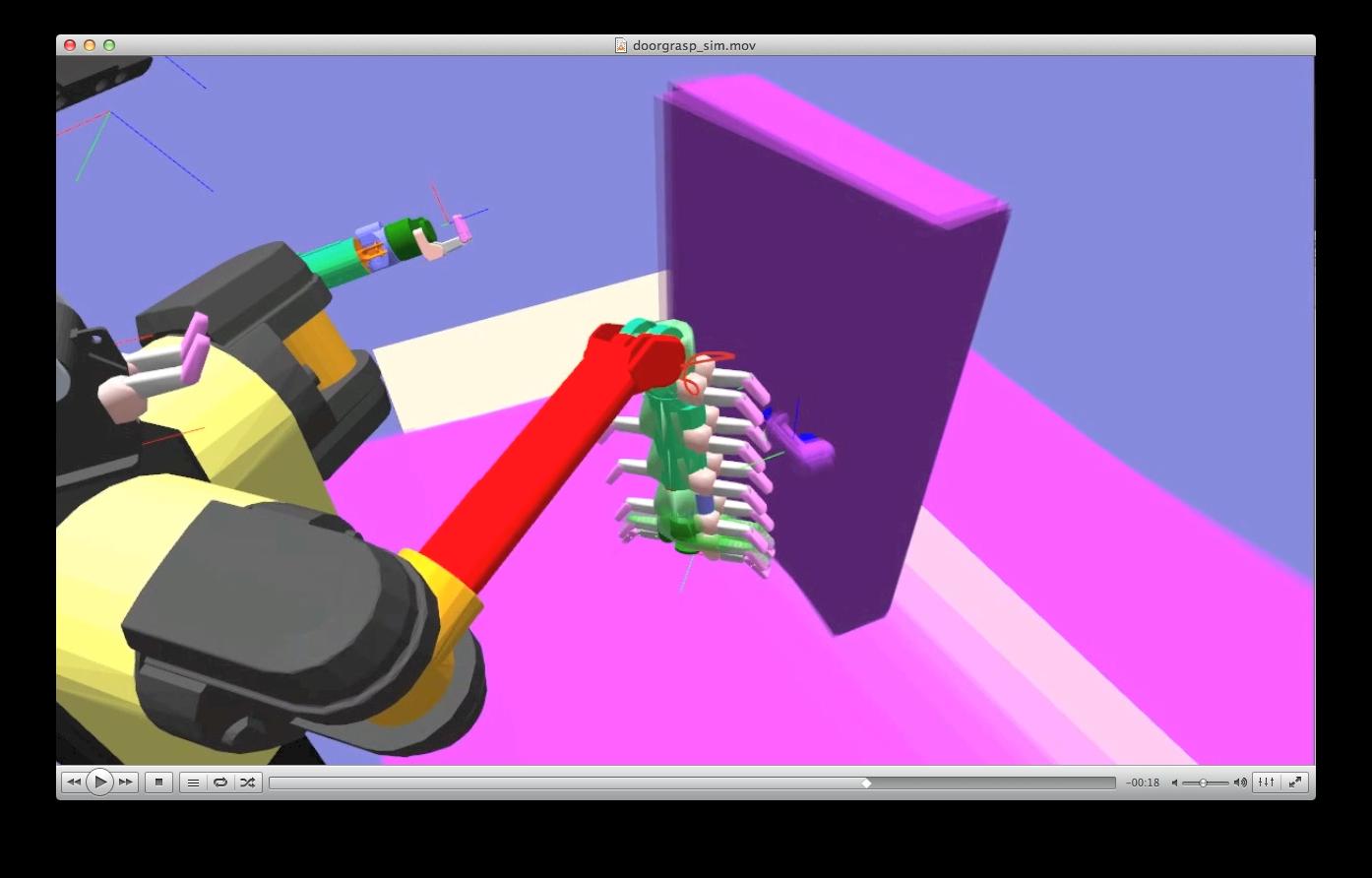}}\hspace{-1.5mm}
	\fbox{\includegraphics[width=0.495\columnwidth, trim=421 129.5 165 300, clip=true]{./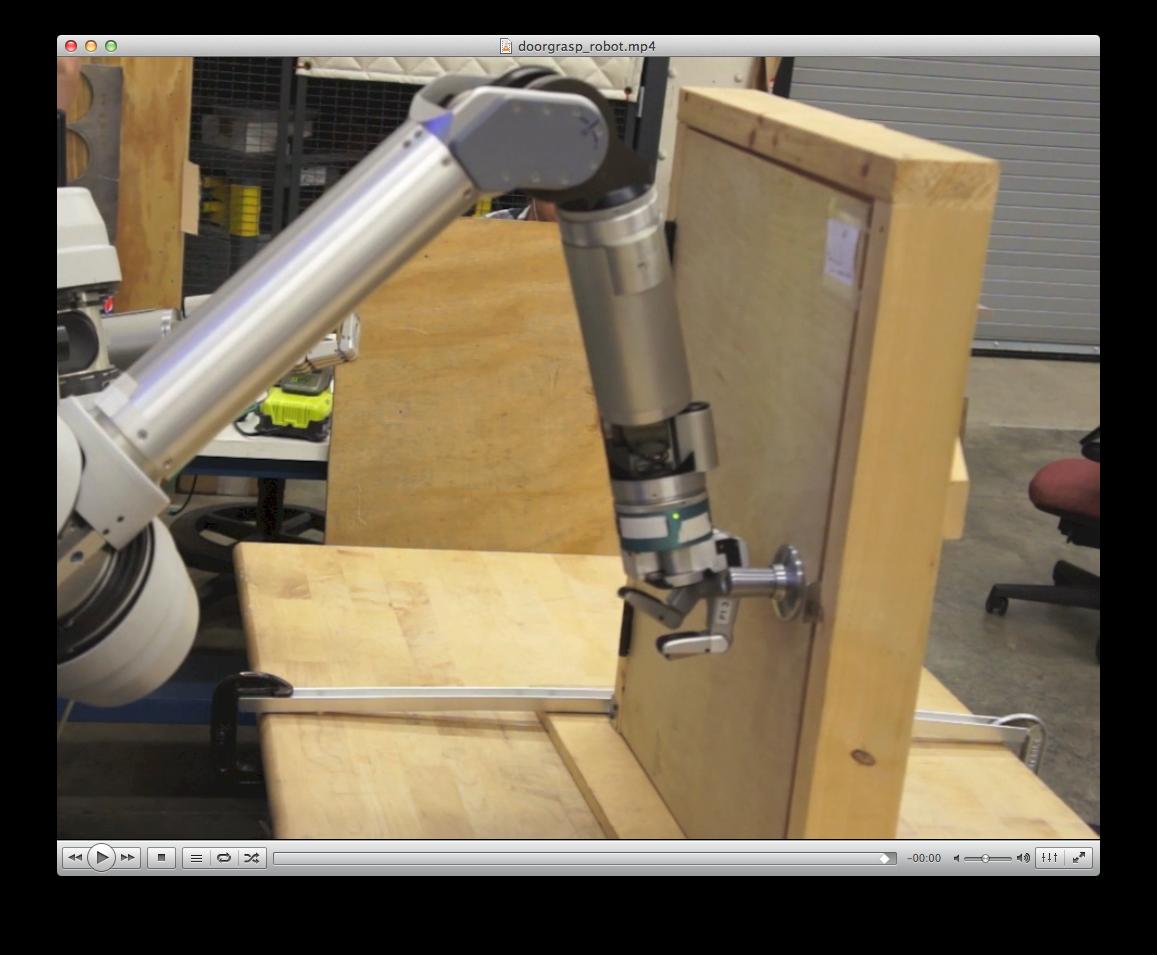}}
	\caption{We adaptively select a sequence of touch actions to reduce uncertainty. Here, we show actions selected by our Hypothesis Pruning method, enabling a successful grasp.}
	\label{fig:door_touching_ex}
\end{figure}

One class of problems known to perform well with a greedy strategy is \emph{submodular maximization}. A metric is submodular if it exhibits the diminishing returns property, which we define rigorously in \sref{sec_submodular}. A striking feature of submodular maximizations is that the greedy strategy is provably near-optimal. Furthermore, no polynomial time algorithm can guarantee optimality~\cite{feige_1998, wolsey_1982}. 


One often used metric for uncertainty reduction is the expected decrease in Shannon entropy~\cite{cassandra_1996_acting_under_uncertainty, fox_1998_active_localization, bourgault_2002_info_exploration, zheng_2005_active_diagnosis, batting_particle_filter, erickson_2008_blind, hsiao_2008_robust_belief, hebert_next_best_touch_2012}. This is referred to as the \emph{information gain} metric, and has been shown to be submodular under certain assumptions~\cite{krause_2005_submodular_entropy}. Not surprisingly, many robotic systems which perform well with a low horizon use this metric~\cite{cassandra_1996_acting_under_uncertainty, fox_1998_active_localization, bourgault_2002_info_exploration, hsiao_2008_robust_belief, hebert_next_best_touch_2012}, though most do not make the connection with submodularity. We note that Hsiao mentions that touch localization could be formulated as a submodular maximization~\cite{kaijen_thesis}. 

The guarantees for submodular maximization only hold in the \emph{non-adaptive} setting. That is, if we were to select a sequence of actions offline, and perform the same sequence regardless of which observations we received online, greedy action selection would be near-optimal. Unfortunately, it has been shown that this can perform exponentially worse than a greedy \emph{adaptive} algorithm for information gain~\cite{hollinger_isrr_2011}. Thus, while there are no formal guarantees for performing a submodular maximizations online, we might hope for good performance.

Recent notions of \emph{adaptive submodularity}~\cite{golovin_adaptive_2011} extend the guarantees of submodularity to the adaptive setting, requiring properties similar to those of submodular functions. Unfortunately, information gain does not have these properties. With information gain as our inspiration, we design a similar metric that does. In addition to providing guarantees with respect to that metric, formulating our problem as an adaptive submodular maximization enables a computational speedup through a lazy-greedy algorithm~\cite{minoux_lazy,golovin_adaptive_2011} which does not reevaluate every action at each step.


We present three greedy approaches for selecting uncertainty reducing actions. The first is our variant of information gain. This approach is similar to previous works~\cite{cassandra_1996_acting_under_uncertainty, fox_1998_active_localization, bourgault_2002_info_exploration, zheng_2005_active_diagnosis, batting_particle_filter, erickson_2008_blind, hsiao_2008_robust_belief, hebert_next_best_touch_2012}, though we also enforce the assumptions required for submodular maximization. The latter two maximize the expected number of hypotheses they disprove. We show these metrics are adaptive submodular. We apply all methods to selecting touch based sensing actions and present results comparing the accuracy and computation time of each in Section~\ref{sec_experiments}. Finally, we show the applicability of these methods on a real robot.


\section{Related work} \label{sec_related_works}


Hsiao et al.~\cite{hsiao_2008_robust_belief, kaijen_thesis} select a sequence of uncertainty reducing tactile actions through forward search in a POMDP. Possible actions consist of pre-specified world-relative trajectories~\cite{hsiao_2008_robust_belief}, motions based on the current highest probability state. Actions are selected using either information gain or probability of success as a metric~\cite{kaijen_thesis}, with a forward search depth of up to three actions. Aggressive pruning and clustering of observations makes online selection tractable. While Hsiao considers a small, focused set of actions (typically \ttexttilde5) at a greater depth, we consider a broad set of actions (typically \ttexttilde150) at a search depth of one action.

Hebert et al.~\cite{hebert_next_best_touch_2012} independently approached the problem of action selection for touch based localization. They utilize a greedy information gain metric, similar to our own. However, they do not make a connection to submodularity, and provide no theoretical guarantees with their approach. Additionally, they model noise only in $X,Y,Z$, while we use $X,Y,Z,\theta$. Furthermore, by using a particle based representation instead of a histogram (as in~\cite{kaijen_thesis, hebert_next_best_touch_2012}), we can model the underlying belief distribution more efficiently.

Others forgo the ability to plan with the entire belief space altogether, projecting onto a low-dimensional space before generating a plan to the goal. During execution, this plan will likely fail, because the true state was not known. Erez and Smart use local controllers to adjust the trajectory~\cite{erez_smart_localpomdp}. Platt et al. note when the belief space diverges from what the plan expected, and re-plan from the new belief~\cite{platt_2011_hypoth_based_non_gaussian}. They prove their approach will eventually converge to the true hypothesis. While these methods plan significantly faster due to their low-dimensional projection, they pick actions suboptimally. Furthermore, by ignoring part of the belief space, they sacrifice the ability to avoid potential failures. For example, these methods cannot guarantee that a trajectory will not collide and knock over an object, since the planner may ignore the part of the belief space where the object is actually located.

Petrovskaya et al.~\cite{petrov_localization} consider the problem of full 6DOF pose estimation of objects through tactile feedback. Their primary contribution is an algorithm capable of running in the full 6DOF space quickly. In their experiments, action selection was done randomly, as they do not attempt to select optimal actions. To achieve an error of \ttexttilde $5mm$, they needed an average of 29 actions for objects with complicated meshes. While this does show that even random actions achieve localization eventually, we note that our methods take significantly fewer actions.

In the DARPA Autonomous Robotic Manipulation Software (ARM-S) competition, teams were required to localize, grasp, and manipulate various objects within a time limit. Many teams first took uncertainty reducing actions before attempting to accomplish tasks~\cite{arms_video}. Similar strategies were used to enable a robot to prepare a meal with a microwave~\cite{herb_video}, where touch-based actions are used prior to pushing buttons. To accomplish these tasks quickly, some of these works rely on hand-tuned motions and policies, specified for a particular object and environment. While this enables very fast localization with high accuracy, a sequence must be created manually for each task and environment. Furthermore, these sequences aren't entirely adaptive.

Dogar and Srinivasa~\cite{dogar_2010_push_grasp} use the natural interaction of an end effector and an object to handle uncertainty with a push-grasp. By utilizing offline simulation, they reduce the online problem to enclosing the object's uncertainty in a pre-computed capture region. Online, they simply plan a push-grasp which encloses the uncertainty inside the capture region. This work is complimentary to ours - the push-grasp works well on objects which slide easily, while we assume objects do not move. We believe each approach is applicable in different scenarios.

Outside of robotics, many have addressed the problem of query selection for identification. In the \emph{noise-free} setting, a simple adaptive algorithm known as generalized binary search (GBS)~\cite{nowak_gbs} is provably near optimal. Interestingly, this algorithm selects queries identical to greedy information gain if there are only two outcomes~\cite{zheng_2005_active_diagnosis}. The GBS method was extended to multiple outcomes, and shown to be adaptive submodular~\cite{golovin_adaptive_2011}. Our Hypothesis Pruning metric is similar to this formulation, but with different action and observation spaces that enable us to model touch actions naturally.

Recently, there have been guarantees made for the case of \emph{noisy} observations. For binary outcomes and independent, random noise, the GBS was extended to noisy generalized binary search~\cite{nowak_noisy_gbs}. For cases of persistent noise, where performing the same action results in the same noisy outcome, adaptive submodular formulations have been developed based on eliminating noisy versions of each hypothesis~\cite{golovin_bayesian_noisy_obs, bellala_2012_query_selection}. In all of these cases, the message is the same - with the right formulation, greedy selection performs well for uncertainty reduction.


\section{Problem Formulation}
\label{sec_formulation}
We review the basic formulation for adaptive submodular maximization. For a more detailed explanation, see~\cite{golovin_adaptive_2011}.

Let a possible object state be $\realization$, called the \emph{realization}. Let $\randrealization$ be a random variable over all realizations. Thus, the probability of a certain state is given by $p(\realization) = \mathbb{P}\left[\randrealization = \realization \right]$. At each decision step, we select an action $\actionitem$ from $\allactionset$, the set of all available actions, which incurs a cost $c(\actionitem)$. Each action will result in some observation $\observationitem$ from $\allobservationset$, the set of all possible observations. We assume that given a realization $\realization$, the outcome of an action $\actionitem$ is deterministic. Let $\actionset \subseteq \allactionset$ be all the actions selected so far. During execution, we maintain the \emph{partial realization} $\partialgivenactions$, a sequence of observations received indexed by $\actionset$. We call it a partial realization as it encodes how realizations $\realization \in \randrealization$ agree with observations.

For the case of tactile localization, $\realization$ is the object pose. $\allactionset$ corresponds to all end-effector guarded move trajectories~\cite{will_1975_guarded}, which terminate when the hand touches an obstacle. $\allobservationset$ encompasses any possible observation, which is the set of all distances along any trajectory within which the guarded move may terminate. The partial realization $\partialgivenactions$ essentially encodes the ``belief state'' used in POMDPs, which we denote by $p(\realization | \partialgivenactions) = \mathbb{P}\left[\randrealization = \realization | \partialgivenactions \right]$.

Our goal is to find an adaptive policy for selecting actions based on observations so far. Formally, a policy $\policy$ is a mapping from a partial realization $\partialgivenactions$ to an action item $\actionitem$. Let $\actionset(\policy, \realization)$ be the set of actions selected by policy $\policy$ if the true state is $\realization$. We define two cost functions for a policy - the average cost and the worst case cost. These are:
\begin{align*}
  c_{avg} &= \mathbb{E}_\randrealization \left[ c(A(\pi, \randrealization))\right]\\
  c_{wc} &= \max_\realization c(A(\pi, \realization))
\end{align*}

Define some utility function $f : 2^\allactionset \times \allobservationset^\allactionset \rightarrow \mathbb{R}_{\geq 0}$, which depends on actions selected and observations received. We would like to find a policy which that will reach some utility threshold $\utilthresh$ while minimizing one of our cost functions. Formally:
\begin{align*}
  \min \hspace{1mm} &c_{\{avg,wc\}}(A(\pi, \randrealization))\\
  & s.t. f(A(\pi, \realization), \realization) \geq Q, \forall \realization
\end{align*}

This is often referred to as the \emph{Minimum Cost Cover} problem, where we achieve some coverage $Q$ while minimizing the cost to do so. We can consider optimal policies $\optpolicy_{avg}$ and $\optpolicy_{wc}$ for the above, optimized for their respective cost functions. Unfortunately, obtaining even approximate solutions is difficult~\cite{feige_1998, golovin_adaptive_2011}. However, a simple greedy algorithm achieves near-optimal performance if our objective function $f$ satisfies properties of adaptive submodularity and monotonicty. We now briefly review these properties.

\subsection{Submodularity}
\label{sec_submodular}

First, let us consider the case when we do not condition on observations, optimizing an offline plan. We call a function $f$ submodular if whenever $X \subseteq Y \subseteq \allactionset$, $a \in \allactionset \backslash Y$:

\textbf{Submodularity} (diminishing returns):
\begin{align*}
  f(X \cup \{a\}) - f(X) \geq f(Y \cup \{a\}) - f(Y)
\end{align*}

The marginal benefit of adding $a$ to a smaller set $X$ is at least as much as adding it to the superset $Y$. We also require monotonicty, or that adding more elements never hurts:

\textbf{Monotonicity} (more never hurts):
\begin{align*}
f(X \cup \{a\}) - f(X) \geq 0
\end{align*}

The greedy algorithm maximizes $\frac{f(\actionset \cup \{a\}) - f(\actionset)}{c(a)}$, the marginal utility per unit cost. As observations are not incorporated, this corresponds to an offline plan. If submodularity and monotonicty are satisfied, the greedy algorithm will be within a $(1+\ln \max_a f(a))$ factor of $c_{avg}(A(\optpolicy_{avg}, \randrealization))$ for integer valued $f$~\cite{wolsey_1982}.

\subsection{Adaptive Submodularity}
\label{sec_adaptivesubmodular}
Now we consider the case where the policy adapts to new observations~\cite{golovin_adaptive_2011}. In this case, the expected marginal benefit of performing an action is:
\begin{align*}
  \Delta(a | \partialgivenactions) &= \mathbb{E}\left[ f(\actionset \cup \{a\}, \randrealization) -   f(\actionset, \randrealization) \right | \partialgivenactions]
\end{align*}

We call a function $f$ adaptive submodular if whenever $X \subseteq Y \subseteq \allactionset$, $a \in \allactionset \backslash Y$:

\textbf{Adaptive Submodularity}:
\begin{align*}
  \Delta(a | X) \geq \Delta(a | Y)
\end{align*}

That is, the expected benefit of adding $a$ to a smaller set $X$ is at least as much as adding it to the superset $Y$, for any set of observations received from actions $Y \backslash X$. We also require strong adaptive monotonicity, or more items never hurts. For any $a \notin X$, and any possible outcome $o$, this requires:

\textbf{Strong Adaptive Monotonicity}:
\begin{align*}
  \mathbb{E}\left[ f(X, \randrealization) | \partialrealization_X \right] &\leq \mathbb{E}\left[ f(X \cup \{a\}, \randrealization) | \partialrealization_X, \partialrealization_a = o \right]
\end{align*}

In this case, the greedy algorithm maximize $\frac{\Delta(a|\partialrealization_X)}{c(a)}$. This encodes an online policy, since at each $\partialrealization_X$ incorporates the new observations. Surprisingly, we can bound the performance of the same algorithm with respect to both the optimal average case policy $\optpolicy_{avg}$ and optimal worst case policy $\optpolicy_{wc}$. This has been shown to have be within a $(1+\ln(Q))$ factor of $\optpolicy_{avg}$, and a $(1+\ln(\frac{Q}{\min_{\realization}p(\realization)}))$ factor of $\optpolicy_{wc}$ approximation for integer valued $f$, for self-certifying instances (see~\cite{golovin_adaptive_2011} for a more detailed explanation).

\section{Application to Touch Localization}
We would like to appeal to the above algorithms and guarantees for touch localization, while still maintaining generality for different objects and motions. Given an object mesh, we model the random realization $\randrealization$ as a set of sampled particles. We can think of each particle $\realization \in \randrealization$ representing some hypothesis of the true object pose.

Each action $\actionitem \in \allactionset$ corresponds to an end-effector trajectory which stops when the object is touched. The cost $c(\actionitem)$ is the time it would take to run this entire trajectory, plus some fixed time for moving to the start pose. An observation $\observationitem \in \mathbb{R}$ corresponds to the time it takes for the end-effector to make contact with the object. We define $\actionitem_\realization$ as the time during trajectory $\actionitem$ where contact first occurs if the true state were $\realization$. See Figure~\ref{fig:handstops} for an example. If the swept path of $\actionitem$ does not contact object $\realization$, then $\actionitem_\realization = \infty$, which is a valid observation.

With this formulation, we first discuss some assumptions made about interactions with the world. We then present our different utility functions $f$, which capture the idea of reducing the uncertainty in $\randrealization$. In general, our objective will be to achieve a certain amount of uncertainty reduction while minimizing the time to do so.


\begin{figure}[t]
  \includegraphics[trim=\setcoverfigleft px \setcoverfigbottom px \setcoverfigright px \setcoverfigtop px, clip=true, width=\setcoverfigscale]{./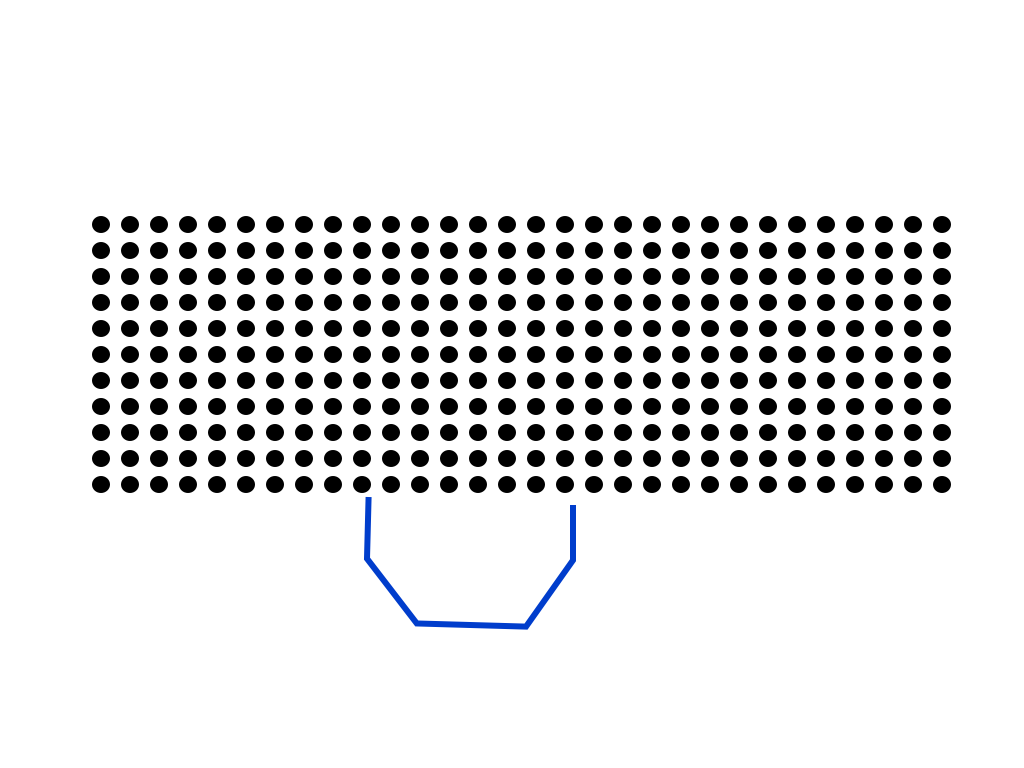} \includegraphics[trim=\setcoverfigleft px \setcoverfigbottom px \setcoverfigright px \setcoverfigtop px, clip=true, width=\setcoverfigscale]{./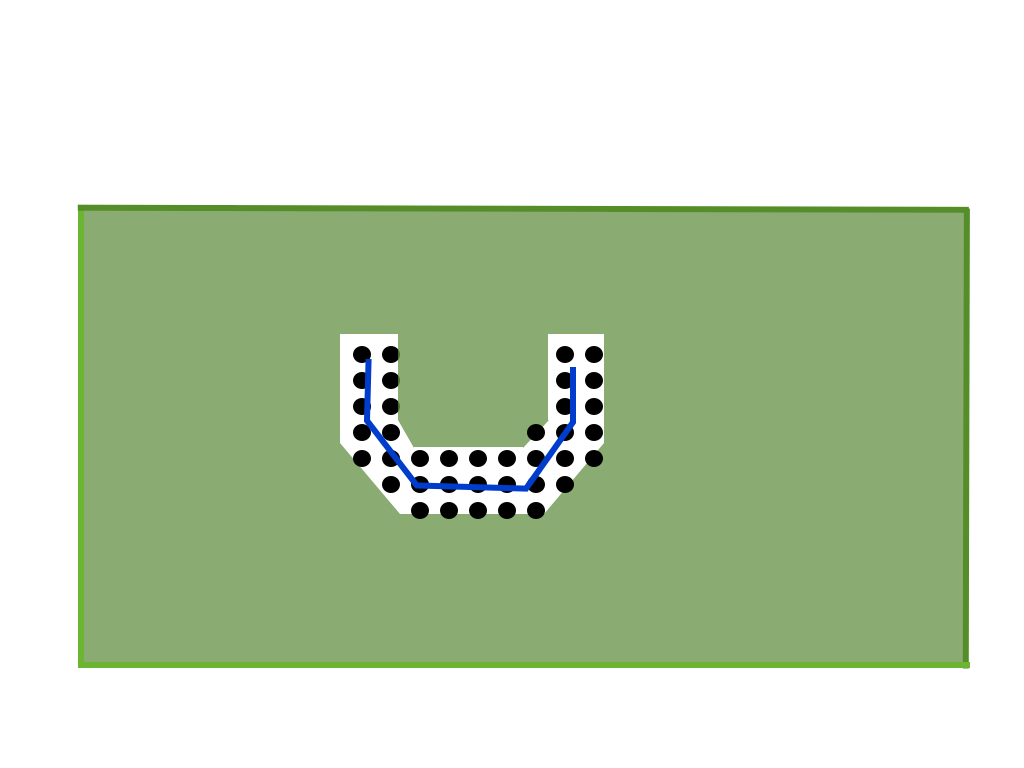} \includegraphics[trim=\setcoverfigleft px \setcoverfigbottom px \setcoverfigright px \setcoverfigtop px, clip=true, width=\setcoverfigscale]{./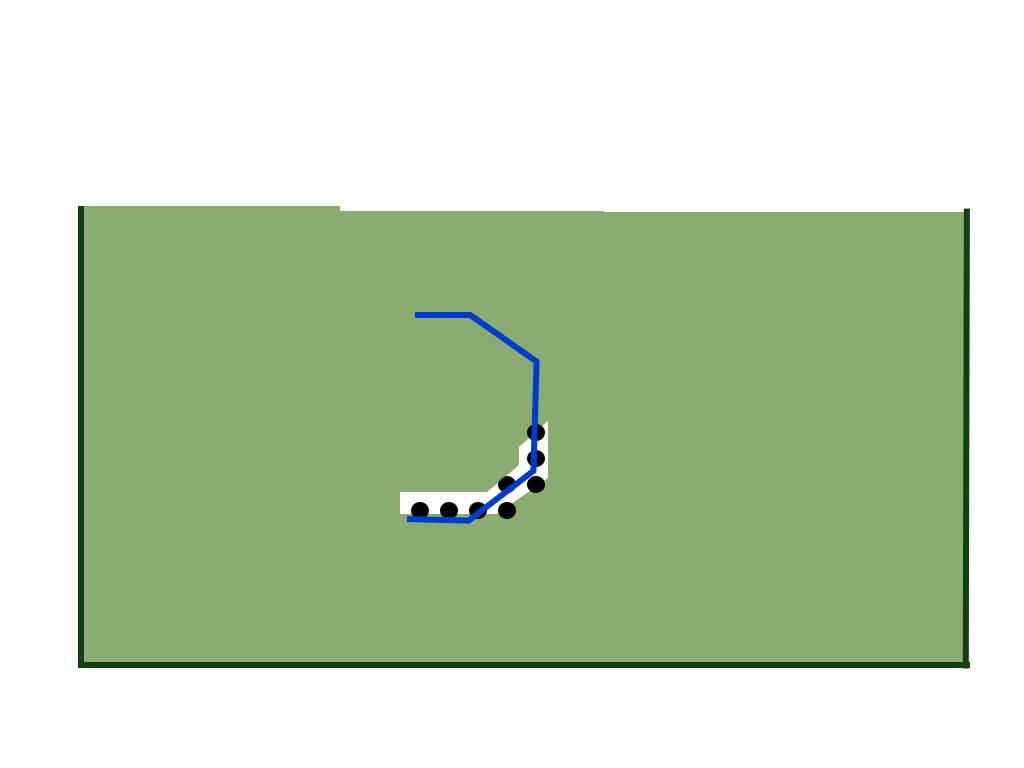}
  \caption{We can think of tactile localization as a problem of set cover, which is adaptive submodular~\cite{golovin_adaptive_2011}. Each observation amounts to covering (green area) the hypotheses (black dots) which do not agree. Our objective is to maximize our coverage, or rule out as many hypotheses as possible.}
  \label{fig:hand_setcover_ims}
\end{figure}

\subsection{Submodularity Assumptions for Touch Localization}
\label{sec_submod_assumptions}
Fitting into the framework of submodular maximization necessitates certain assumptions. First, all actions must be available at every step. Intuitively, this makes sense as a necessity for diminishing returns - if actions are generated at each step, then a new action may simply be better than anything so far. In some sense, non-greedy methods which generate actions based on the current belief state are optimizing both the utility of the current action, and the potential of actions that could be generated in the next step. Instead, we generate a large, fixed set of information gathering trajectories at the start.

Second, we cannot alter the underlying realization $\realization$, so actions are not allowed to change the state of the environment or objects. Therefore, we cannot intentionally reposition objects, or model object movement caused by contact.

When applied to object localization, this frameworks lends itself towards heavy objects that remain stationary when touched. For such problems, we believe having an efficient algorithm with guaranteed near-optimality outweighs these limitations. To alleviate some of these limitations, we hope to explore near-touch sensors in the future~\cite{kaijen_reactive_optical,pretouch_smith}.

\begin{figure}
  \includegraphics[trim=\actionfigleft px \actionfigbottom px \actionfigright px \actionfigtop px, clip=true, scale=\actionfig]{./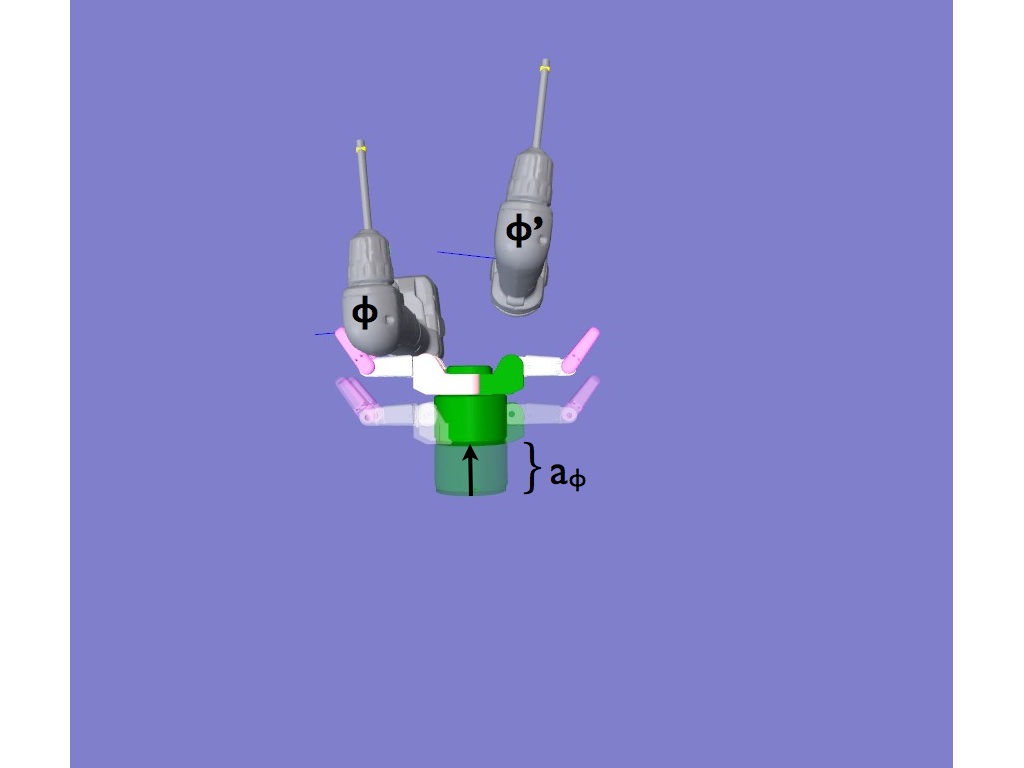} \includegraphics[trim=\actionfigleft px \actionfigbottom px \actionfigright px \actionfigtop px, clip=true, scale=\actionfig]{./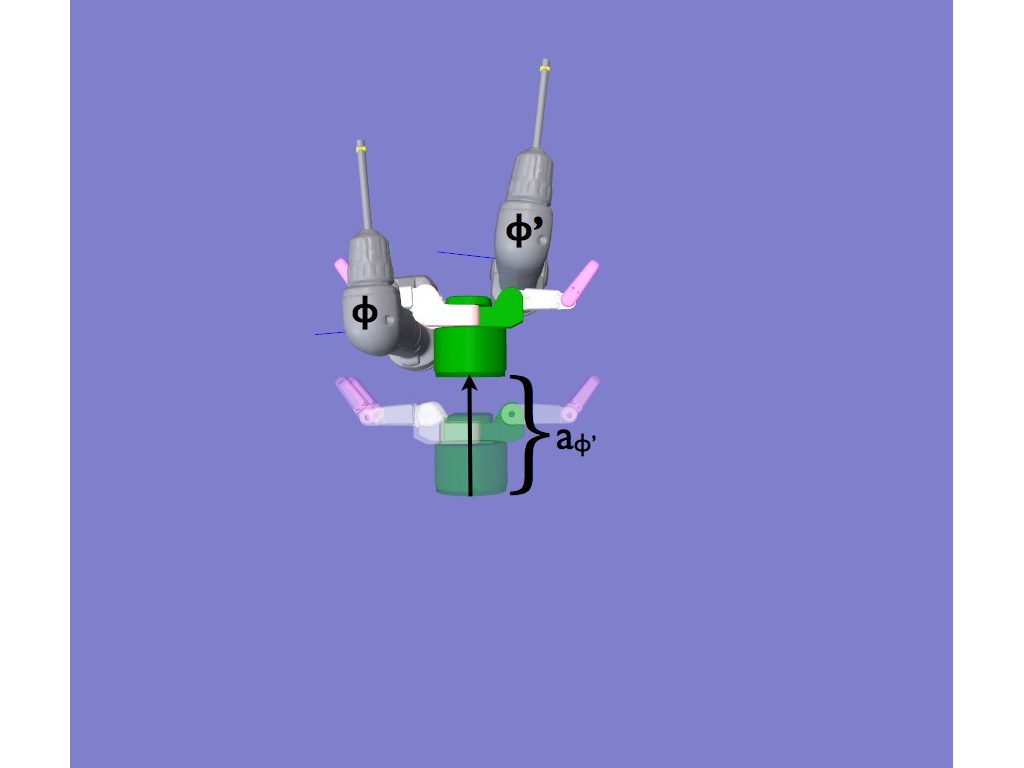} \includegraphics[trim=\actionfigleft px \actionfigbottom px \actionfigright px \actionfigtop px, clip=true, scale=\actionfig]{./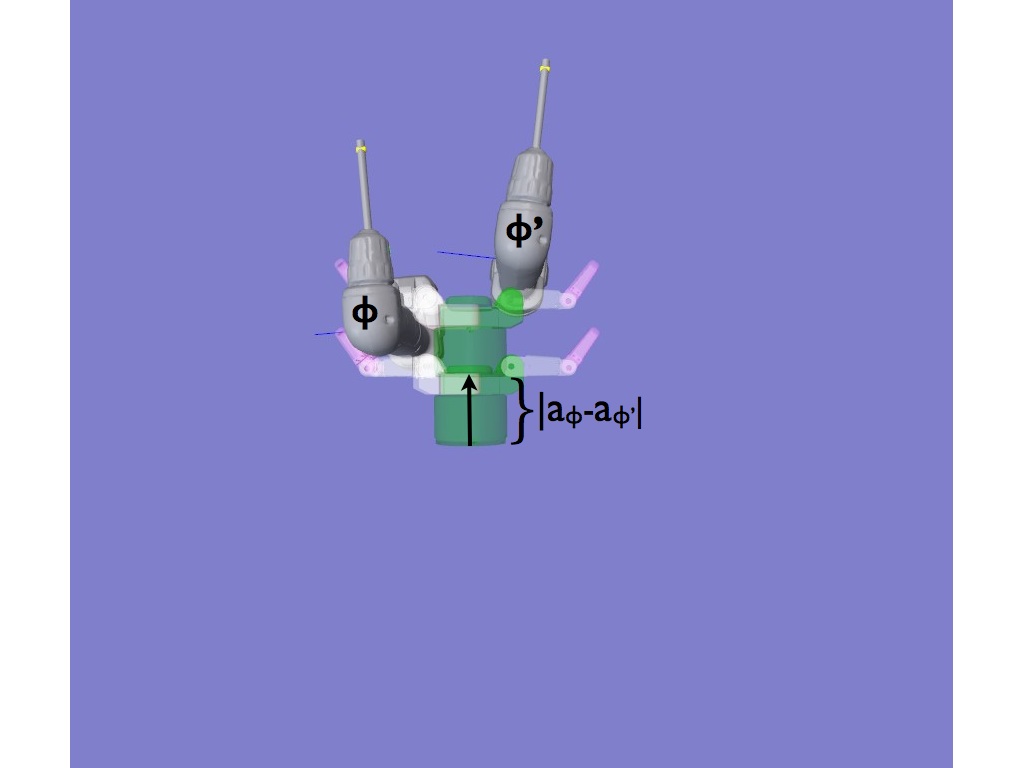}
  \caption{The observations for action $\actionitem$ and realizations $\realization$ and $\realization'$. Each observation $\actionitem_\realization$ and $\actionitem_\realization'$ corresponds to the time along the straight line trajectory when contact first occurs with the object. We use the difference of times $|\actionitem_\realization - \actionitem_{\realization'}|$ when measuring how far apart observations are.}
  \label{fig:handstops}
\end{figure}

\subsection{Information Gain}
\label{sec_infogain}
Information gain has been applied to touch localization before~\cite{hsiao_2008_robust_belief, hebert_next_best_touch_2012}. In contrast to these, we utilize a large fixed set of actions, enforce the assumptions from~\sref{sec_submod_assumptions}, and use a particle-based model (as opposed to a histogram).

Following Krause and Guestrin~\cite{krause_2005_submodular_entropy}, we define the information gain as the reduction in Shannon entropy. Let $\partialgivenactionsrandom$ be the random variable over $\partialgivenactions$. Then we have
\begin{align*}
    \infogain(\randrealization; \partialgivenactionsrandom) = H(\randrealization) - H(\randrealization | \partialgivenactionsrandom)
\end{align*}

As they show, this function is monotone submodular if the observations $\partialrealizationrandom_{\allactionset}$ are conditionally independent given the state $\realization$. Thus, if we are evaluating this offline, we would be near-optimal compared to the optimal offline solution. However, this can actually perform exponentially worse than the online solution~\cite{hollinger_isrr_2011}. Therefore, we greedily select actions based on the marginal utility of a single action:
\begin{align*}
    \Delta_{IG}(a) = H(\randrealization) - \mathbb{E}_o\left[H(\randrealization | o) \right]
\end{align*}

We also need to define the probability of an observation. We consider a ``blurred'' measurement model where the probability of stopping at $o$ conditioned on realization $\realization$ is weighted based on the time difference between $o$ and $a_\realization$ (the time of contact had $\realization$ been the true state), with $\sigma$ modelling the measurement noise:
\begin{align*}
    p(a_{\randrealization} = o | \realization ) &\propto \text{exp}\left({-\frac{|o - a_\realization|}{2 \sigma^2}}\right)
\end{align*}

We could consider evaluating $H(\randrealization)$ with a discrete entropy calculation, where each $\realization \in \randrealization$ is treated as an individual item. However, our particle set $\randrealization$ is modeling an underlying continuous distribution, and we would like to capture that. Thus, we instead fit a Gaussian to the current set $\randrealization$ and evaluate the entropy of that distribution. Let $\Sigma_o$ be the covariance over the weighted set of hypotheses, and $N$ the number of parameters (typically x, y, z, $\theta$). We use the approximated entropy:
\begin{align*}
    H(\randrealization | o) &\approx \frac{1}{2} \ln((2\pi e)^N |\Sigma_o|)
\end{align*}

After performing the selected action, we update the belief by reweighting hypotheses as described above. We repeat the action selection process, setting $\randrealization$ to be the updated distribution, until we reach some desired entropy.

\subsection{Hypothesis Pruning}
\label{sec_hypothprune}
Intuitively, information gain is attempting to reduce uncertainty by removing probability mass. Here, we formulate a method with this underlying idea that also satisfies properties of adaptive submodularity and strong adaptive monotonicity. We refer to this as Hypothesis Pruning, since the idea is to prune away hypotheses that do not agree with observations. Golovin et al. describe the connection between this type of objective and adaptive submodular set cover~\cite{golovin_adaptive_2011}. Our formulation is similar - see \figref{fig:hand_setcover_ims} for a visualization.

We note that adaptive submodular functions~\cite{golovin_adaptive_2011} cannot handle noise - they require any realization $\realization$ be consistent with only one observation per action. However, we would like to model sensor noise. A standard method for alleviating this is to construct a non-noisy problem by generating a realization for every possible noisy observations of every $\realization \in \randrealization$. Let $\noisingfuncactionset(\realization) = \{\noisyrealization_{1}, \noisyrealization_{2},\dots\}$ be the function that generates the new realizations $\noisyrealization_{i}$ for actions $\actionset$. Underlying our formulation, we consider constructing this non-noisy problem. Luckily, we can compute our objective function on the original $\randrealization$, and do not need to explicitly perform this construction. We present this more efficient computation below, and show how to construct the equivalent non-noisy problem in~\sref{sec_appendix}.

As before, we consider a ``blurred'' measurement model through two different observation models. In the first, we define a cutoff threshold $d_T$. If a hypothesis is within the threshold, we keep it entirely. Otherwise, it is removed. We call this metric Hypothesis Pruning (HP). In the second, we downweight hypotheses with a (non-normalized) Gaussian, effectively removing a portion of the hypothesis. We call this metric Weighted Hypothesis Pruning (WHP). The weighting functions are:
\begin{align*}
    \weighting^{HP}_o(\actionitem_\realization) &= 
    \begin{cases}
    1  & \text{if } \  |o-a_\realization| \leq d_T \\
    0       & \text{else} 
    \end{cases} \\
    \weighting^{WHP}_o(\actionitem_\realization) &= \text{exp}\left({-\frac{ |o-a_\realization|^2}{2 \sigma^2}}\right)
\end{align*}

For a partial realization $\partialrealization$, we take the product of weights:
\begin{align*}
    p_\partialrealization (\realization) &= \left(\prod_{\{\actionitem, \observationitem\} \in \partialrealization} \weighting_\observationitem (\actionitem_\realization) \right) p(\realization)
\end{align*}

Note that this can never increase the probability - for any $\partialrealization$, $p_\partialrealization (\realization) \leq p (\realization)$.

Define $\probmassall_{\partialrealization}$ as the non-normalized probability remaining after partial realization $\partialrealization$, and $\probmass_{\partialrealization,\actionitem,\observationitem}$ as the probability remaining after an additional action $\actionitem$ and observation $\observationitem$:
\begin{align*}
    \probmassall_{\partialrealization} &= \sum_{\realization' \in \randrealization} p_{\partialrealization} (\realization')\\
    \probmass_{\partialrealization,\actionitem,\observationitem} &= \sum_{\realization' \in \randrealization} p_{\partialrealization} (\realization') \weighting_{\observationitem} (\actionitem_{\realization'})
\end{align*}

We can now define our objective function for any partial realization $\partialrealization$, corresponding to removing probability mass:
\begin{align*}
    f(\partialrealization) &= 1 - \probmassall_{\partialrealization}
\end{align*}

In practice, we need to discretize the infinite observation set $\allobservationset$. Formally, we require that an equal number of observations per $\realization$ are considered. That is, for any action $\actionitem$ and any realizations $\realization_i, \realization_j$, $|\noisingfunc_\actionitem(\realization_i)| = |\noisingfunc_\actionitem(\realization_j)|$\footnote{Note that we must be consistent between contact and no-contact observations. That is, if we believe action $\actionitem$ will contact $\realization_i$ but not $\realization_j$, it still must be that $|\noisingfunc_\actionitem(\realization_i)| = |\noisingfunc_\actionitem(\realization_j)|$. Thus, we also have multiple noisy no-contact observations. See \sref{sec_appendix} for details.}. In practice, we sample observations uniformly along the trajectory to approximately achieve this effect.

To calculate the expected marginal gain, we also need to define the probability of receiving an observation. We present it here, and show the derivation in \sref{sec_appendix}. Intuitively, this will be proportional to how much probability mass agrees with the observation. Let $\allobservationsetaction$ be the set of all possible observations for action $\actionitem$:
\begin{align*}
    p(a_{\randrealization} = o | \partialrealization) = \frac{\probmass_{\partialrealization, \actionitem, \observationitem}}{ \sum_{\observationitem' \in \allobservationsetaction} \probmass_{\partialrealization,\actionitem, \observationitem'}}
\end{align*}

We note the marginal utility is the additional probability mass removed. For an action $\actionitem$ and observation $\observationitem$ this is $f_{\partialrealization,\actionitem,\observationitem} = \probmassall_{\partialrealization} - \probmass_{\partialrealization,\actionitem,\observationitem}$. The expected marginal gain is:
\begin{align*}
    \Delta(a|\partialrealization) &=  \mathbb{E}_\observationitem \left[f_{\partialrealization,\actionitem,\observationitem}\right]\\
    &= \sum_{\observationitem \in \allobservationsetaction} \frac{\probmasspartialaction}{ \sum_{\observationitem' \in \allobservationsetaction} \probmasspartialactionprime} \left[ \probmasspartial - \probmasspartialaction \right]
\end{align*}

The greedy policy $\greedypolicy$ maximizes the expected probability mass removed per unit cost, or $\frac{\Delta(a|\partialrealization)}{c(a)}$. After selecting an action and receiving an observation, hypotheses are removed or downweighted as described above, and action selection is iterated. We now present our main guarantee:

\begin{theorem} \label{theorem_hp_as}
  Let our utility function be $f$ as defined above, utilizing either weighting function $\weighting^{HP}$ or $\weighting^{WHP}$. Define a quality threshold $Q$ such that $\min_{\realization, \noisingfuncallaction(\realization)} f(\{\allactionset, \allactionset_{\noisingfuncallaction(\realization)} \}) = Q$.\footnote{If we have a target uncertainty $Q'$, we can define a truncated function $g(\partialrealization) = \min(f(\partialrealization), Q')$ to decrease our quality threshold. Truncation preserves adaptive submodularity~\cite{golovin_adaptive_2011}, so $g$ is adaptive submodular if $f$ is.} Let $\eta$ be any value such that $f(\partialrealization) > Q - \eta$ implies $f(\partialrealization) \geq Q$ for all $\partialrealization$. Let $\optpolicy_{avg}$ and $\optpolicy_{wc}$ be the optimal policies minimizing the expected and worst-case number of items selected, respectively. The greedy policy $\greedypolicy$ satisfies:
\begin{align*}
  c_{avg}(\greedypolicy) &\leq c_{avg}(\optpolicy_{avg}) \left( \ln \left( \frac{Q}{\eta} \right) + 1 \right)\\
  c_{wc}(\greedypolicy) &\leq c_{wc}(\optpolicy_{wc}) \left( \ln \left( \frac{Q}{\delta \eta} \right) + 1 \right)
\end{align*}
With $\delta$ a constant based on the underlying non-noisy problem (see \sref{sec_appendix}).
\end{theorem}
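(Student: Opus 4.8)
The plan is to reduce the claim to the adaptive submodular minimum-cost-cover guarantee of Golovin et al.~\cite{golovin_adaptive_2011} reviewed in \sref{sec_adaptivesubmodular}: once an objective is shown to be adaptive submodular and strongly adaptive monotone, the greedy policy $\greedypolicy$ attains exactly the average- and worst-case factors in the statement, with the generic integer ``unit gap'' replaced by $\eta$ for real-valued objectives and a $\min_\realization p(\realization)$ penalty appearing in the worst-case term. Thus essentially all of the work is in (i) verifying the two structural properties for the mass-removal objective $f(\partialrealization) = 1 - \probmassall_{\partialrealization}$ under both $\weighting^{HP}$ and $\weighting^{WHP}$, and (ii) matching the constants $Q$, $\eta$, and $\delta$ to the quantities those theorems produce.

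First I would pass to the equivalent non-noisy instance constructed in \sref{sec_appendix}: replace each particle $\realization$ by the collection $\noisingfuncactionset(\realization)$ of its noisy copies, so that every realization is consistent with exactly one observation per action, which is precisely the regime in which~\cite{golovin_adaptive_2011} applies. On this expanded instance the weighting $\weighting^{HP}$ (and, after the standard fractional/set-cover encoding, $\weighting^{WHP}$) turns hypothesis pruning into adaptive submodular set cover in the sense of~\cite{golovin_adaptive_2011}: an observation ``covers'' every copy whose predicted contact time disagrees, and $f$ measures covered mass, so adaptive submodularity and strong adaptive monotonicity are inherited directly from the set-cover instance there. The remaining ingredient is an equivalence lemma: I would show that the efficient quantities $\probmassall_{\partialrealization}$, $\probmasspartialaction$, and the observation likelihood $p(\actionitem_\randrealization = \observationitem \mid \partialrealization) = \probmasspartialaction / \sum_{\observationitem' \in \allobservationsetaction} \probmasspartialactionprime$, all computed directly on the original particle set $\randrealization$, agree with the objective and posterior on the expanded non-noisy instance. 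This is where the requirement $|\noisingfunc_\actionitem(\realization_i)| = |\noisingfunc_\actionitem(\realization_j)|$ (including matched no-contact copies, per the footnote) is used: it makes the per-particle normalization uniform so that the product-of-weights marginalizes correctly.

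With both properties established, I would invoke the minimum-cost-cover theorem. The threshold $Q = \min_{\realization, \noisingfuncallaction(\realization)} f(\{\allactionset, \allactionset_{\noisingfuncallaction(\realization)}\})$ is exactly the worst-case attainable objective the theorem demands, and $\eta$ is the gap supplied by the hypothesis that $f(\partialrealization) > Q - \eta$ implies $f(\partialrealization) \geq Q$, which replaces the integer unit gap in $\ln(Q/\eta)+1$; the truncation footnote justifies assuming a common target is reachable for every realization. For the worst-case bound the theorem contributes the factor $\min_\realization p'(\realization)$ over the \emph{expanded} distribution, and I would set $\delta$ to this minimum noisy-copy probability, yielding $\ln(Q/(\delta\eta))+1$.

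The main obstacle I expect is the equivalence lemma of the second paragraph rather than the structural properties, which are largely inherited from the set-cover reduction. Showing that the collapsed computation on $\randrealization$ reproduces the expanded non-noisy objective, and in particular that $p(\actionitem_\randrealization = \observationitem \mid \partialrealization)$ is the correct marginal on the expanded instance, requires careful bookkeeping of matched noisy copies and no-contact observations, and it is the step where the fractional WHP case is most delicate. Once that equivalence is in place, the two guarantees follow as a direct application of~\cite{golovin_adaptive_2011}.
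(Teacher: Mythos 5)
Your overall architecture matches the paper's: pass to the expanded non-noisy instance via $\noisingfuncactionset$, show the collapsed computation on $\randrealization$ reproduces the expanded objective and observation posterior, establish the structural properties, and then invoke the minimum-cost-cover guarantees of Golovin and Krause with $Q$, $\eta$, and $\delta = \min_{\noisyrealization \in \noisyrandrealization} p(\noisyrealization)$ identified exactly as you describe. However, there is a genuine gap at the central step: you assert that adaptive submodularity and strong adaptive monotonicity are ``inherited directly from the set-cover instance'' of~\cite{golovin_adaptive_2011}. They are not. The expanded-instance objective here is not the standard version-mass-reduction objective (the one proven adaptive submodular for generalized binary search), because each retained noisy copy is rescaled by the factor $\prod_{\actionitem \in \actionset} p(\realization)/\max p(\noisingfuncaction(\realization))$ so that keeping the best-weighted copy of $\realization$ counts as keeping all of $\realization$'s mass. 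With this reweighting, the diminishing-returns inequality $\Delta(a \mid \partialrealization_Y) \leq \Delta(a \mid \partialrealization_X)$ --- where the expectation uses $p(o \mid \partialrealization) \propto \probmasspartialaction$ --- must be proven from scratch. The paper does exactly this, and it is the technical heart of the appendix: after clearing denominators one must verify, for every pair of observations $o, o'$, an inequality in the quantities $\proofprobmass_o$ and $k_o = \probmasspartialargaction{X} - \probmasspartialargaction{Y}$, handled by a three-way case analysis and an induction on $\proofprobmass_o$. Your proposal contains no substitute for this argument, and you explicitly locate the main difficulty elsewhere (in the equivalence lemma), so as written the proof would stall at the submodularity claim.

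Two smaller points. Strong adaptive monotonicity, by contrast, \emph{is} essentially immediate (the weights never exceed one, so mass only decreases), so lumping it together with adaptive submodularity as ``inherited'' understates the asymmetry between the two properties. And the worst-case bound additionally requires the instance to be self-certifying; this is easy (the objective depends only on the outcomes of selected actions, so Proposition 5.6 of~\cite{golovin_adaptive_2011} applies), but it is a hypothesis of the theorem you invoke and should be stated.
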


\begin{proof}
  In order to prove Theorem~\ref{theorem_hp_as}, we show that our objective $f$ is adaptive submodular, strongly adaptive monotone, and self-certifying in an equivalent non-noisy problem. We show this in our \sref{sec_appendix}.
\end{proof}

If we utilize $\weighting^{HP}$ as our weighting function, we can use $\eta = \min_{\realization} p(\realization)$. If we utilize $\weighting^{WHP}$, $\eta$ is related to how we discretize observations.

In addition to being within a logarithmic factor of optimal, we utilize a lazy-greedy algorithm which does not reevaluate all actions at every step, speeding up computation~\cite{minoux_lazy, golovin_adaptive_2011}.

\section{Experiments}
\label{sec_experiments}
We implement a greedy action selection scheme with each of the metrics described above (IG, HP, WHP). In addition, we compare against two other schemes - random action selection, and a simple human-designed scheme which approaches the object orthogonally along the X, Y and Z axes. Each object pose $\realization$ consist of a 4-tuple $(x, y, z, \theta) \in \mathbb{R}^4$, where $(x, y, z)$ are the coordinates of the object's center, and $\theta$ is the rotation about the $z$ axis.  

We implement our algorithms using a 7-dof Barret arm with an attached 4-dof Barret hand. We localize two objects: a drill upright on a table, and a door. We define an initial sensed location $X_{s} \in \mathbb{R}^4$. To generate the initial $\randrealization$, we sample a Gaussian distribution $N(\mu, \Sigma)$, where $\mu = X_{s}$, and $\Sigma$ is the prior covariance of the sensor's noise. For simulation experiments, we also define the ground truth pose $X_{t} \in \mathbb{R}^4$.

For efficiency purposes, we also use a fixed number of particles $| \randrealization |$ at all steps, and resample after each selection, adding small noise to the resampled set of particles.

\subsection{Action Generation}
We generate linear motions of the end effector, consisting of a starting pose and a movement vector. Each action starts outside of all hypotheses, and moves as far as necessary to contact every hypotheses along the path. Note that using straight-line trajectories is not a requirement for our algorithm. We generate actions via three main techniques.

\subsubsection{Sphere Sampling} \label{sphere_sample_section}
Starting positions are generated by sampling a sphere around the sensed position $X_{s}$. For each starting position, the end-effector is oriented to face the object, and the movement direction set to $X_{s}$. A random rotation is applied about the movement direction, and a random translation along the plane orthogonal to the movement.

\subsubsection{Normal Sampling}\label{normal_sample_section}
These actions are intended to have the hand's fingers contact the object orthogonally. First, we uniformly sample random contacts from the surface of the object. Then, for each fingertip, we align its pre-defined contact point and normal with our random sample, and randomly rotate the hand about the contact normal. We then set the movement direction as the contact normal.

\subsubsection{Table Contacting}\label{plane_sample_section}
We generate random start points around the sensed position  $X_{s}$, and orient the end effector in the $-z$ direction. These are intended to contact the table and reduce uncertainty in $z$.

\subsection{Simulation Experiments Setup} \label{sec:random_seed_experiments}
We simulate an initial sensor error as $X_{t} - X_{s} = (0.015, -0.015, -0.01, 0.05)$ (in meters and radians). Our initial random realization $\randrealization$ is sampled from $N(\mu, \Sigma)$ with $\mu = X_{s}$, and $\Sigma$ a diagonal matrix with $\Sigma_{xx} = 0.03$, $\Sigma_{yy} = 0.03$, $\Sigma_{zz} = 0.03$, $\Sigma_{\theta\theta} = 0.1$. We fix $| \randrealization | = 1500$ hypotheses.

We then generate an identical action set $\allactionset$ for each metric. The set consists of the 3 human designed trajectories, 30 sphere sampled trajectories (\sref{sphere_sample_section}), 160 normal trajectories (\sref{normal_sample_section}), and 10 table contact trajectories (\sref{plane_sample_section}), giving $| \allactionset | =  203$.

We run 10 experiments using a different random seed for each, generating a different set $\allactionset$ and $\randrealization$, but ensuring each metric has the same $\allactionset$ and initial $\randrealization$ for a random seed. Each method chooses a sequence of five actions, except the human designed sequence which consists of only three actions.

\subsection{Simulation Experiments Results}

\begin{figure} 
  \begin{subfigure}[t]{0.49\columnwidth}
    \centering
    \includegraphics[trim=0 0 8 0, clip=true, width=1.0\columnwidth]{./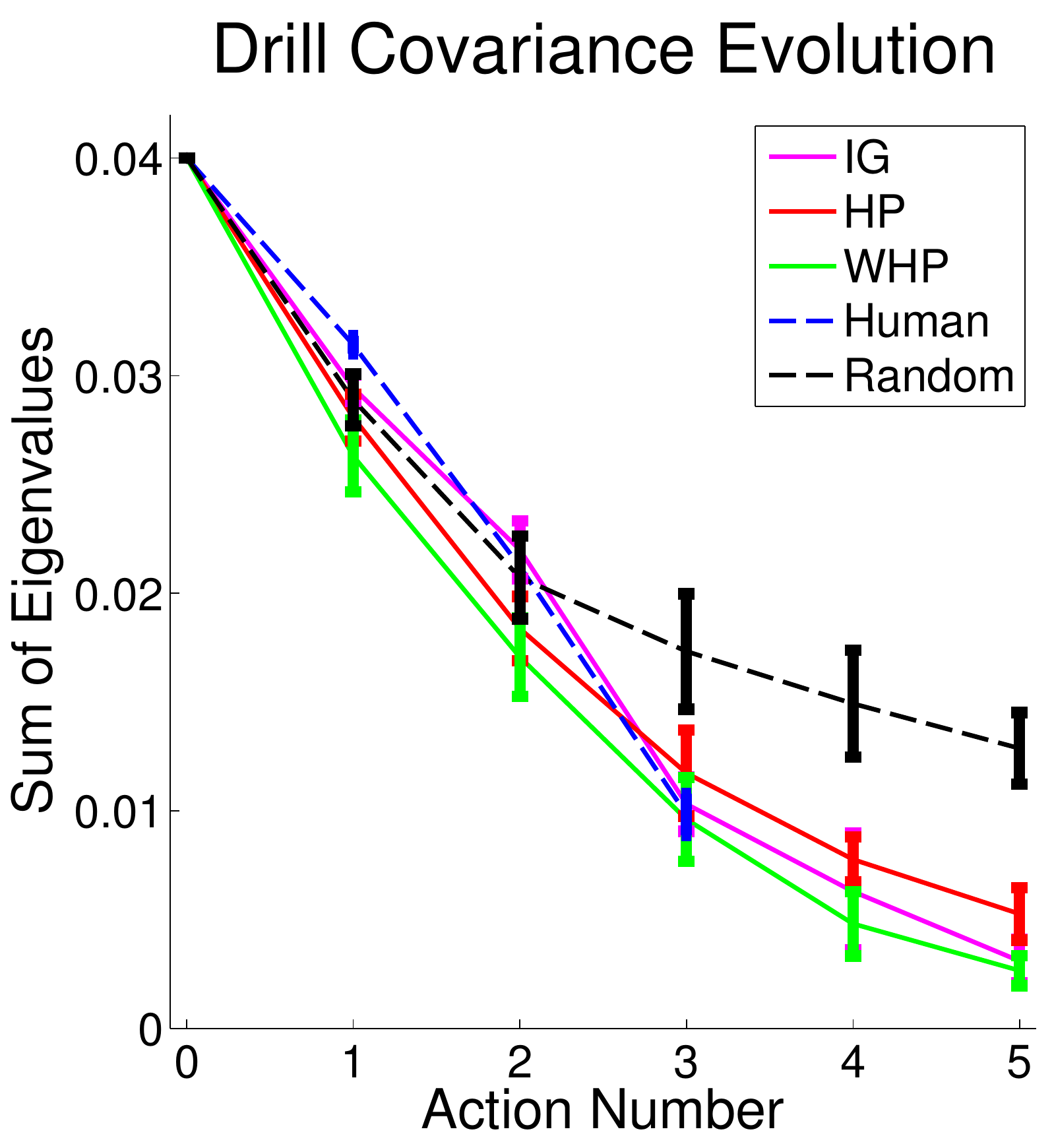}
  \end{subfigure}
  \begin{subfigure}[t]{0.49\columnwidth}
    \centering
    \includegraphics[trim=0 0 8 0, clip=true, width=1.0\columnwidth]{./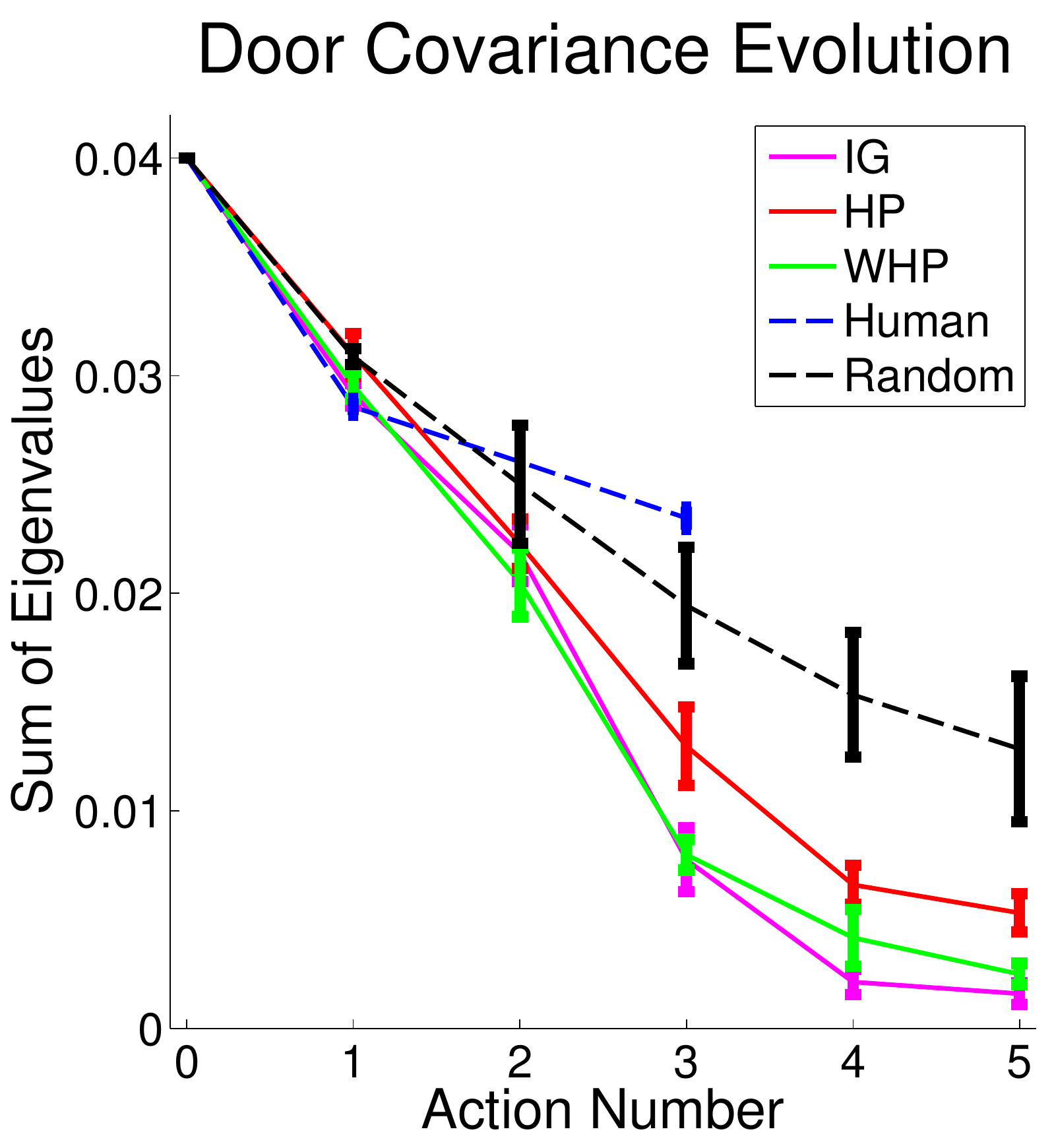}
  \end{subfigure}
  \caption{Uncertainty after each action for drill and door experiments. The bars show the mean and $95\%$ CI of the sum of eigenvalues of the covariance matrix over experiments described in in \sref{sec:random_seed_experiments}.} 
  \label{fig:covariance}
\end{figure}

\begin{table}
  \centering
  \begin{tabular}{rccc}
\toprule
& IG & HP & WHP \\
\midrule
    \textbf{Time (s)} & $47.171 \pm 0.25$ & $\bf{8.41 \pm 0.58}$ & $25.70 \pm 0.29$ \\
    \bottomrule
  \end{tabular} 
    \caption{Time to select one action for each metric, average and $95\%$ CI over drill experiments described in \sref{sec:random_seed_experiments}} 
    \label{tab:times}
\end{table}

We analyze the uncertainty reduction of each metric as the sum of eigenvalues of the covariance matrix, as in \figref{fig:covariance}. All metrics were able to reduce the uncertainty significantly -- confirming our speculation in \sref{sec_related_works} that even random actions reduce uncertainty. However, as the uncertainty is reduced, the importance of action selection increases, as evidenced by the relatively poor performance of random selection for the later actions. Additionally, we see the human designed trajectories are effective for the drill, but perform poorly on the door. Unlike the drill, the door is not radially symmetric, and its flat surface and protruding handle offer geometric landmarks that our action selection metrics can exploit, making action selection more useful.

For one drill experiment, we also display the hypothesis set after each action in \figref{fig:drill_quivers}, and the first 3 actions selected in \tabref{table:eachaction_yellow}. Note that the actions selected are very different, while performance appears similar.

\noindent \textbf{Observation 1:} Information Gain (IG), Hypothesis Pruning (HP), and Weighted Hypothesis Pruning (WHP) all perform similarly well. On the one hand, you might expect IG to perform poorly with adaptive greedy selection, as we don't have any guarantees. On the other, Shannon entropy has many properties that make it a good measure of uncertainty. \figref{fig:covariance} displays the covariance of all particles, which is the criterion IG directly optimizes. Note that, surprisingly, HP and WHP perform comparably despite not directly optimizing this measure.

\noindent \textbf{Observation 2:} The HP and WHP perform faster than IG. This is due to their inherent simplicity and the more efficient lazy-greedy algorithm~\cite{minoux_lazy,golovin_adaptive_2011}. See \tabref{tab:times} for times. Additionally, we lose little performance with large computational gains with the non-weighted observation model of HP.

\begin{figure}[t]
  \centering
  \includegraphics[trim=0 0 0 0, clip=true, width=0.49\columnwidth]{./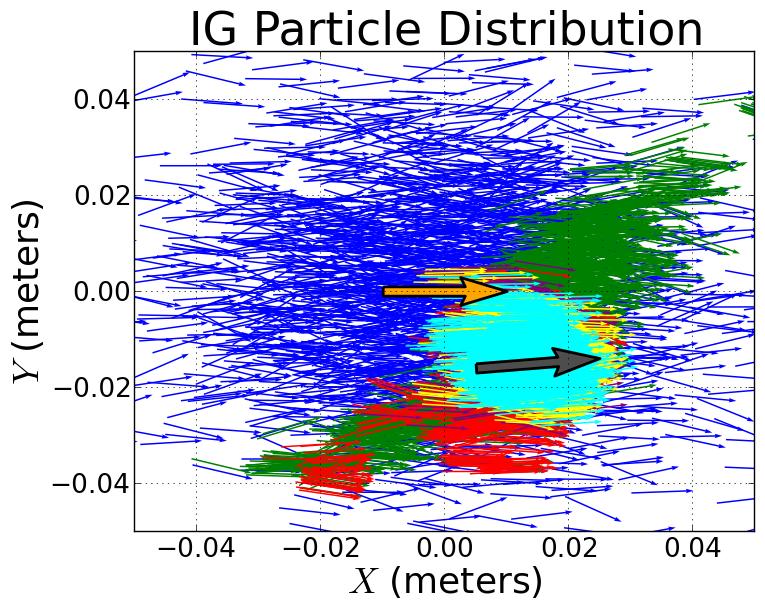} \hspace{-1mm} 
  \includegraphics[trim=0 0 0 0, clip=true, width=0.49\columnwidth]{./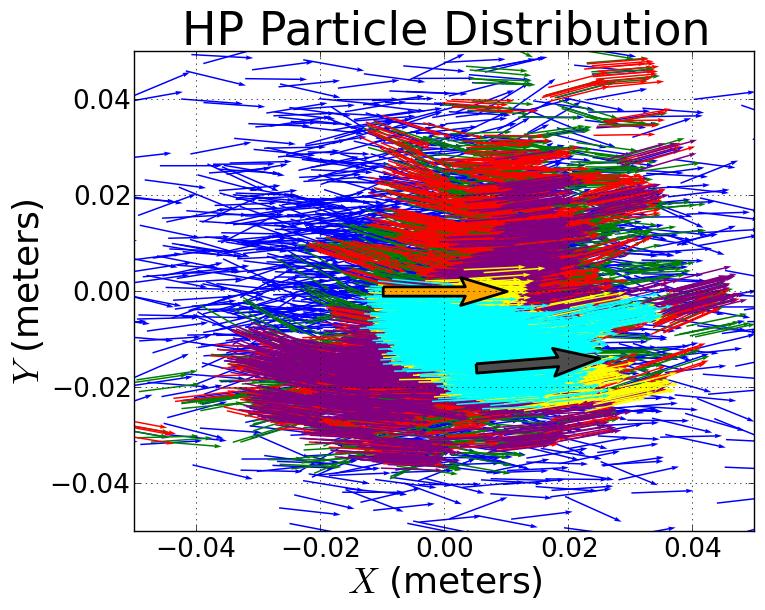}  \hspace{-1mm}
  \includegraphics[trim=0 0 0 0, clip=true, width=0.49\columnwidth]{./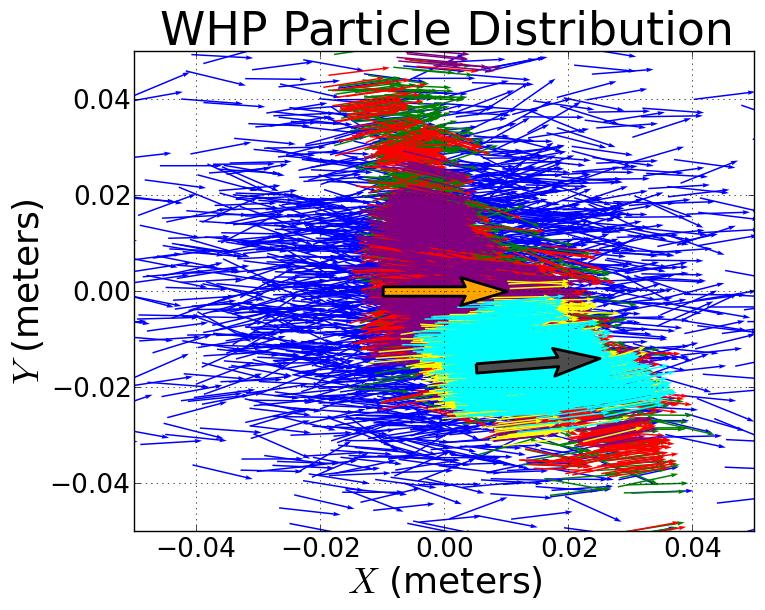} \hspace{-1mm}
  \includegraphics[trim=0 0 0 0, clip=true, width=0.49\columnwidth]{./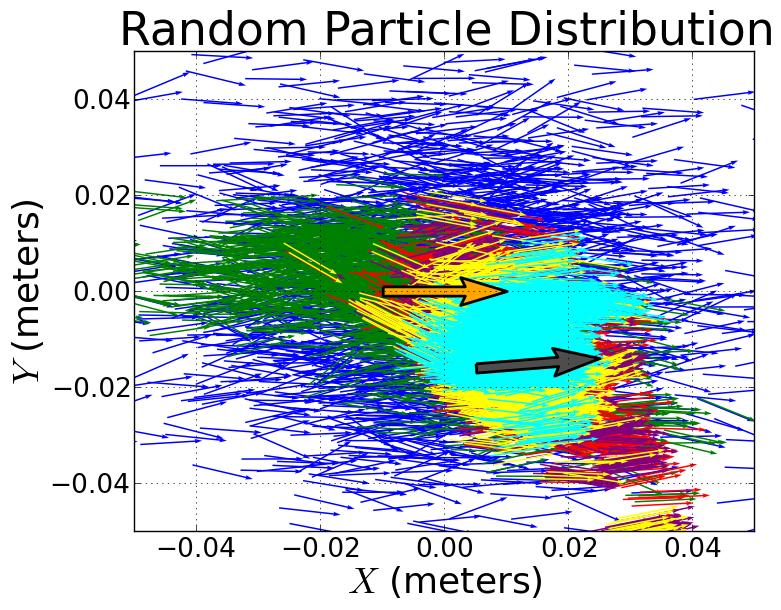}  \hspace{-1mm}
  \includegraphics[trim=0 0 0 0, clip=true, width=0.49\columnwidth]{./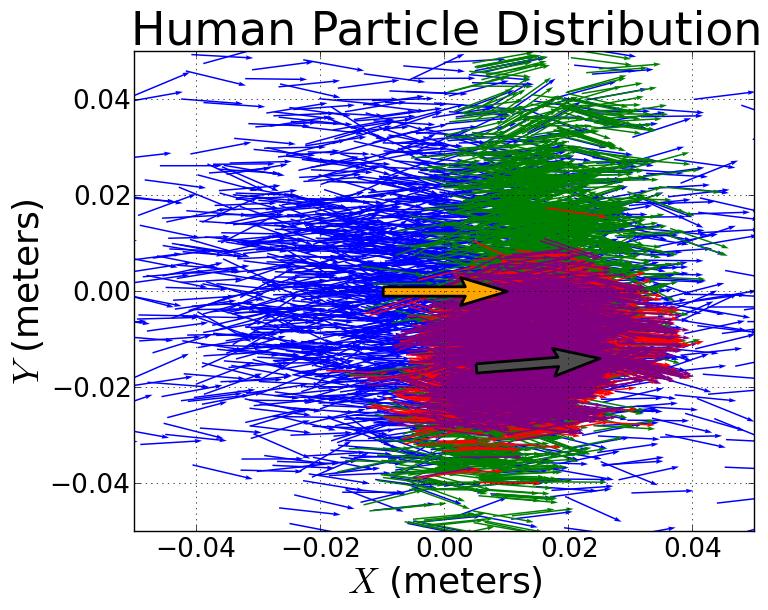}
	\includegraphics[trim=10 30 50 20, clip=true, width=0.49\columnwidth]{./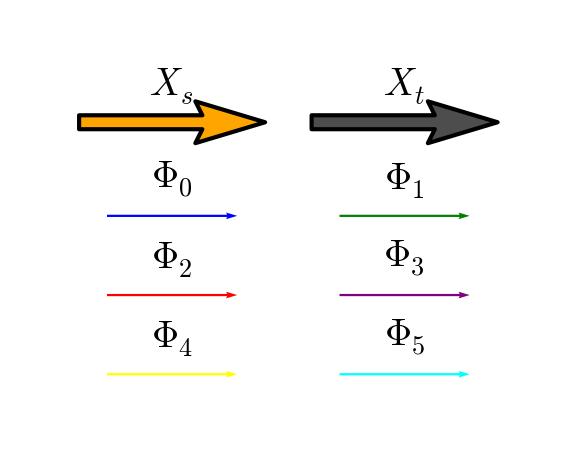}
  \caption{The particle sets $\randrealization$ from a single drill experiment after each update. Plotted positions corresponds to the $x,y$ parameter of each $\realization \in \randrealization$, rotated  by $\theta$. $X_s$ is the sensed position, $X_t$ the true position, and $\randrealization_i$ the particles after update $i$. Arrow lengths are approximately the length of the drill base. The initial $\randrealization$ was sampled from a normal distribution with $\sigma_{x}=0.02,$ $\sigma_{y}=0.02$, $\sigma_{z}=0.02$, $\sigma_\theta=0.2$.}
  \label{fig:drill_quivers}
\end{figure}

\begin{table}[t]\setlength{\tabcolsep}{1pt}
\centering
\begin{tabular}{rccc}
    & \large Action 1 & \large Action 2 & \large Action 3  \\
  \begin{sideways} \hspace{25pt} \Large IG \end{sideways} & \includegraphics[trim=\scalestopsyellowcolleft px \scalestopsyellowcolbottom px \scalestopsyellowcolright px \scalestopsyellowcoltop px, clip=true, scale=\scalestopsyellowcol]{./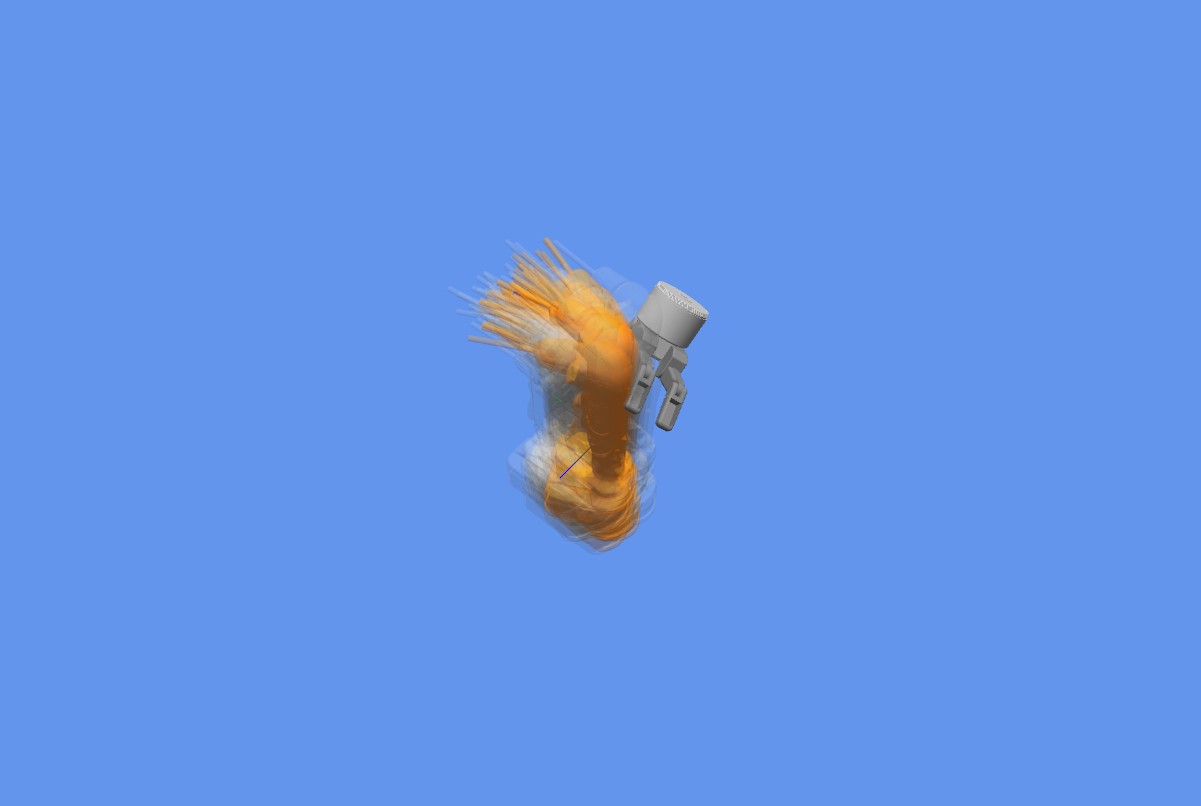} &  \includegraphics[trim=\scalestopsyellowcolleft px \scalestopsyellowcolbottom px \scalestopsyellowcolright px \scalestopsyellowcoltop px, clip=true, scale=\scalestopsyellowcol]{./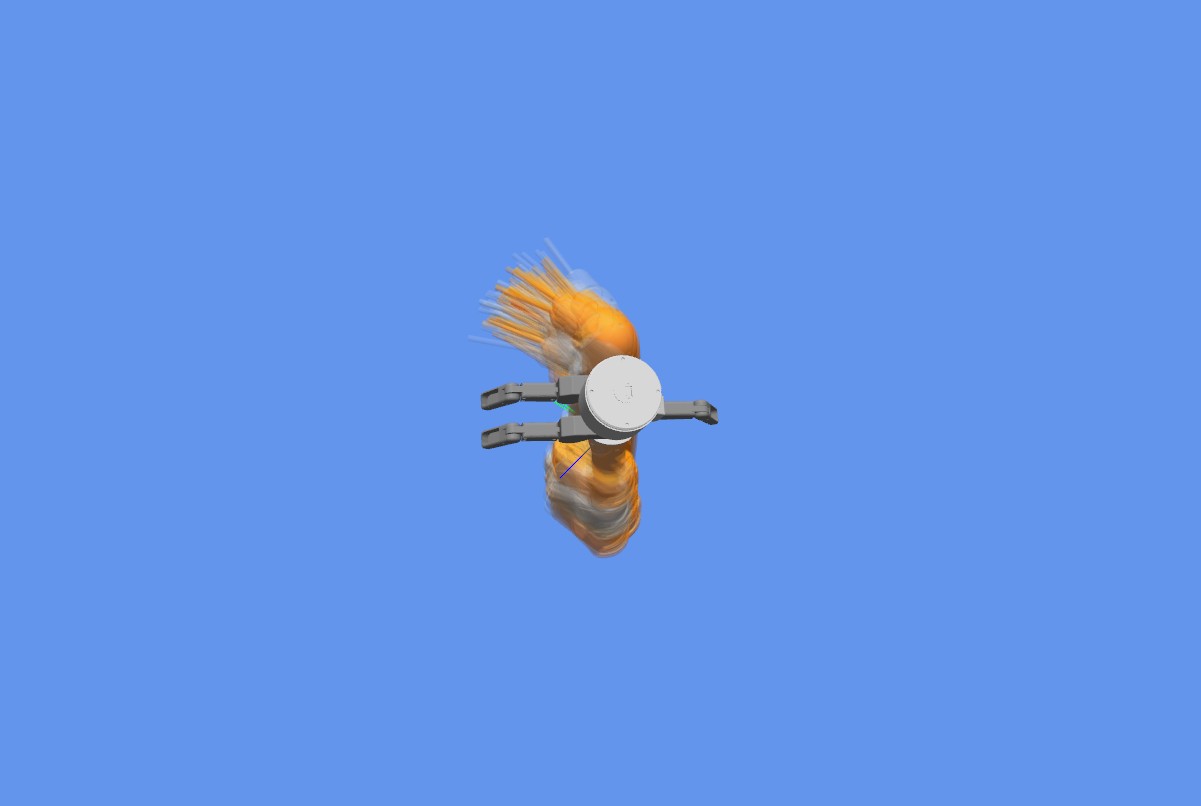} & \includegraphics[trim=\scalestopsyellowcolleft px \scalestopsyellowcolbottom px \scalestopsyellowcolright px \scalestopsyellowcoltop px, clip=true, scale=\scalestopsyellowcol]{./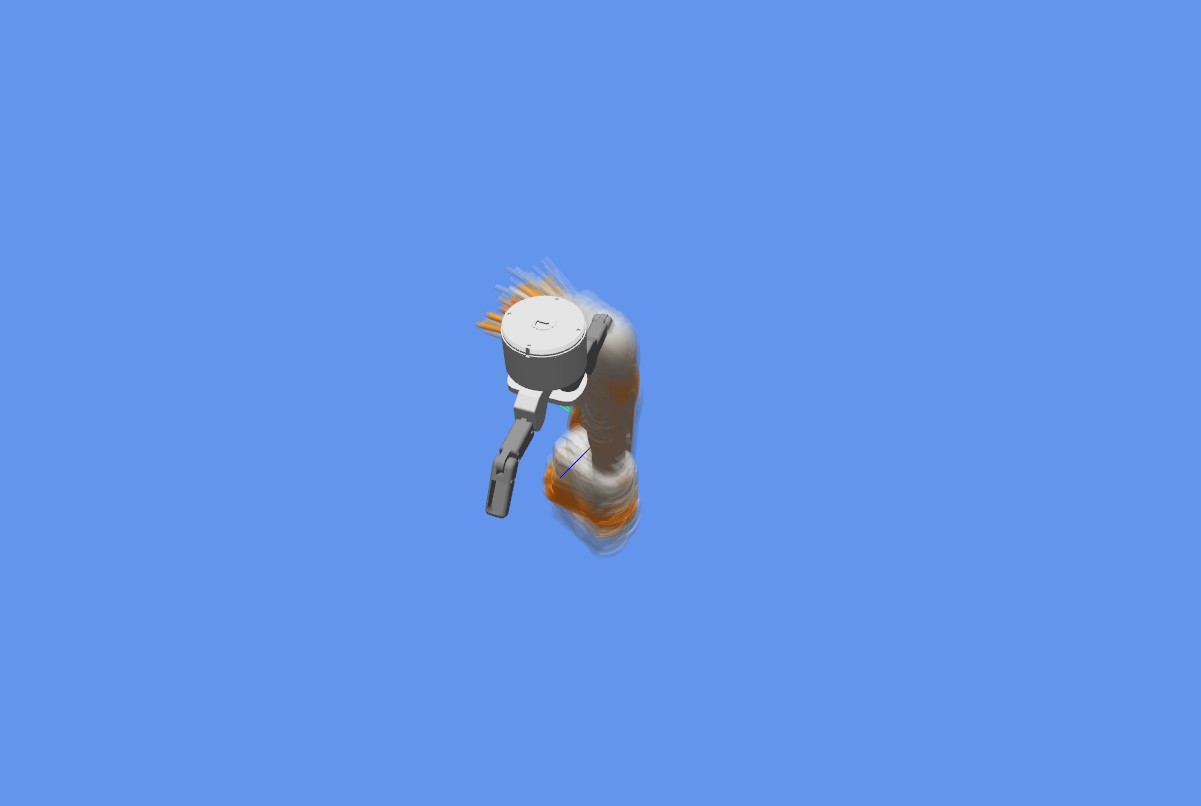}  \vspace{-2pt} \\
 \noalign{\hrule height 2pt} \vspace{-8pt} \\ 
  \begin{sideways} \hspace{25pt} \Large HP \end{sideways} & \includegraphics[trim=\scalestopsyellowcolleft px \scalestopsyellowcolbottom px \scalestopsyellowcolright px \scalestopsyellowcoltop px, clip=true, scale=\scalestopsyellowcol]{./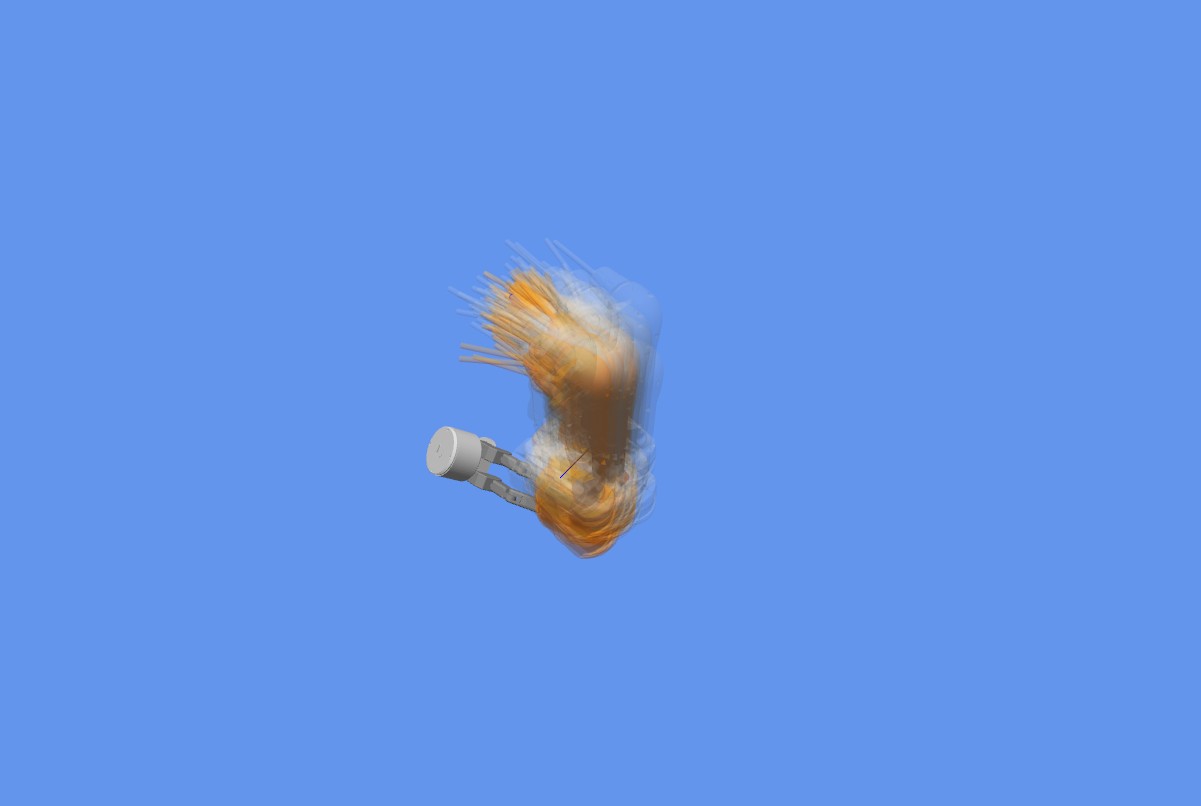} &  \includegraphics[trim=\scalestopsyellowcolleft px \scalestopsyellowcolbottom px \scalestopsyellowcolright px \scalestopsyellowcoltop px, clip=true, scale=\scalestopsyellowcol]{./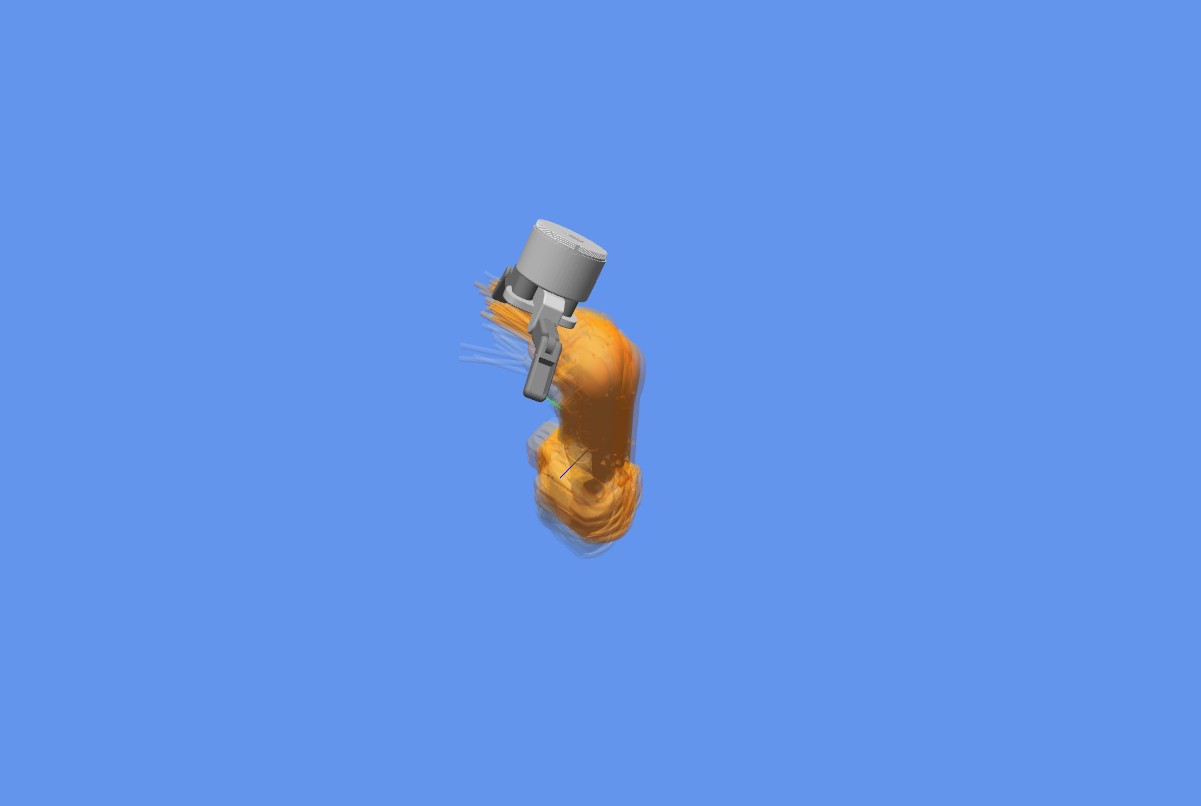} & \includegraphics[trim=\scalestopsyellowcolleft px \scalestopsyellowcolbottom px \scalestopsyellowcolright px \scalestopsyellowcoltop px, clip=true, scale=\scalestopsyellowcol]{./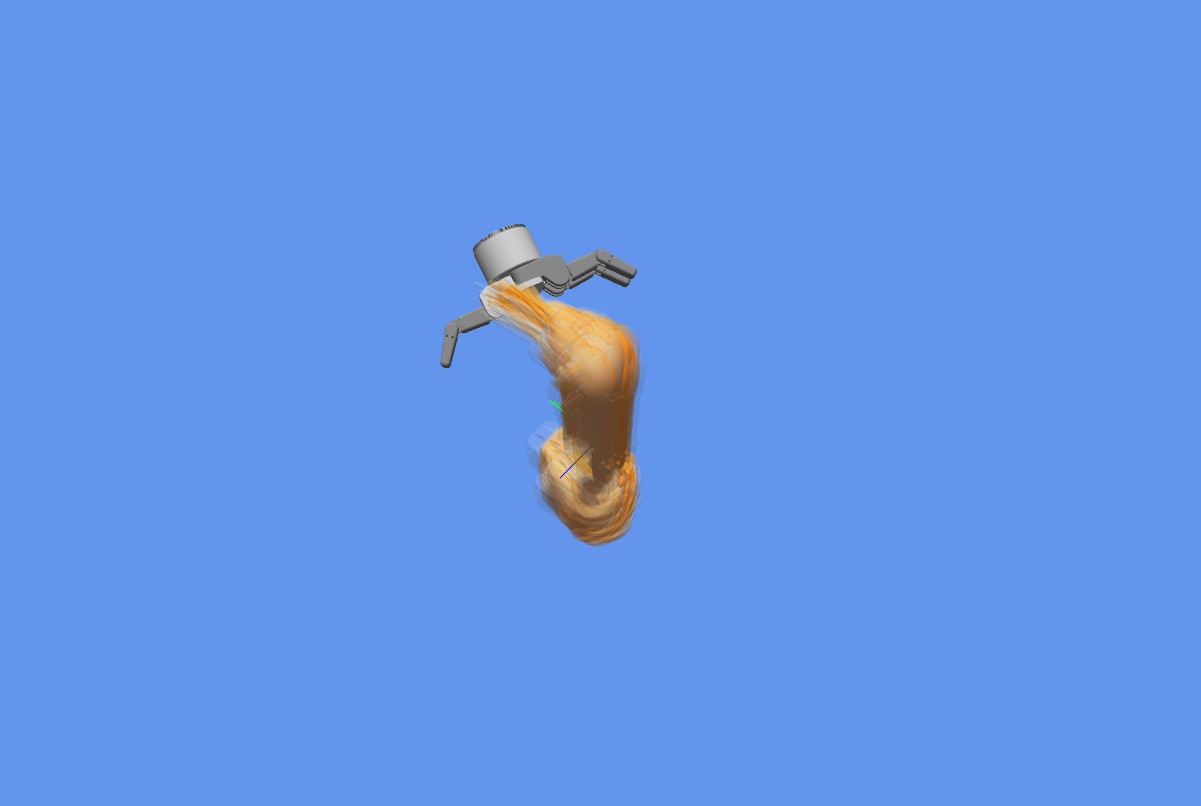}  \vspace{-2pt} \\
  \noalign{\hrule height 2pt} \vspace{-8pt} \\ 
  \begin{sideways} \hspace{20pt} \Large WHP \end{sideways} & \includegraphics[trim=\scalestopsyellowcolleft px \scalestopsyellowcolbottom px \scalestopsyellowcolright px \scalestopsyellowcoltop px, clip=true, scale=\scalestopsyellowcol]{./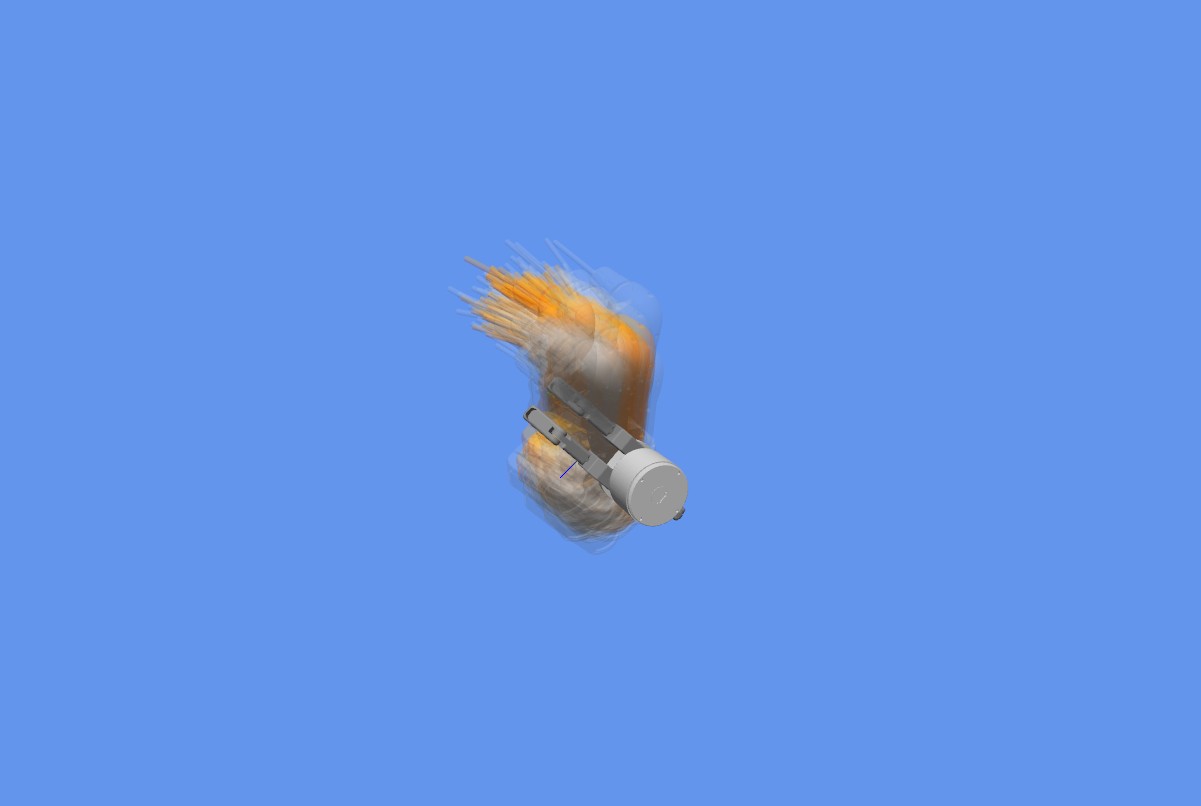} &  \includegraphics[trim=\scalestopsyellowcolleft px \scalestopsyellowcolbottom px \scalestopsyellowcolright px \scalestopsyellowcoltop px, clip=true, scale=\scalestopsyellowcol]{./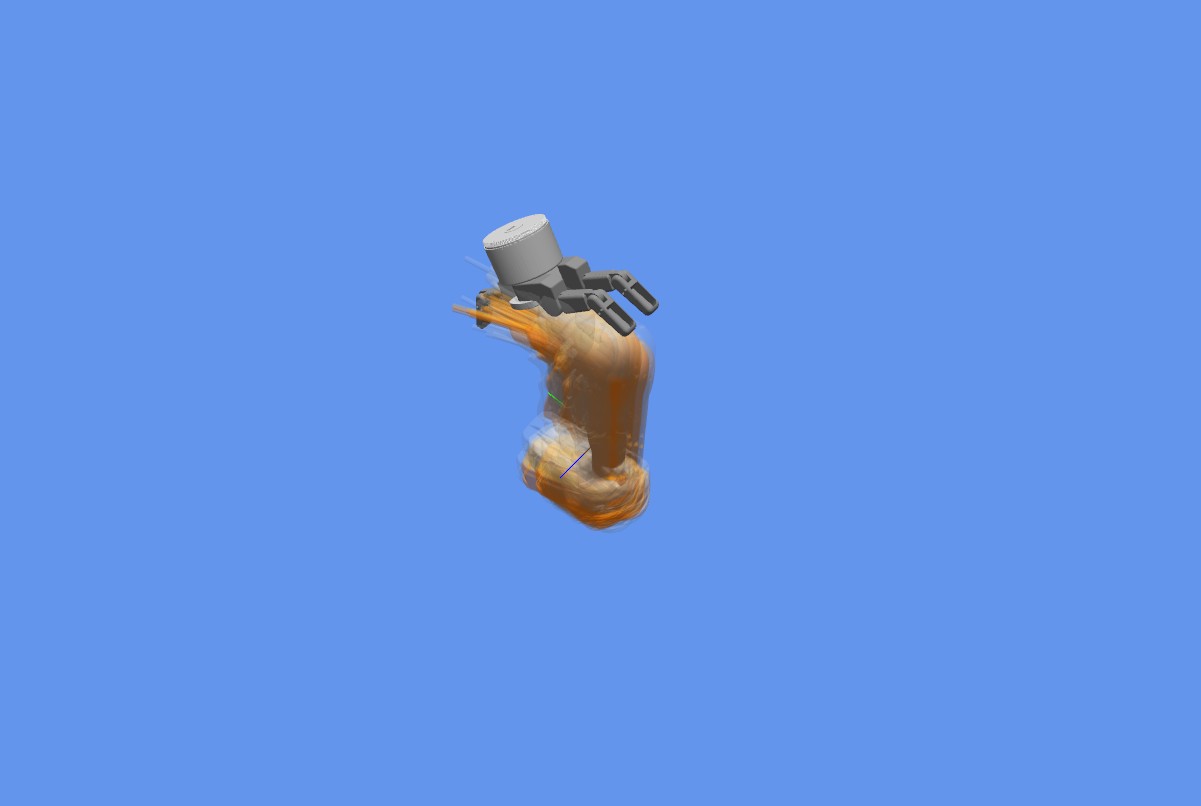} & \includegraphics[trim=\scalestopsyellowcolleft px \scalestopsyellowcolbottom px \scalestopsyellowcolright px \scalestopsyellowcoltop px, clip=true, scale=\scalestopsyellowcol]{./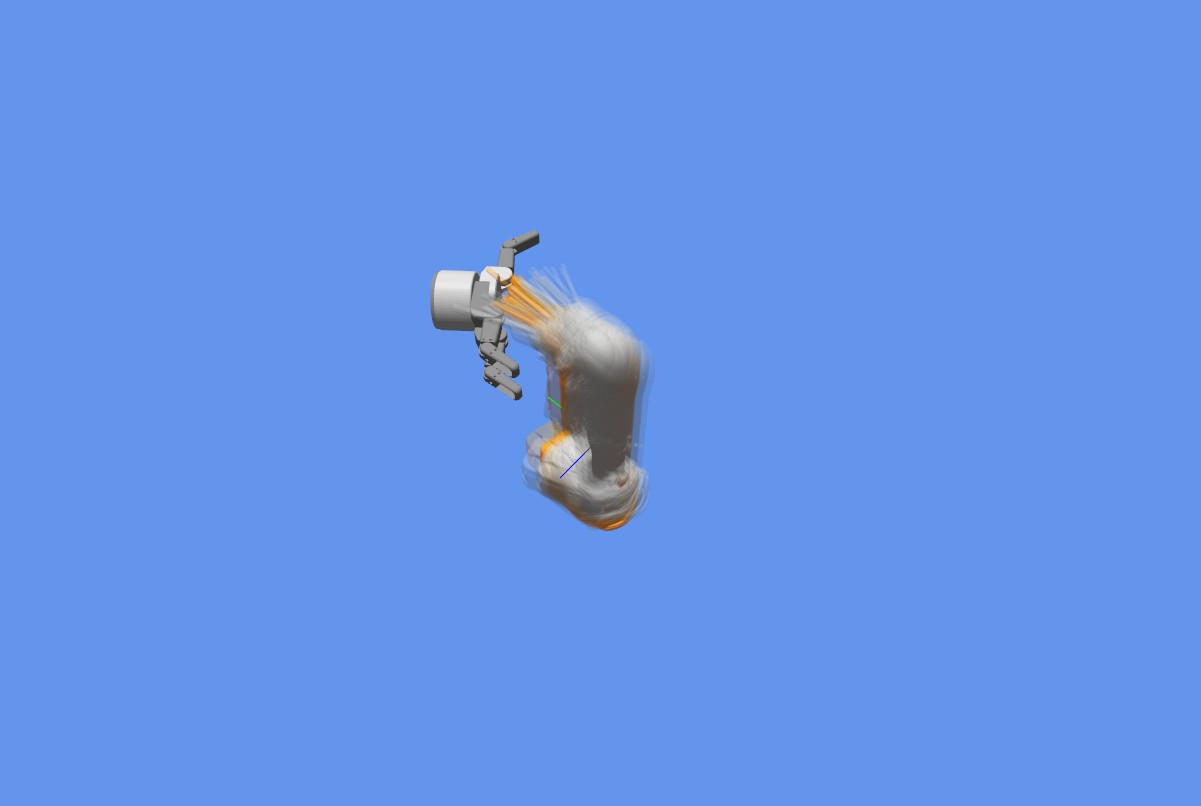}  \vspace{-2pt}   \\
  \noalign{\hrule height 2pt} \vspace{-8pt} \\ 
  \begin{sideways} \hspace{15pt} \Large Random \end{sideways} & \includegraphics[trim=\scalestopsyellowcolleft px \scalestopsyellowcolbottom px \scalestopsyellowcolright px \scalestopsyellowcoltop px, clip=true, scale=\scalestopsyellowcol]{./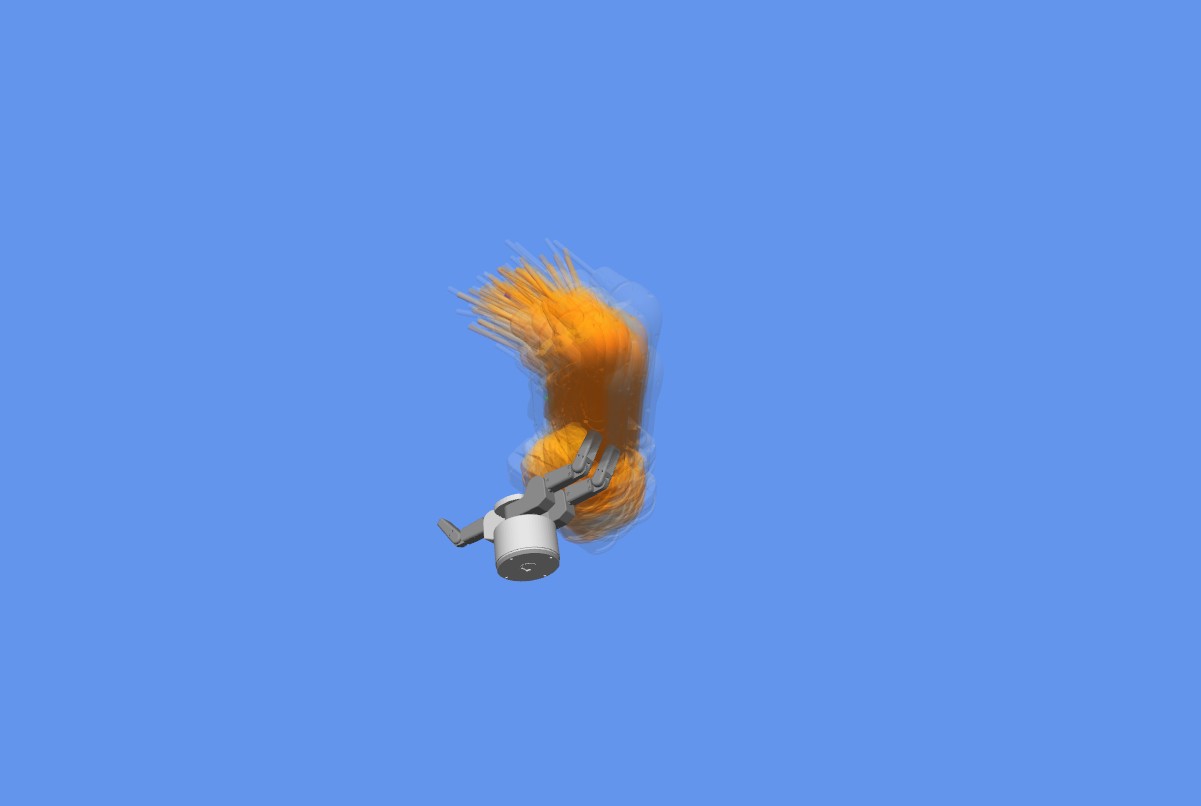} &  \includegraphics[trim=\scalestopsyellowcolleft px \scalestopsyellowcolbottom px \scalestopsyellowcolright px \scalestopsyellowcoltop px, clip=true, scale=\scalestopsyellowcol]{./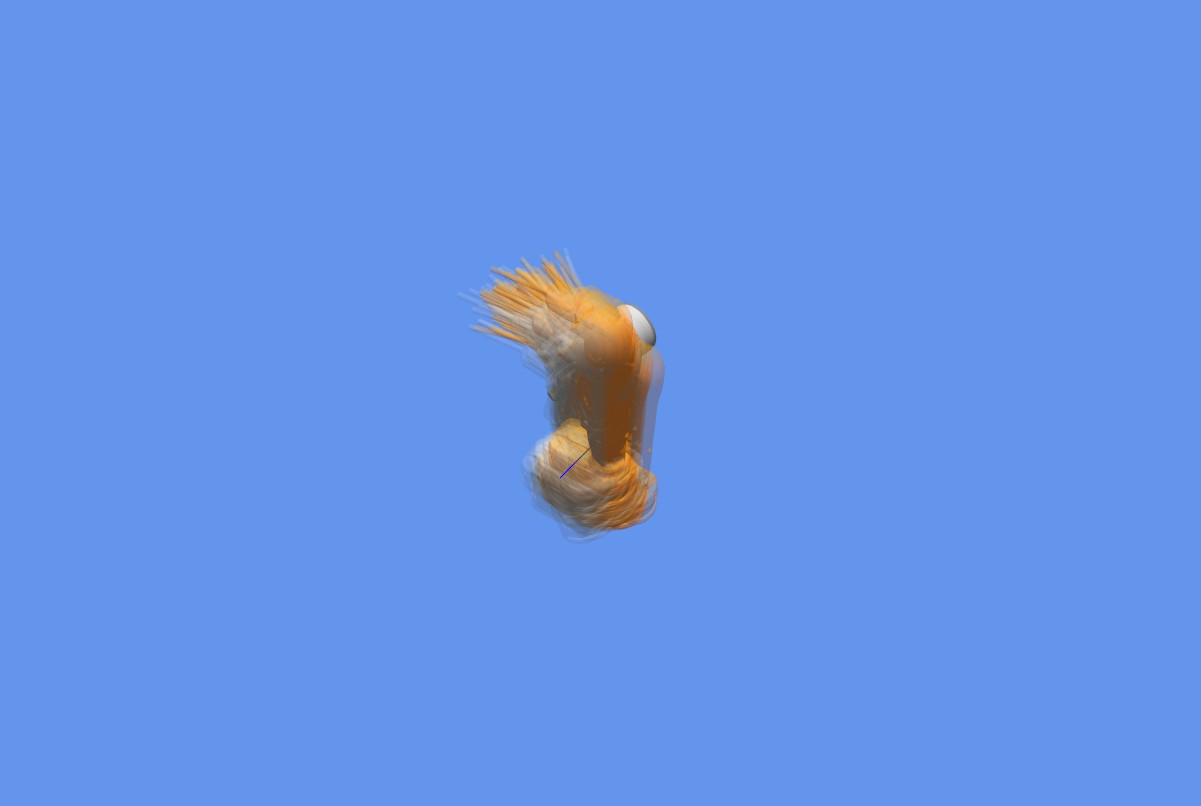} & \includegraphics[trim=\scalestopsyellowcolleft px \scalestopsyellowcolbottom px \scalestopsyellowcolright px \scalestopsyellowcoltop px, clip=true, scale=\scalestopsyellowcol]{./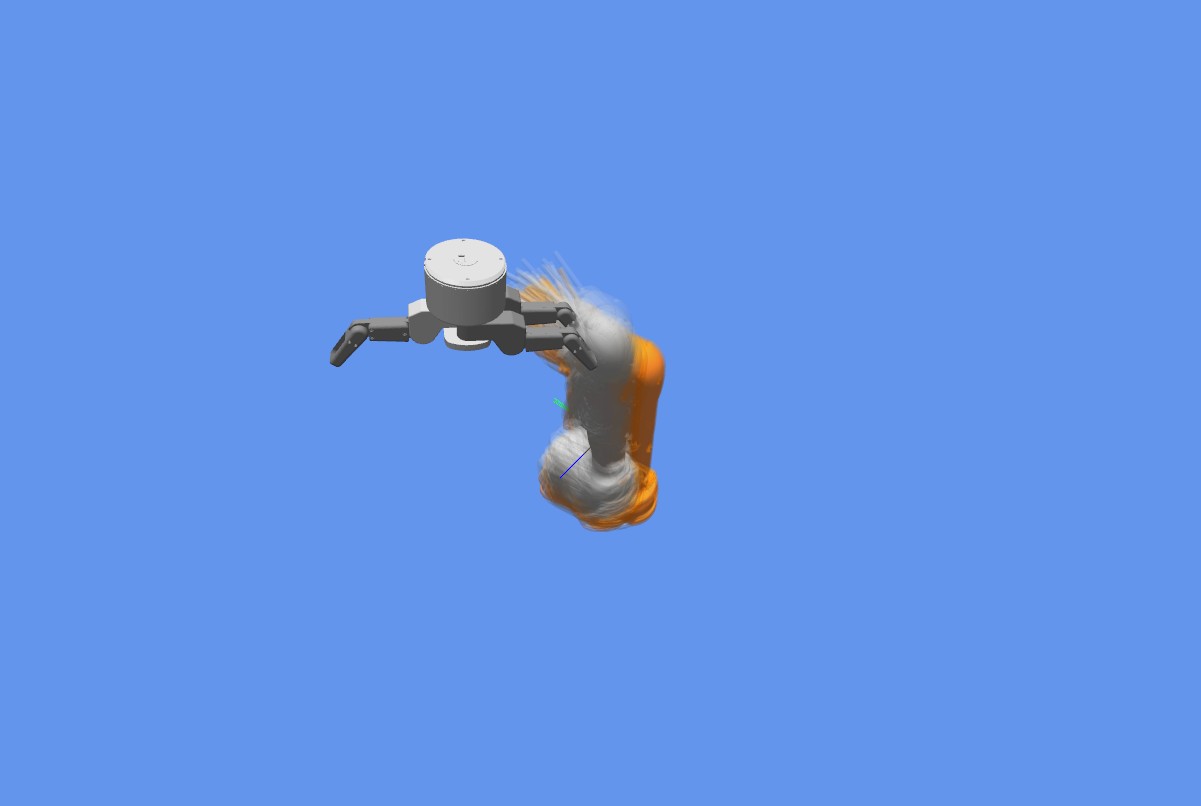} \vspace{-2pt} \\
  \noalign{\hrule height 2pt} \vspace{-8pt} \\ 
  \begin{sideways} \hspace{16pt} \Large Human \end{sideways} & \includegraphics[trim=\scalestopsyellowcolleft px \scalestopsyellowcolbottom px \scalestopsyellowcolright px \scalestopsyellowcoltop px, clip=true, scale=\scalestopsyellowcol]{./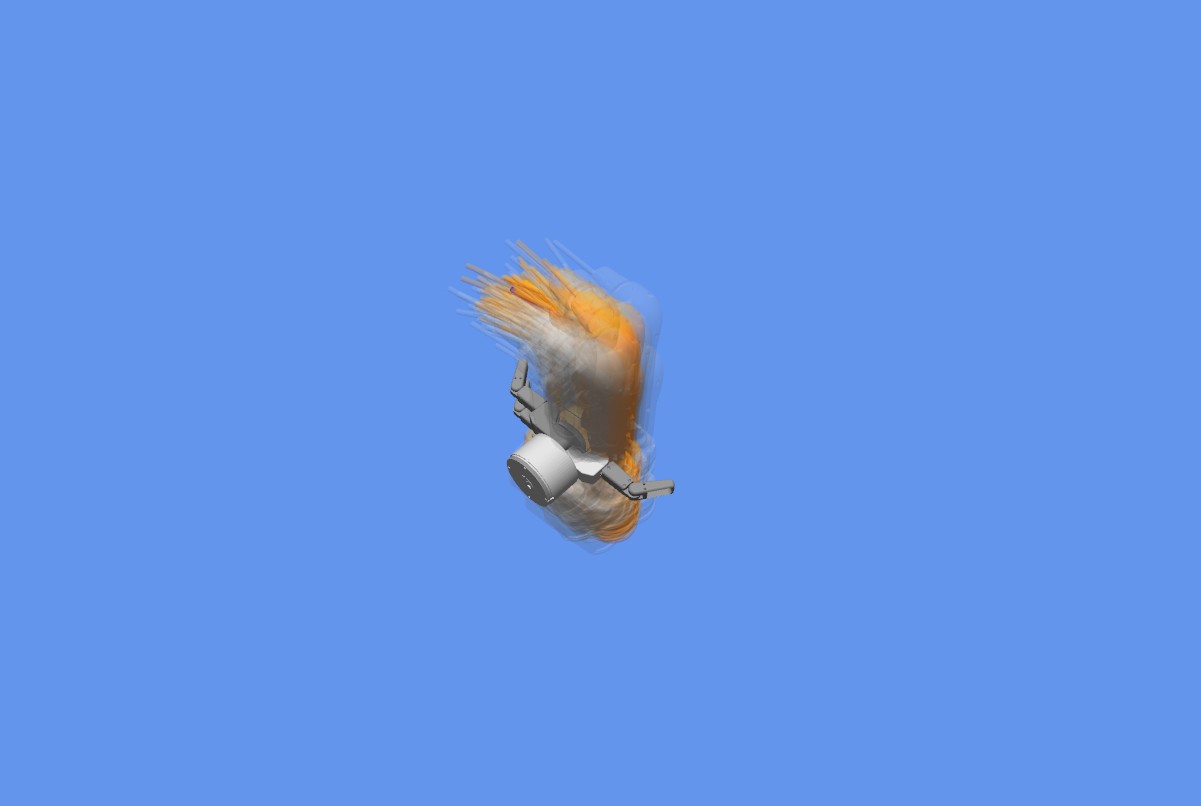} &  \includegraphics[trim=\scalestopsyellowcolleft px \scalestopsyellowcolbottom px \scalestopsyellowcolright px \scalestopsyellowcoltop px, clip=true, scale=\scalestopsyellowcol]{./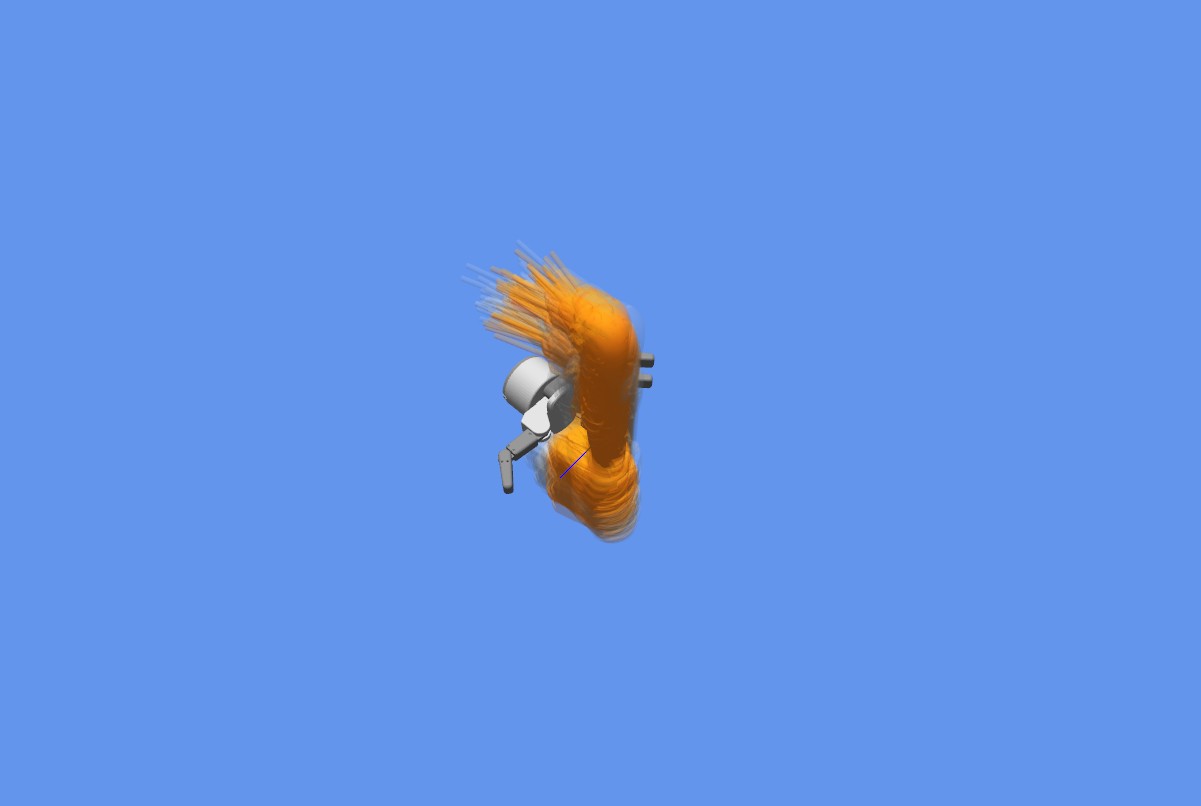} & \includegraphics[trim=\scalestopsyellowcolleft px \scalestopsyellowcolbottom px \scalestopsyellowcolright px \scalestopsyellowcoltop px, clip=true, scale=\scalestopsyellowcol]{./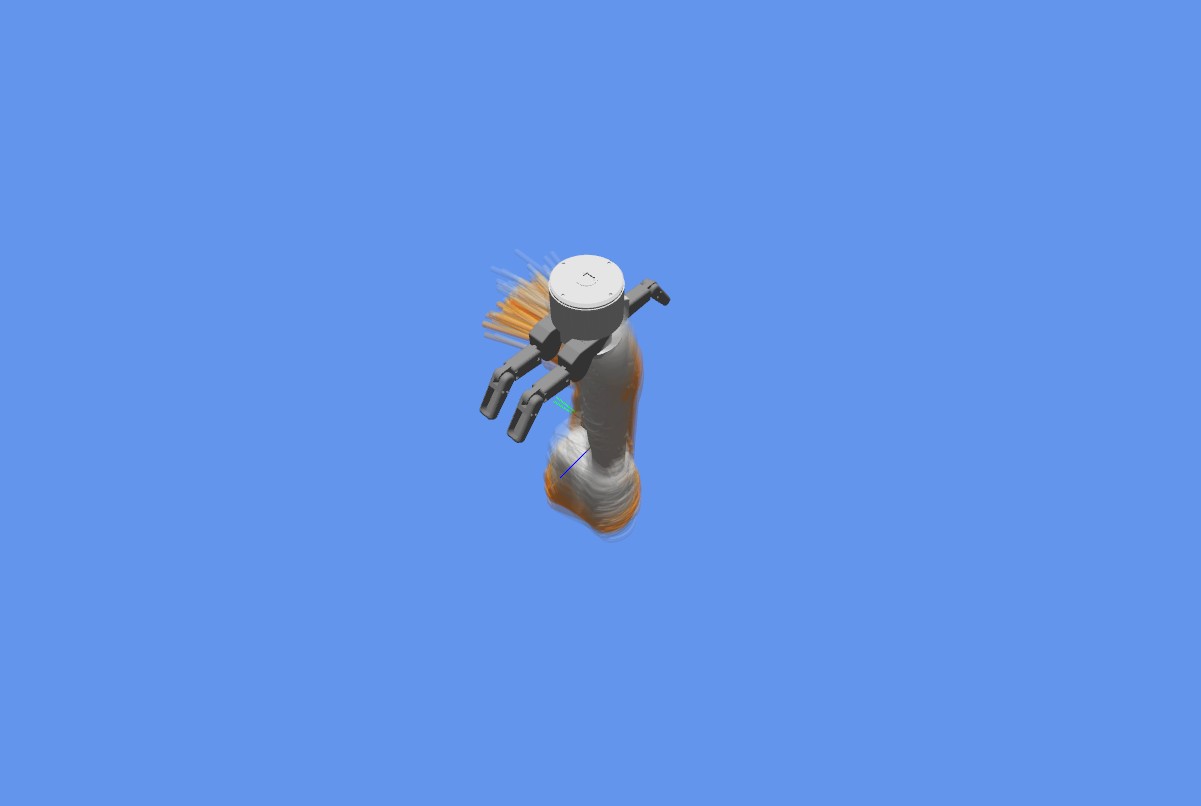} \vspace{-2pt}  
   \end{tabular}
   \caption{First three actions selected for each metric from the experiment in \figref{fig:drill_quivers}. Particles prior to the action are grey, and particles updated after observation are yellow.}
\label{table:eachaction_yellow}
\end{table}



\subsection{Robot Experiments} \label{sec:robot_experiments}
We implemented each of our methods (IG, HP, WHP) on a robot with a Barret arm and hand, and attempted to open a door. $X_{s}$ is initialized with a vision system corrupted with an artificial error of $0.035m$ in the $y$ direction. Our initial random realization $\randrealization$ is sampled from $N(\mu, \Sigma)$ with $\mu = X_{s}$, and $\Sigma$ a diagonal matrix with $\Sigma_{xx} = 0.02$, $\Sigma_{yy} = 0.04$, $\Sigma_{zz} = 0.02$, $\Sigma_{\theta\theta} = 0.08$. We fix $| \randrealization | = 2000$ hypotheses. We initially generate 600 normal action trajectories (\sref{normal_sample_section}), though after checking for kinematic feasibility, only about $70$ remain.

We utilize each of our uncertainty reducing methods prior to using an open-loop sequence to grasp the door handle. Once our algorithm selects the next action, we motion plan to its start pose and perform the straight line guarded-move using a task space controller. We sense contact by thresholding the magnitude reported by a force torque sensor in the Barret hand.

Without touch localization, the robot missed the door handle entirely. With any of our localization methods, the robot successfully opened the door, needing only two uncertainty reducing actions to do so. Selected actions are shown in \tabref{table:robot_results}, and full videos are provided online\footlabel{website_video}{\url{http://www.youtube.com/watch?v=_HiyKKDStBE}}.

\noindent \textbf{Observation 3:} Using our faster adaptive submodular metrics, selecting an action takes approximately as long as planning and executing it. This suggests that adaptive action selection will often outperform a non-adaptive plan generated offline that requires no planning time, but more actions.

\begin{table}[t]\setlength{\tabcolsep}{1pt}
\centering
\begin{tabular}{rccc}
    & \large IG & \large HP & \large WHP \\ 
  \begin{sideways} \hspace{22pt} \Large Action 1 \end{sideways} & \includegraphics[trim=\robotresultsleft px \robotresultsbottom px \robotresultsright px \robotresultstop px, clip=true, width=\robotresultsscale]{./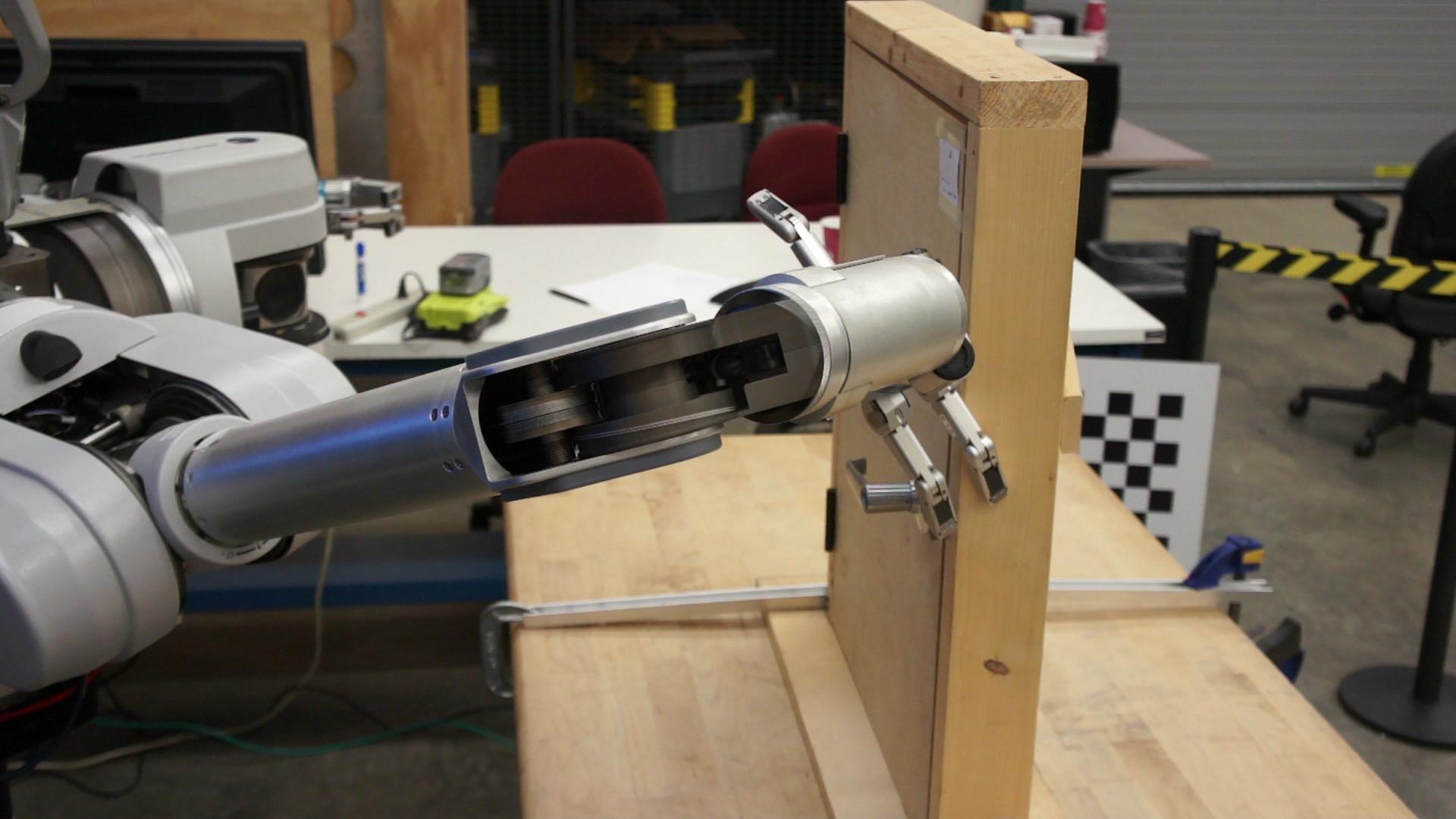} &  \includegraphics[trim=\robotresultsleft px \robotresultsbottom px \robotresultsright px \robotresultstop px, clip=true, width=\robotresultsscale]{./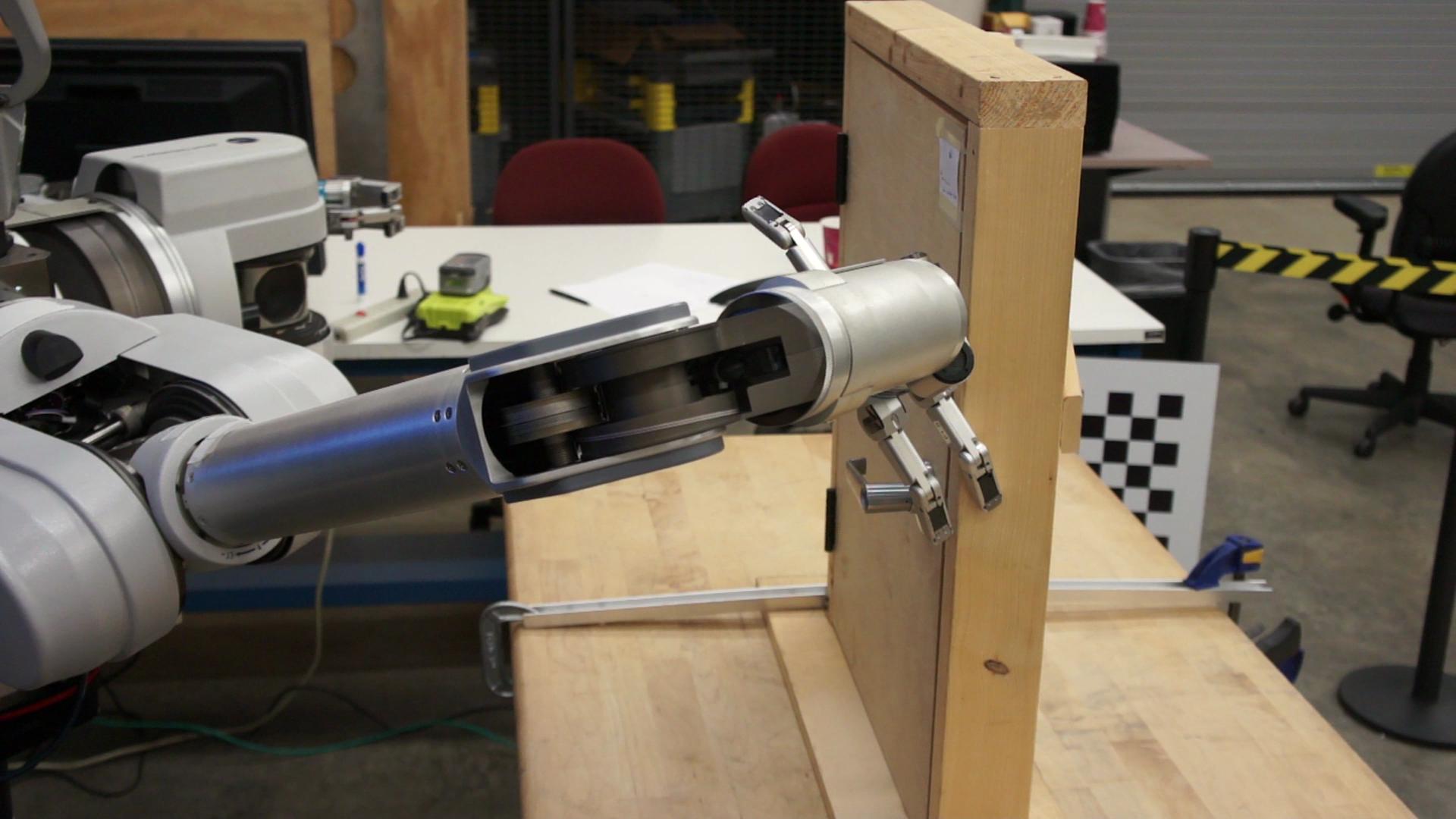} & \includegraphics[trim=\robotresultsleft px \robotresultsbottom px \robotresultsright px \robotresultstop px, clip=true, width=\robotresultsscale]{./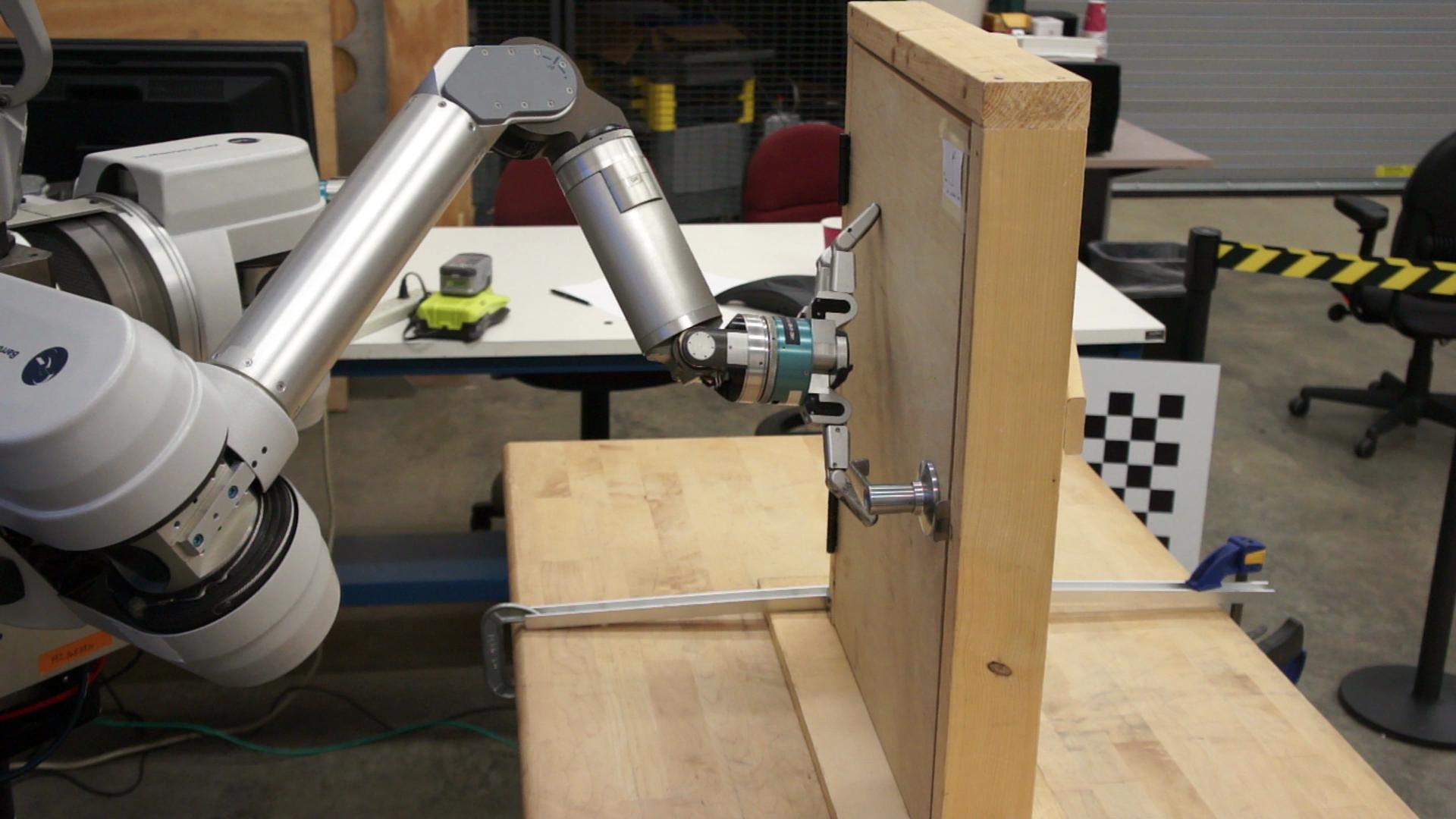}  \vspace{-2pt} \\
 \noalign{\hrule height 2pt} \vspace{-8pt} \\ 
  \begin{sideways} \hspace{22pt} \Large Action 2 \end{sideways} & \includegraphics[trim=\robotresultsleft px \robotresultsbottom px \robotresultsright px \robotresultstop px, clip=true, width=\robotresultsscale]{./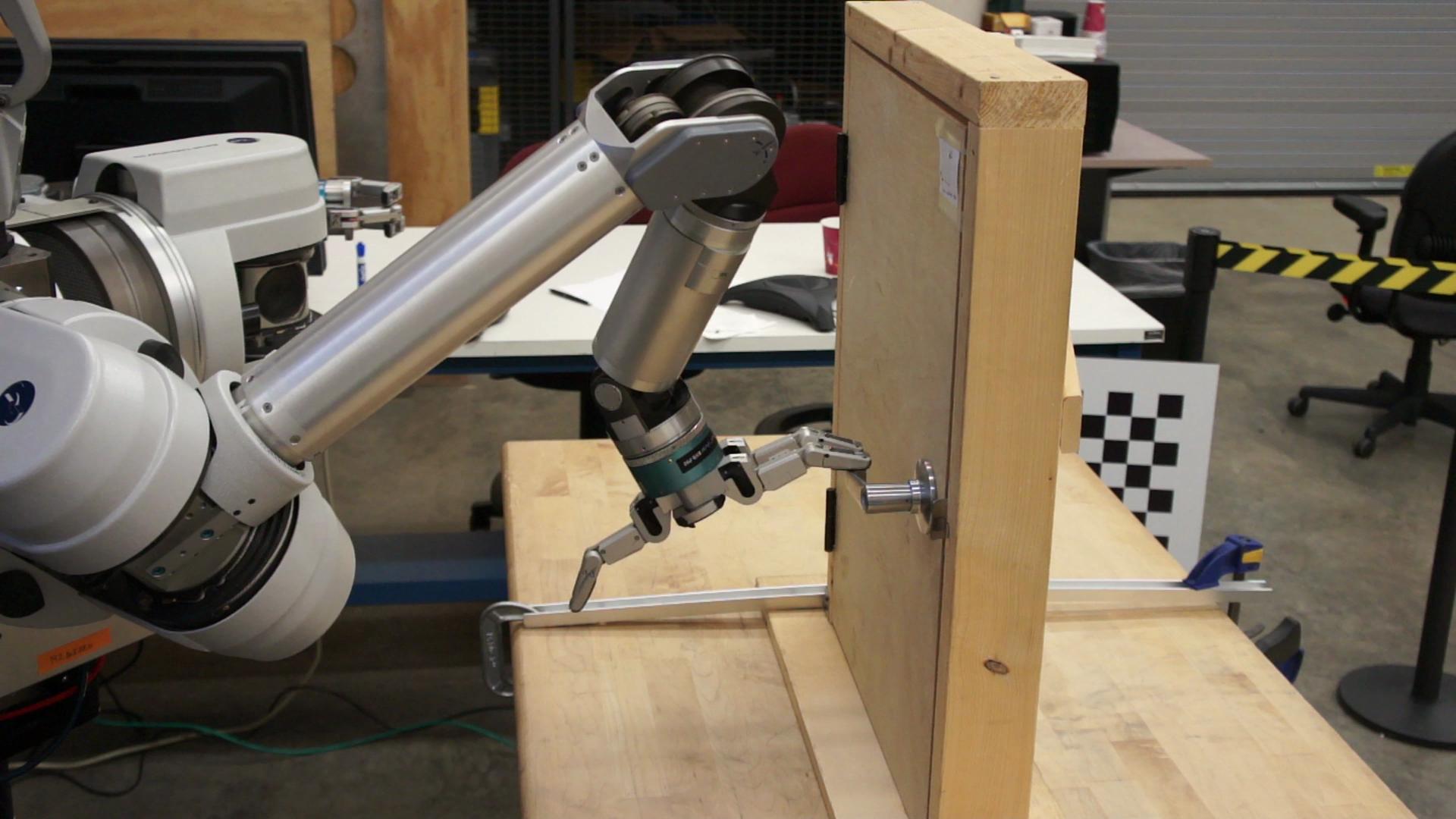} &  \includegraphics[trim=\robotresultsleft px \robotresultsbottom px \robotresultsright px \robotresultstop px, clip=true, width=\robotresultsscale]{./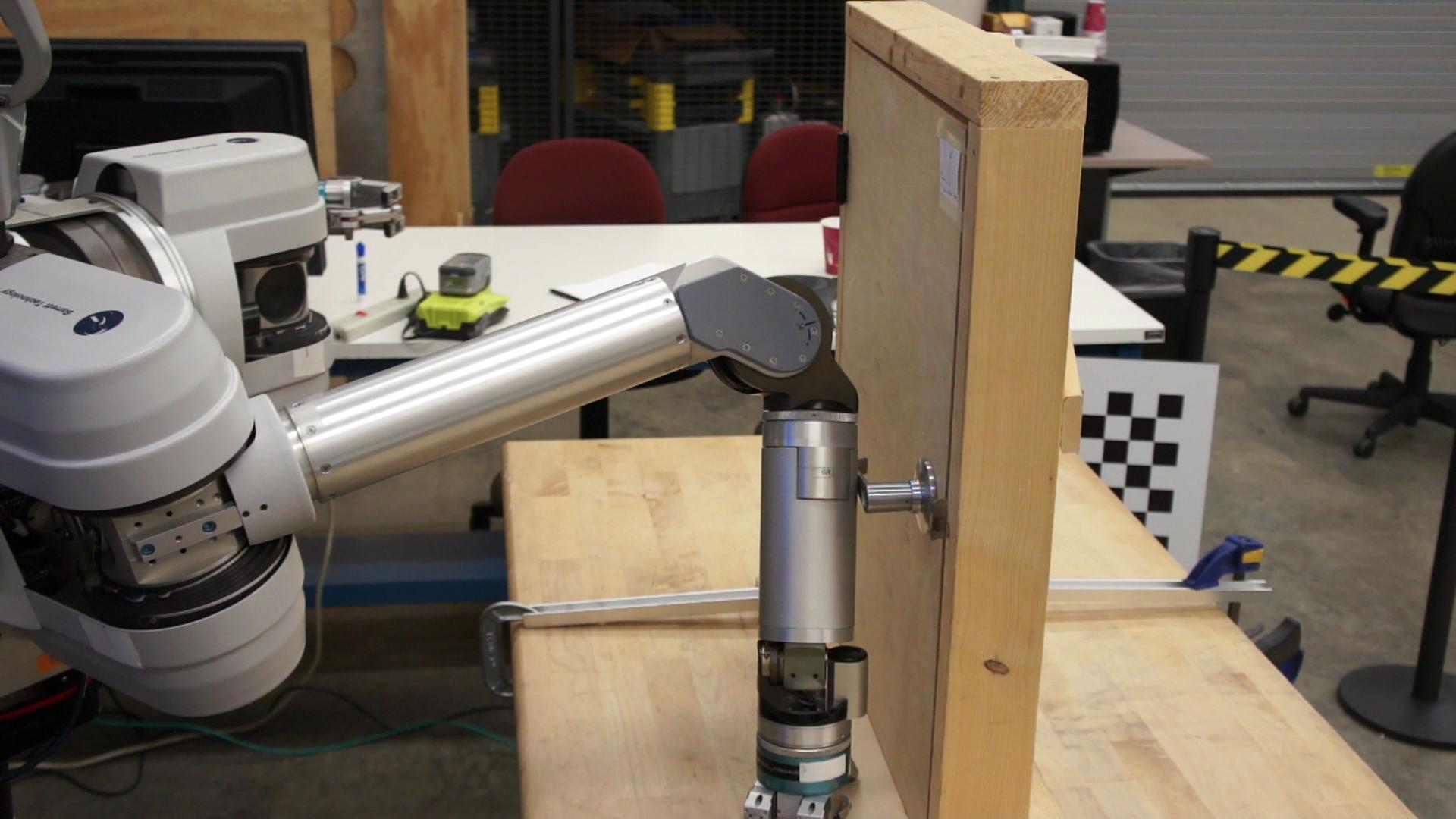} & \includegraphics[trim=\robotresultsleft px \robotresultsbottom px \robotresultsright px \robotresultstop px, clip=true, width=\robotresultsscale]{./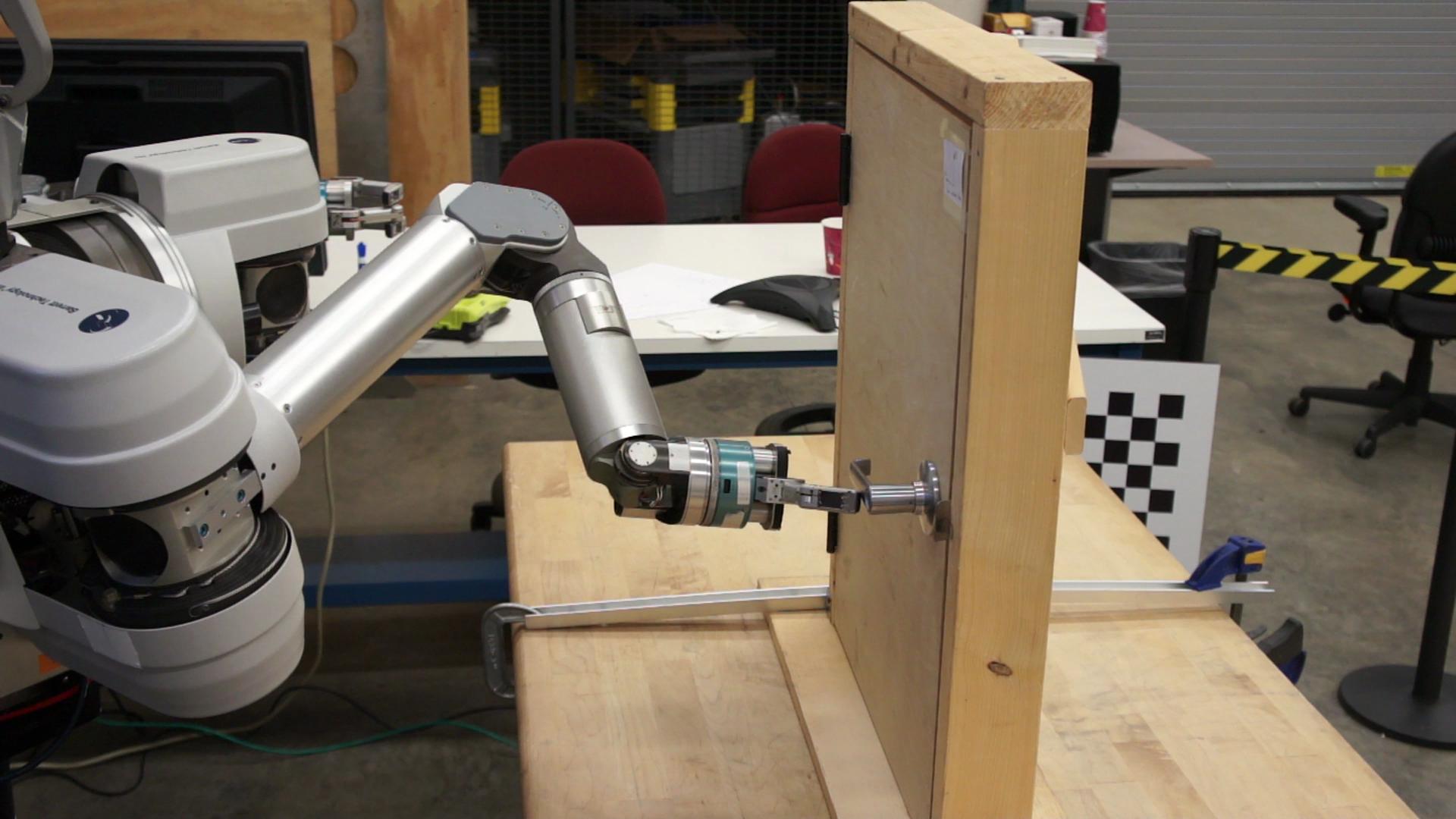}  \vspace{-2pt} \\
   \end{tabular}
   \caption{Actions selected during robot experiment. Interestingly, IG and HP select the same first action. All metrics led to a successful grasp of the door handle.}
\label{table:robot_results}
\end{table}

\section{Conclusion and Future Work}
In this work, we aim to show that greedy selection with the proper metric works well for information gathering actions, both theoretically and practically. To do so, we draw an explicit connection between submodularity and touch based localization. We start with Information Gain (IG), which has been used extensively for uncertainty reduction~\cite{cassandra_1996_acting_under_uncertainty, fox_1998_active_localization, bourgault_2002_info_exploration, batting_particle_filter, erickson_2008_blind, hsiao_2008_robust_belief, hebert_next_best_touch_2012}. We note the assumptions necessary for this metric to be submodular, rendering the greedy algorithm near-optimal in the \emph{offline} setting. We design our own metrics, Hypothesis Pruning (HP) and Weighted Hypothesis Pruning (WHP), which we show are adaptive submodular. Thus, greedy selection is guaranteed to provide near-optimal performance in the \emph{online} setting. In addition, these metrics are much faster, both due to their simplicity and a more efficient lazy-greedy algorithm~\cite{minoux_lazy,golovin_adaptive_2011}.


The methods presented here naively reduce uncertainty without considering the underlying task. In actuality, a task and planner may not require the exact pose, but that all uncertainty lie within a particular distribution. While we can apply our naive methods and terminate when this holds, we may achieve the task-based target more quickly by optimizing for it directly.

\section*{Acknowledgments}
We gratefully acknowledge support of the Army Research Laboratory under Cooperative Agreement Number W911NF-10-2-0016, DARPA-BAA-10-28, NSF-EEC-0540865, and the Intel Embedded Computing ISTC.


\bibliographystyle{IEEEtran}
\bibliography{refs}

\clearpage
\onecolumn
\section{Appendix}
\label{sec_appendix}
Here we present the theorems and proofs showing the Hypothesis Pruning metrics are near-optimal. To do so, we prove our metrics are adaptive submodular, strongly adaptive monotone, and self-certifying. Note that the bounds on adaptive submodular functions require that observations are not noisy - that is, for a fixed realization $\realization$, an action is consistent with exactly one observation. In our case, we would like multiple observations to be consistent with a $\realization$, as our sensors are noisy. Thus, we first construct a non-noisy problem by creating many weighted ``noisy'' copies of each realization $\realization$. We then show how to compute our objective on the original problem. Finally, we prove our bounds.

\subsection{Constructing the Non-Noisy Problem}
\label{sec_nonnoisy}
Similar to previous approaches in active learning, we construct a non-noisy problem by creating ``noisy'' copies of each realization $\realization$ for every possible noisy observation~\cite{golovin_bayesian_noisy_obs, bellala_2012_query_selection}. Let $\noisingfuncaction(\realization) = \{\noisyrealization_{1}, \dots\, \noisyrealization_{\numnoisy}\}$ be the function that creates $\numnoisy$ noisy copies for action $\actionitem$. Here, the original probability of $\realization$ is distributed among all the noisy copies, $p(\realization) = \sum_{\noisyrealization \in \noisingfuncaction(\realization)} p(\noisyrealization)$. For convenience, we will also consider overloading $\noisingfunc$ to take sets of actions, and sets of realizations. Let $\noisingfuncactionset(\realization)$ recursively apply $\noisingfunc$ for each $\actionitem \in \actionset$. That is, if we let $\actionset = \{ \actionitem_1, \actionitem_2, \dots \}$ we apply $\noisingfunc_{\actionitem_1}$ to $\realization$, then $\noisingfunc_{\actionitem_2}$ to every output of $\noisingfunc_{\actionitem_1}(\realization)$, and so on. Note that we still have $p(\realization) = \sum_{\noisyrealization \in \noisingfuncactionset(\realization)} p(\noisyrealization)$. Additionally, let $\noisingfuncactionset(\randrealization)$ apply $\noisingfuncactionset$ to each $\realization \in \randrealization$ and combine the set.

The probability of each noisy copy comes from our weighting functions defined in~\sref{sec_hypothprune}:
\begin{align*}
  \noisingfuncactionset(\realization) &= \{\noisyrealization_{1}, \dots\, \noisyrealization_{\numnoisy}\} \\
  p(\noisyrealization) &= p(\realization) \frac{\displaystyle \weighting_{\actionitem_{\noisyrealization}} (\actionitem_{\realization})}{\displaystyle \sum_{\noisyrealization' \in \noisingfuncaction(\realization)} \weighting_{\actionitem_{\noisyrealization'} } (\actionitem_{\realization})} \hspace{5mm} \quad \text{(one action)}\\
  p(\noisyrealization) &= p(\realization) \prod_{\actionitem \in \actionset} \frac{\displaystyle \weighting_{\actionitem_{\noisyrealization}} (\actionitem_{\realization})}{\displaystyle \sum_{\noisyrealization' \in \noisingfuncaction(\realization)} \weighting_{\actionitem_{\noisyrealization'} } (\actionitem_{\realization})} \quad \text{(multiple actions)}
\end{align*}

For simplicity, we also assume that the maximum value of our weighting function is equal to one for any action. We note that our weighting functions in~\sref{sec_hypothprune} have this property for the non-noisy observation where $\actionitem_\noisyrealization = \actionitem_\realization$:
\begin{align}
  \max_{\noisyrealization \in \noisingfuncaction(\realization)} \weighting_{\actionitem_{\noisyrealization}}(\actionitem_\realization) &= 1 \quad \forall \realization, \actionitem \label{eq_maxweightval}
\end{align}

We build our set of non-noisy realizations $\noisyrandrealization = \noisingfuncallaction(\randrealization)$. Our objective function is over $\noisyrandrealization$, specifying the probability mass removed form the original problem. One property we desire is if our observations are consistent with one noisy copy of $\realization$, then we keep some proportion of all of the noisy copy (proportional to our weighting function $\weighting^{HP}$ or $\weighting^{WHP}$). In our HP algorithm for example, if any noisy copy of $\realization$ remains, the objective function acts as if all of the probability mass remains. Following Golovin and Krause~\cite{golovin_adaptive_2011}, we define our utility function over a set of actions $\actionset$, and the observations we would receive if $\noisyrealization$ were the true state:
\begin{align*}
  f(\actionset, \noisyrealization) &= 1 - \sumrealization \left( \prod_{\actionitem \in \actionset} \frac{p(\realization)}{\max p(\noisingfuncaction(\realization))}\right) \left( \sumnoisyfromrealization{\allactionset} p(\noisyrealization') \prod_{\actionitem \in \actionset} \delta_{\actionitem_\noisyrealization \actionitem_{\noisyrealization'} } \right)
\end{align*}

Where $\delta_{\actionitem_\noisyrealization \actionitem_{\noisyrealization'} }$ is the Kronecker delta function, equal to $1$ if $\actionitem_\noisyrealization  = \actionitem_{\noisyrealization'} $ and $0$ otherwise, $\realization$ is the original realization from which $\noisyrealization$ was produced, and $\max p(\noisingfuncaction(\realization))$ is the highest probability of the ``noisy'' copies. By construction, any action will keep at most $\max p(\noisingfuncaction(\realization))$ probability mass per action, since at most one noisy copy from $\noisingfuncaction(\realization)$ will be consistent with the observation. Intuitively, multiplying by $\frac{p(\realization)}{\max p(\noisingfuncaction(\realization))}$ will make it so if we kept the highest weighted noisy copy of $\realization$, our objective would be equivalent to keeping the entire realization $\realization$.
\begin{align*}
  f(\actionset, \noisyrealization) &= 1 - \sumrealization \left( \prod_{\actionitem \in \actionset} \frac{p(\realization)}{\max p(\noisingfuncaction(\realization))}\right) \left( \sum_{\noisyrealization' \in \noisingfuncallaction(\realization)} p(\noisyrealization') \prod_{\actionitem \in \actionset} \delta_{\actionitem_\noisyrealization \actionitem_{\noisyrealization'} } \right)\\
  &= 1 - \sumrealization \left( \prod_{\actionitem \in \actionset} \frac{p(\realization)}{\max p(\noisingfuncaction(\realization))}\right) \left( \sum_{\noisyrealization' \in \noisingfuncactionset(\realization)} \sum_{\noisyrealization'' \in \noisingfuncallotheraction(\noisyrealization')} p(\noisyrealization'') \prod_{\actionitem \in \actionset} \delta_{\actionitem_\noisyrealization \actionitem_{\noisyrealization''} } \right)\\
  &= 1 - \sumrealization \left( \prod_{\actionitem \in \actionset} \frac{p(\realization)}{\max p(\noisingfuncaction(\realization))}\right) \left( \sum_{\noisyrealization' \in \noisingfuncactionset(\realization)} \left(\prod_{\actionitem \in \actionset} \delta_{\actionitem_\noisyrealization \actionitem_{\noisyrealization'} } \right) \sum_{\noisyrealization'' \in \noisingfuncallotheraction(\noisyrealization')} p(\noisyrealization'')  \right)\\
  &= 1 - \sumrealization \left( \prod_{\actionitem \in \actionset} \frac{p(\realization)}{\max p(\noisingfuncaction(\realization))}\right) \left( \sum_{\noisyrealization' \in \noisingfuncactionset(\realization)}  p(\noisyrealization')  \prod_{\actionitem \in \actionset}\delta_{\actionitem_\noisyrealization \actionitem_{\noisyrealization'} }  \right)
\end{align*}

Here, we separate the recursive splitting over the realization $\realization$ into those split based on actions in $\actionset$ and those split from other actions. Since $\prod_{\actionitem \in \actionset} \delta_{\actionitem_\noisyrealization \actionitem_{\noisyrealization''} }$ only depends on the response to actions in $\actionset$, it only depends on noisy copies made from $\noisingfuncactionset$. Thus, we can factor those out. Additionally, we marginalize over the copies of $\noisyrealization'$ as $\sum_{\noisyrealization'' \in \noisingfuncallotheraction(\noisyrealization')} p(\noisyrealization'') = p(\noisyrealization')$. Overall, this simplification enables us to only consider the copies from the actions in $\actionset$. We further simplify:
\begin{align}
  f(\actionset, \noisyrealization) &= 1 - \sumrealization \left( \prod_{\actionitem \in \actionset} \frac{p(\realization)}{\max p(\noisingfuncaction(\realization))}\right) \left( \sum_{\noisyrealization' \in \noisingfuncactionset(\realization)}  p(\noisyrealization')  \prod_{\actionitem \in \actionset}\delta_{\actionitem_\noisyrealization \actionitem_{\noisyrealization'} }  \right) \notag\\
  &= 1 - \sumrealization \left( \prod_{\actionitem \in \actionset} p(\realization)  \left( \frac{\noisyprobwithweightingdenom}{ \noisyprobwithweightingmaxval p(\realization) }  \right)  \right) \left( \sum_{\noisyrealization' \in \noisingfuncactionset(\realization)}  p(\noisyrealization')  \prod_{\actionitem \in \actionset}\delta_{\actionitem_\noisyrealization \actionitem_{\noisyrealization'} }  \right) \label{eq_subinnoising} \\
  &= 1 - \sumrealization \left( \prod_{\actionitem \in \actionset} \noisyprobwithweightingdenom \right) \left( \sum_{\noisyrealization' \in \noisingfuncactionset(\realization)}  p(\noisyrealization')  \prod_{\actionitem \in \actionset}\delta_{\actionitem_\noisyrealization \actionitem_{\noisyrealization'} }  \right) \label{eq_usedmaxweight}\\
  &= 1 - \sumrealization \left( \prod_{\actionitem \in \actionset} \noisyprobwithweightingdenom \right) \left( \sum_{\noisyrealization' \in \noisingfuncactionset(\realization)} p(\realization) \left( \prod_{\actionitem \in \actionset} \frac{\noisyprobwithweightingnumer}{\noisyprobwithweightingdenom} \right) \prod_{\actionitem \in \actionset}\delta_{\actionitem_\noisyrealization \actionitem_{\noisyrealization'} }  \right) \notag \\
  &= 1 - \sumrealization   p(\realization) \sum_{\noisyrealization' \in \noisingfuncactionset(\realization)} \left( \prod_{\actionitem \in \actionset} \noisyprobwithweightingdenom \right) \left( \prod_{\actionitem \in \actionset} \frac{\noisyprobwithweightingnumer}{\noisyprobwithweightingdenom} \right) \left(\prod_{\actionitem \in \actionset}\delta_{\actionitem_\noisyrealization \actionitem_{\noisyrealization'} } \right) \notag\\
  &= 1 -  \sumrealization  p(\realization) \sum_{\noisyrealization' \in \noisingfuncactionset(\realization)} \left( \prod_{\actionitem \in \actionset} \noisyprobwithweightingnumer \right) \left( \prod_{\actionitem \in \actionset}\delta_{\actionitem_\noisyrealization \actionitem_{\noisyrealization'} } \right) \notag\\
  &= 1 - \sumrealization   p(\realization) \sum_{\noisyrealization' \in \noisingfuncactionset(\realization)} \prod_{\actionitem \in \actionset} \noisyprobwithweightingnumer \delta_{\actionitem_\noisyrealization \actionitem_{\noisyrealization'} } \notag
\end{align}

Where \eref{eq_subinnoising} plugging in the value of $\noisingfuncaction(\realization)$, \eref{eq_usedmaxweight} used \eref{eq_maxweightval} above. Now we consider how the function $\noisingfunc$ generates noisy copies. We require that exactly one noisy copy  $\noisyrealization' \in \noisingfuncactionset(\realization)$ agree with every observation received so far, and thus only one term will have a nonzero product $\prod_{\actionitem \in \actionset} \delta_{\actionitem_\noisyrealization \actionitem_{\noisyrealization'} }$. We defer further specific details of $\noisingfunc$ until the next section. We get:
\begin{align*}
  f(\actionset, \noisyrealization) &= 1 - \sumrealization   p(\realization) \sum_{\noisyrealization' \in \noisingfuncactionset(\realization)} \prod_{\actionitem \in \actionset} \noisyprobwithweightingnumer \delta_{\actionitem_\noisyrealization \actionitem_{\noisyrealization'} } \\
  &= 1 - \sumrealization   p(\realization) \prod_{\actionitem \in \actionset} \noisyprobwithweightingnumernoapos
\end{align*}

At this point we can see how this equals the objective function for partial realization $\partialrealization$ in \sref{sec_hypothprune}. Let $\partialrealization_\noisyrealization = \{\actionset,\actionset_\noisyrealization\}$, where $\actionset$ are the actions taken and $\actionset_\noisyrealization$ are the observations. We can see that:
\begin{align*}
  f(\actionset, \noisyrealization) &= 1 - \sumrealization   p(\realization) \prod_{\actionitem \in \actionset} \noisyprobwithweightingnumernoapos \\
  &= 1 - \sumrealization p_{\partialrealization_\noisyrealization}(\realization)\\
  &= 1 - \probmassall_{\partialrealization_\noisyrealization}\\
  &= f(\partialrealization_\noisyrealization)
\end{align*}

\subsection{Observation Probabilities}
To compute expected marginal utilities, we will need to define our space of possible observations, and the corresponding probability for these observations. Let $\partialrealization=\{\tactionset,\tobservationset\}$ be the partial realization, and $p(\actionitem_\randrealization = \observationitem | \partialrealization)$ be the probability of receiving observation $\observationitem$ from performing action $\actionitem$ conditioned on the partial realization $\partialrealization$. Intuitively, this will correspond to how much probability mass agrees with the observation. More formally:
\begin{align*}
  p(\actionitem_\randrealization = \observationitem | \partialrealization) &\propto \sumrealization \sumnoisyfromrealization{\allactionset} p(\noisyrealization) \delta_{\actionitem_\noisyrealization \observationitem} \prodpartialdelta
\end{align*}

Similar to before, we will be able to consider noisy copies made from only actions in $\tactionset$ and $\actionitem$ (the derivation follows exactly as in \sref{sec_nonnoisy}). This will simplify to:
\begin{align*}
  p(\actionitem_\randrealization = \observationitem | \partialrealization) &\propto \sumrealization \sumnoisyfromrealization{\{\tactionset \cup \actionitem\}} p(\noisyrealization) \delta_{\actionitem_\noisyrealization \observationitem} \prodpartialdelta\\
  &= \sumrealization p(\realization) \sumnoisyfromrealization{\tactionset} \sumnoisyfromnoisyrealization{\actionitem}  \left(\noisyprobwithweighting   \delta_{\actionitem_{\noisyrealization'} \observationitem} \right) \left( \prodpartialdeltaweight \right)
\end{align*}

The first term in parenthesis comes from the weighting of performing action $\actionitem$ and receiving observation $\observationitem$, where we would like the only noisy copy of $\realization$ that agrees with that observation. The second term comes from that same operation, but for all actions and observations in $\partialrealization$. Again, we know by construction that exactly one noisy copy agrees with all observations. Hence, we can write this as:
\begin{align*}
  p(\actionitem_\randrealization = \observationitem | \partialrealization) &\propto \sumrealization p(\realization) \sumnoisyfromrealization{\tactionset} \sumnoisyfromnoisyrealization{\actionitem}  \left(\noisyprobwithweighting   \delta_{\actionitem_{\noisyrealization'} \observationitem} \right) \left( \prodpartialdeltaweight \right)\\
  &= \sumrealization p(\realization) \left(\noisyprobwithweightingobs \right) \left( \prodpartialweightobs \right)
\end{align*}

Finally, we would also like for $\noisyprobwithweightingdenom$ to be constant for all actions $\actionitem$ and realizations $\realization$, enabling us to factor those terms out. To approximately achieve this, we generate noisy copies by discretizing the trajectory uniformly along the path, and generate a noisy copy of each realization $\realization$ at every discrete location. We approximate our realization to be set at one of the discrete locations, such that $\actionitem_\realization$ is equal to the nearest discrete location. For many locations, the weighting function will be less than some negligible $\epsilon$. Let there be $\numnoisy$ discrete locations for any $\realization$ and $\actionitem$ where $\weighting_{\actionitem_\realization} > \epsilon$. We say that $|\noisingfuncaction(\realization)| = K$. Thus, we can fix the value of $\noisyprobwithweightingdenom = \kappa \quad \forall \actionitem, \realization$. Note that we also need to be consistent with observations corresponding to not contacting an object anywhere along the trajectory. Therefore, we also consider $\numnoisy$ noisy copies for this case. Under these assumptions, we can further simplify:

\begin{align*}
  p(\actionitem_\randrealization = \observationitem | \partialrealization) &\propto \sumrealization p(\realization) \left(\noisyprobwithweightingobs \right) \left( \prodpartialweightobs \right)\\
  &\approx \sumrealization p(\realization) \left(\frac{\noisyprobwithweightingnumerobs}{\kappa} \right) \left( \prodpartialweightobskappa \right)\\
  &\propto \sumrealization p(\realization) \noisyprobwithweightingnumerobs \prodpartial \noisyprobwithweightingnumertobs\\
  &= \sumrealization p_{\partialrealization}(\realization) \noisyprobwithweightingnumerobs\\
  &= \probmass_{\partialrealization,\actionitem,\observationitem}
\end{align*}

Finally, we need to normalize all observations to get:
\begin{align*}
    p(\actionitem_\randrealization = \observationitem | \partialrealization) &=\frac{\probmass_{\partialrealization, \actionitem, \observationitem}}{ \sum_{\observationitem' \in \allobservationset_\actionitem} \probmass_{\partialrealization,\actionitem, \observationitem'}}
\end{align*}

Where $\allobservationset_\actionitem$ consists of all the discrete stopping points sampled, and the $\numnoisy$ observations for non-contact.

\subsection{Proving the Bound}
We showed that our utility function is equivalent to the mass removed from the original $\randrealization$: $\hat{f}(\partialrealization) = 1 - \probmassall_{\partialrealization}$. This function can utilize either of the two reweighting functions $w^{HP}$ or $w^{WHP}$ defined in Section~\ref{sec_hypothprune}. Our objective is a truncated version of this: $f(\partialrealization) = \min \left\{Q, \hat{f}(\partialrealization) \right\}$, where $Q$ is the target value for how much probability mass we wish to remove. We assume that the set of all actions $\allactionset$ is sufficient such that $f(\allactionset,\noisyrealization) = Q, \forall \noisyrealization \in \noisyrandrealization$. Note that adaptive monotone submodularity is preserved by truncation, so showing these properties for $\hat{f}$ implies them for $f$.

Using our utility function and observation probability, it is not hard to see that the expected marginal benefit of an action is given by:
\begin{align}
    \Delta(a | \partialgivenactions) &= \mathbb{E}\left[ \hat{f}(\actionset \cup \{a\}, \noisyrandrealization) -   \hat{f}(\actionset, \noisyrandrealization) \right | \partialgivenactions] \notag \\
    &= \sum_{\observationitem \in \allobservationsetaction} p(\observationitem | \partialgivenactions) \left[ (1 - \probmasspartialargaction{\actionset}) - (1 - \probmasspartialarg{\actionset}) \right] \notag \\
    &= \sum_{\observationitem \in \allobservationsetaction} \frac{\probmasspartialargaction{\actionset}}{ \sum_{\observationitem' \in \allobservationsetaction} \probmass_{\partialgivenactions,\actionitem, \observationitem'}} \left[ \probmasspartialarg{\actionset} - \probmasspartialargaction{\actionset} \right]  \label{eq_marginal_hp}
\end{align}

This shows the derivation of the marginal utility, as defined in \sref{sec_hypothprune}. We now provide the proof for Theorem \ref{theorem_hp_as}, by showing that this utility function is adaptive submodular, strongly adaptive monotone, and self-certifying:

\begin{lemma}
    Let $A \subseteq \allactionset$, which result in partial realizations $\partialrealization_A$. Our objective function defined above is strongly adaptive monotone.
\end{lemma}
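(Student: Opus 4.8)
The plan is to verify the inequality defining strong adaptive monotonicity directly, working in the equivalent non-noisy problem over $\noisyrandrealization$ from \sref{sec_nonnoisy}, where each action has a deterministic outcome for each (noisy) realization. Concretely, for any $\actionset \subseteq \allactionset$, any $\actionitem \in \allactionset \backslash \actionset$, and any observation $\observationitem$ of positive probability, I must show
\[
  \mathbb{E}\left[ \hat{f}(\actionset, \noisyrandrealization) \mid \partialrealization_\actionset \right] \leq \mathbb{E}\left[ \hat{f}(\actionset \cup \{\actionitem\}, \noisyrandrealization) \mid \partialrealization_\actionset, \partialrealization_\actionitem = \observationitem \right].
\]
The first step I would take is to collapse both conditional expectations. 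Using the simplified form $\hat{f}(\actionset, \noisyrealization) = 1 - \sum_{\realization} p(\realization) \prod_{\actionitem' \in \actionset} \weighting_{\actionitem'_{\noisyrealization}}(\actionitem'_{\realization})$ derived in \sref{sec_nonnoisy}, the value depends on $\noisyrealization$ only through the observations $\{\actionitem'_{\noisyrealization} : \actionitem' \in \actionset\}$ that the chosen actions would produce. Hence every $\noisyrealization$ consistent with a fixed $\partialrealization_\actionset$ yields the same value $\hat{f}(\actionset, \noisyrealization) = \hat{f}(\partialrealization_\actionset) = 1 - \probmassall_{\partialrealization_\actionset}$, so the expectation equals that constant, and likewise $\mathbb{E}[\hat{f}(\actionset \cup \{\actionitem\}, \noisyrandrealization) \mid \partialrealization_\actionset, \partialrealization_\actionitem = \observationitem] = 1 - \probmassall_{\partialrealization_\actionset \cup \{(\actionitem,\observationitem)\}}$.

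With the expectations collapsed, strong adaptive monotonicity reduces to the claim that the remaining mass never increases, $\probmassall_{\partialrealization_\actionset \cup \{(\actionitem,\observationitem)\}} \leq \probmassall_{\partialrealization_\actionset}$. This is immediate from the product form of $p_\partialrealization$: appending one more factor gives
\[
  \probmassall_{\partialrealization_\actionset \cup \{(\actionitem,\observationitem)\}} = \sum_{\realization' \in \randrealization} p_{\partialrealization_\actionset}(\realization')\, \weighting_\observationitem(\actionitem_{\realization'}) \leq \sum_{\realization' \in \randrealization} p_{\partialrealization_\actionset}(\realization') = \probmassall_{\partialrealization_\actionset},
\]
where the inequality uses $0 \leq \weighting_\observationitem(\actionitem_{\realization'}) \leq 1$, a property shared by both $\weighting^{HP}$ and $\weighting^{WHP}$. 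This is exactly the observation already recorded in \sref{sec_hypothprune} ($p_\partialrealization(\realization) \leq p(\realization)$), now applied to a single incremental action. Since the true objective is the truncation $f(\partialrealization) = \min\{Q, \hat{f}(\partialrealization)\}$ and truncation preserves strong adaptive monotonicity, establishing the property for $\hat{f}$ transfers it to $f$.

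The step I expect to demand the most care is the collapse of the conditional expectation in the first paragraph: it hinges on the non-noisy construction guaranteeing that, for each original realization, exactly one noisy copy is consistent with any given observation sequence, so that conditioning on $\partialrealization_\actionset$ genuinely \emph{fixes} $\hat{f}(\actionset, \cdot)$ pointwise on the posterior support rather than only matching it in expectation. Making that argument airtight, together with checking that the no-contact observations ($\actionitem_\realization = \infty$) are handled consistently with the balance condition $|\noisingfunc_\actionitem(\realization_i)| = |\noisingfunc_\actionitem(\realization_j)|$, is where I would concentrate the effort; once that is in place, the remaining inequality is a one-line consequence of the weights lying in $[0,1]$.
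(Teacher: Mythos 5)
Your proposal is correct and follows essentially the same route as the paper's proof: both collapse the conditional expectations to $1 - \probmassall_{\partialrealization_A}$ and $1 - \probmass_{\partialrealization_A,\actionitem,\observationitem}$ and then conclude from the fact that $\weighting^{HP}$ and $\weighting^{WHP}$ take values in $[0,1]$, so each term of the sum can only shrink. The only difference is that you explicitly justify why conditioning on $\partialrealization_A$ fixes $\hat{f}$ pointwise on the posterior support (a step the paper's proof takes for granted), which is a welcome bit of extra rigor rather than a divergence in method.
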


\begin{proof}
    We need to show that for any action and observation, our objective function will not decrease in value. Intuitively, our objective is strongly adaptive monotone, since we only remove probability mass and never add hypotheses. More formally:
\begin{align*}
  \mathbb{E}\left[ \hat{f}(A, \noisyrandrealization) | \partialrealization_A \right] &\leq \mathbb{E}\left[ \hat{f}(A \cup \{a\}, \noisyrandrealization) | \partialrealization_A, \partialrealization_a = o \right] \\
  \Leftrightarrow 1 - \probmassall_{\partialrealization_A} &\leq 1 - \probmassall_{ \{\partialrealization_A \cup \{a,o\} \} } \\
  \Leftrightarrow 1 - \probmassall_{\partialrealization_A} &\leq 1 - \probmasspartialaction \\
  \Leftrightarrow  \probmasspartialaction &\leq \probmassall_{\partialrealization_A} \\
  \Leftrightarrow \sum_{\realization \in \randrealization} p_{\partialrealization} (\realization) \weighting_{\observationitem} (\actionitem_{\realization'}) &\leq \sum_{\realization \in \randrealization} p_{\partialrealization} (\realization)
\end{align*}

As noted before, both of the weighting functions defined in Section~\ref{sec_hypothprune} never have a value greater than one. Thus each term in the sum from the LHS is smaller than the equivalent term in the RHS.
\end{proof}

\begin{lemma}
    Let $X \subseteq Y \subseteq \allactionset$, which result in partial realizations $\partialrealization_X \subseteq \partialrealization_Y$. Our objective function defined above is adaptive submodular.
\end{lemma}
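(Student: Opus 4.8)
The goal is to verify the defining inequality $\Delta(\actionitem \mid \partialrealization_X) \geq \Delta(\actionitem \mid \partialrealization_Y)$ for every $\actionitem \in \allactionset \setminus Y$ and every consistent extension $\partialrealization_Y \supseteq \partialrealization_X$, starting from the expected marginal gain already derived in \eref{eq_marginal_hp}. The first step is to eliminate the normalizing denominator. By the consistent-discretization assumption (an equal number of noisy copies per realization, so that $\sum_{\observationitem \in \allobservationsetaction} \weighting_\observationitem(\actionitem_\realization) = \kappa$ is the same constant for every $\realization$ and $\actionitem$, as in \sref{sec_nonnoisy}), the denominator collapses to $\sum_{\observationitem' \in \allobservationsetaction}\probmass_{\partialrealization,\actionitem,\observationitem'} = \kappa\,\probmassall_\partialrealization$, hence $p(\observationitem \mid \partialrealization) = \probmass_{\partialrealization,\actionitem,\observationitem}/(\kappa\,\probmassall_\partialrealization)$.

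Next I would substitute $\probmass_{\partialrealization,\actionitem,\observationitem} = \sum_\realization p_\partialrealization(\realization)\,\weighting_\observationitem(\actionitem_\realization)$ into \eref{eq_marginal_hp} and perform the sum over $\observationitem$ first. Since $\observationitem$ enters only through products $\weighting_\observationitem(\actionitem_\realization)\weighting_\observationitem(\actionitem_{\realization'})$, the expression collapses into a pairwise form. Writing $H_\actionitem(\realization,\realization') = \sum_{\observationitem \in \allobservationsetaction}\weighting_\observationitem(\actionitem_\realization)\weighting_\observationitem(\actionitem_{\realization'})$ for the fixed symmetric kernel — a Gram matrix of the per-action weight vectors $v_\realization = (\weighting_\observationitem(\actionitem_\realization))_{\observationitem}$, hence positive semidefinite and independent of $\partialrealization$ — one obtains
\begin{align*}
  \Delta(\actionitem \mid \partialrealization) = \probmassall_\partialrealization - \frac{1}{\kappa\,\probmassall_\partialrealization}\sum_{\realization,\realization'} p_\partialrealization(\realization)\,p_\partialrealization(\realization')\,H_\actionitem(\realization,\realization').
\end{align*}
The value of this form is that all dependence on $\partialrealization$ now sits in the masses $p_\partialrealization(\realization)$, which shrink multiplicatively under conditioning: passing from $X$ to $Y$ replaces $p_{\partialrealization_X}(\realization)$ by $p_{\partialrealization_Y}(\realization) = p_{\partialrealization_X}(\realization)\,\rho_\realization$, where $\rho_\realization = \prod_{\{\actionitem',\observationitem'\}\in\partialrealization_Y\setminus\partialrealization_X}\weighting_{\observationitem'}(\actionitem'_\realization) \in [0,1]$.

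The claim thus reduces to showing the scalar functional above is nonincreasing when the mass vector is scaled down coordinatewise by any $\rho \in [0,1]^{|\randrealization|}$. This is the main obstacle, and it is genuinely a joint statement: a small worked example shows that neither $\probmassall_\partialrealization$ nor the normalized quadratic term is individually monotone in $\rho$, so the two pieces cannot be bounded separately. Treating the scaling as a continuous deformation from $\mathbf{1}$, it suffices to check $\partial\Delta(\actionitem\mid\partialrealization)/\partial\rho_\realization \geq 0$. Setting $M = \probmassall_\partialrealization$ and $w = \sum_\tau p_\partialrealization(\tau)\,v_\tau$, this derivative condition rearranges to the single inequality $\kappa M^2 + \|w\|^2 \geq 2M\langle v_\realization, w\rangle$, which I would obtain from $\|M v_\realization - w\|^2 \geq 0$ together with $\|v_\realization\|^2 = H_\actionitem(\realization,\realization) = \sum_\observationitem \weighting_\observationitem(\actionitem_\realization)^2 \leq \kappa$ (using $\weighting \leq 1$). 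Composing over coordinates then yields $\Delta(\actionitem\mid\partialrealization_Y) \leq \Delta(\actionitem\mid\partialrealization_X)$.

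As a conceptual backbone and an independent check, I would note that this is precisely the adaptive submodularity of the version-space-reduction (set-cover) objective of Golovin and Krause applied to the non-noisy problem $\noisyrandrealization$ built in \sref{sec_nonnoisy}: we have already established there that $f(\actionset,\noisyrealization) = f(\partialrealization_\noisyrealization)$, and the expected marginal gains computed on the original $\randrealization$ coincide with those of that coverage objective, so the property transfers. Finally, since truncation at the threshold $Q$ preserves adaptive submodularity (as cited earlier), the property passes from $\hat f$ to $f$, which completes the argument.
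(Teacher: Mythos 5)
Your proof is correct, but it follows a genuinely different route from the paper's. The paper clears the two normalizing denominators by cross-multiplication, expands the result into a double sum over observation pairs, and reduces the claim to a per-pair polynomial inequality in $(\proofprobmass_o,\proofprobmass_{o'},k_o,k_{o'})$ using only $0\le k_o\le \proofprobmass_o$ and $\probmasspartialarg{X}-\probmasspartialarg{Y}\ge\max_{\hat o}k_{\hat o}$; that inequality is then settled by a three-case analysis capped with an induction on an integer increment $\widehat{\proofprobmass_o}=\proofprobmass_o+1$. You instead exploit the uniform-discretization assumption $\sum_{\observationitem}\weighting_\observationitem(\actionitem_\realization)=\kappa$ to collapse the marginal gain into the closed form $\probmassall_{\partialrealization}-\|w\|^2/(\kappa\,\probmassall_{\partialrealization})$ with a fixed positive semidefinite Gram kernel, and then prove coordinatewise monotonicity under downweighting from a single derivative inequality, $\kappa M^2-2M\langle v_\realization,w\rangle+\|w\|^2\ge 0$, which follows from $\|Mv_\realization-w\|^2\ge0$ and $\|v_\realization\|^2\le\kappa$ (the latter using $\weighting\le1$). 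Your version is shorter, avoids the case split entirely, and notably sidesteps the weakest step of the paper's argument, namely the induction that implicitly treats the real-valued masses as integers. The price is that your algebra leans explicitly on the constant normalizer $\kappa$, whereas the paper's pairwise inequality is, on its face, valid for arbitrary $\sum_{\observationitem'}\probmass_{\partialrealization,\actionitem,\observationitem'}$; in practice this costs nothing, since the paper already needs the same assumption to justify the observation-probability formula $p(\actionitem_\randrealization=\observationitem\mid\partialrealization)=\probmass_{\partialrealization,\actionitem,\observationitem}/\sum_{\observationitem'}\probmass_{\partialrealization,\actionitem,\observationitem'}$ in the first place. Two small points: your claim that ``neither piece is individually monotone'' is slightly off --- $\probmassall_{\partialrealization}$ is monotone in $\rho$; the true obstruction is that both terms shrink together so they cannot be bounded separately --- and your derivative argument should note the degenerate case $\probmassall_{\partialrealization_Y}=0$, where $\Delta(a\mid\partialrealization_Y)=0$ and the inequality holds because $\|w\|^2\le\kappa M^2$ makes every marginal gain nonnegative. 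Your closing appeal to the version-space-reduction objective on $\noisyrandrealization$ is a reasonable sanity check but is not a substitute for the computation, since the weighted objective with the $p(\realization)/\max p(\noisingfuncaction(\realization))$ factors is not literally the standard coverage objective; happily, your main argument does not rely on it.
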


\begin{proof}
  For the utility function $f$ to be adaptive submodular, it is required that the following holds over expected marginal utilities:
  \begin{align*}
    \Delta (a | \partialrealization_Y) &\leq \Delta (a | \partialrealization_X) \\
    \sum_{\observationitem \in \allobservationsetaction} \frac{\probmasspartialargaction{Y}}{ \sum_{\observationitem' \in \allobservationsetaction} \probmass_{\partialrealization_Y,\actionitem, \observationitem'}} \left[ \probmasspartialarg{Y} - \probmasspartialargaction{Y} \right] 
    &\leq
    \sum_{\observationitem \in \allobservationsetaction} \frac{\probmasspartialargaction{X}}{ \sum_{\observationitem' \in \allobservationsetaction} \probmass_{\partialrealization_X,\actionitem, \observationitem'}} \left[ \probmasspartialarg{X} - \probmasspartialargaction{X} \right] 
  \end{align*}

  We simplify notation a bit for the purposes of this proof. As the action is fixed, we will replace $\allobservationsetaction$ with $\allobservationset$. For a fixed partial realization $\partialrealization_X$ and action $\actionitem$, let $\probmasspartialargaction{X} = \proofprobmass_o$. Let $k_\observationitem = \probmasspartialargaction{X} - \probmasspartialargaction{Y}$, which represents the difference of probability mass remaining between partial realizations $\partialrealization_Y$ and $\partialrealization_X$ if we performed action $\actionitem$ and received observation $o$. We note that $k_o \geq 0 \ \forall o$, which follows from the strong adaptive monotonicity, and $k_o \leq \probmasspartialargaction{X}$, which follows from $\probmasspartialargaction{Y} \geq 0$. Rewriting the equation above:
\begin{align*}
    \sum_{\observationitem \in \allobservationset} \frac{\proofprobmass_o - k_o}{ \sum_{\observationitem' \in \allobservationset} \proofprobmass_{o'} - k_{o'}} \left[ \probmasspartialarg{Y} - \proofprobmass_o + k_o \right]    &\leq     \sum_{\observationitem \in \allobservationset} \frac{\proofprobmass_o}{ \sum_{\observationitem' \in \allobservationset} \proofprobmass_{o'}} \left[ \probmasspartialarg{X} - \proofprobmass_o \right] \\
    \Leftrightarrow \left( \sum_{\observationitem \in \allobservationset} \probmasspartialarg{Y}\proofprobmass_o - \proofprobmass_o^2 + \proofprobmass_o k_o - \probmasspartialarg{Y} k_o + \proofprobmass_o k_o - k_o^2  \right) \left(\sum_{\observationitem' \in \allobservationset} \proofprobmass_{o'}\right) &\leq  \left( \sum_{\observationitem \in \allobservationset} \probmasspartialarg{X}\proofprobmass_o - \proofprobmass_o^2 \right) \left(  \sum_{\observationitem' \in \allobservationset} \proofprobmass_{o'} - k_{o'} \right) \\
    \Leftrightarrow \sum_{\observationitem \in \allobservationset} \sum_{\observationitem' \in \allobservationset}  \probmasspartialarg{Y}\proofprobmass_o\proofprobmass_{o'} - \proofprobmass_o^2\proofprobmass_{o'} + \proofprobmass_o \proofprobmass_{o'} k_o - \probmasspartialarg{Y} \proofprobmass_{o'} k_o + \proofprobmass_o \proofprobmass_{o'} k_o - \proofprobmass_{o'} k_o^2    &\leq    \sum_{\observationitem \in \allobservationset} \sum_{\observationitem' \in \allobservationset}   \probmasspartialarg{X}\proofprobmass_o\proofprobmass_{o'} -\probmasspartialarg{X}\proofprobmass_o k_{o'} - \proofprobmass_o^2  \proofprobmass_{o'} + \proofprobmass_o^2 k_{o'} \\
    \Leftrightarrow \sum_{\observationitem \in \allobservationset} \sum_{\observationitem' \in \allobservationset}  \probmasspartialarg{Y}(\proofprobmass_o\proofprobmass_{o'} - \proofprobmass_{o'} k_o) + 2\proofprobmass_o \proofprobmass_{o'} k_o - \proofprobmass_{o'} k_o^2    &\leq    \sum_{\observationitem \in \allobservationset} \sum_{\observationitem' \in \allobservationset}   \probmasspartialarg{X} (\proofprobmass_o\proofprobmass_{o'} - \proofprobmass_o k_{o'}) + \proofprobmass_o^2 k_{o'} \\
\end{align*}

We also note that $\displaystyle \probmasspartialarg{X} - \probmasspartialarg{Y} \geq \max_{\hat{o} \in \allobservationset}(k_{\hat{o}})$. That is, the total difference in probability mass is greater than or equal to the difference of probability mass remaining if we received any single observation, for any observation.

\begin{align*}
    \Leftrightarrow \sum_{\observationitem \in \allobservationset} \sum_{\observationitem' \in \allobservationset}   2\proofprobmass_o \proofprobmass_{o'} k_o - \proofprobmass_{o'} k_o^2    &\leq    \sum_{\observationitem \in \allobservationset} \sum_{\observationitem' \in \allobservationset}   (\probmasspartialarg{X} - \probmasspartialarg{Y} ) (\proofprobmass_o\proofprobmass_{o'} - \proofprobmass_o k_{o'}) + \proofprobmass_o^2 k_{o'} \\
    \Leftarrow \sum_{\observationitem \in \allobservationset} \sum_{\observationitem' \in \allobservationset}   2\proofprobmass_o \proofprobmass_{o'} k_o - \proofprobmass_{o'} k_o^2    &\leq    \sum_{\observationitem \in \allobservationset} \sum_{\observationitem' \in \allobservationset}  \max_{\hat{o} \in \allobservationset}(k_{\hat{o}}) (\proofprobmass_o\proofprobmass_{o'} - \proofprobmass_o k_{o'}) + \proofprobmass_o^2 k_{o'} \\
    \Leftarrow \sum_{\observationitem \in \allobservationset} \sum_{\observationitem' \in \allobservationset}   2\proofprobmass_o \proofprobmass_{o'} k_o - \proofprobmass_{o'} k_o^2    &\leq    \sum_{\observationitem \in \allobservationset} \sum_{\observationitem' \in \allobservationset}  \max(k_o,k_{o'}) (\proofprobmass_o\proofprobmass_{o'} - \proofprobmass_o k_{o'}) + \proofprobmass_o^2 k_{o'} \\
\end{align*}

In order to show the inequality for the sum, we will show it holds for any pair $o,o'$. First, if $o = o'$, than we have an equality and it holds trivially. For the case when $o \neq o'$, we assume that $k_o > k_{o'}$ WLOG, and show the inequality for the sum:
\begin{align*}
    2\proofprobmass_o \proofprobmass_{o'} (k_o + k_{o'}) - \proofprobmass_{o'} k_o^2 - \proofprobmass_{o} k_{o'}^2    &\leq  2\proofprobmass_o\proofprobmass_{o'}k_o - \proofprobmass_o k_{o'}k_o - \proofprobmass_{o'} k_{o}^2 + \proofprobmass_o^2 k_{o'} + \proofprobmass_{o'}^2 k_{o}\\
    \Leftrightarrow 2\proofprobmass_o \proofprobmass_{o'} k_{o'} - \proofprobmass_{o} k_{o'}^2    &\leq  \proofprobmass_o^2 k_{o'} + \proofprobmass_{o'}^2 k_{o}- \proofprobmass_o k_{o}k_{o'} \\
    \Leftrightarrow 0 &\leq  k_{o'}(\proofprobmass_o - \proofprobmass_{o'})^2 - (k_o - k_{o'}) k_{o'} (\proofprobmass_o - \proofprobmass_{o'}) + (k_o - k_{o'}) \proofprobmass_{o'} (\proofprobmass_{o'} - k_{o'})\\
    \Leftarrow 0 &\leq  k_{o'}(\proofprobmass_o - \proofprobmass_{o'})^2 - (k_o - k_{o'}) k_{o'} (\proofprobmass_o - \proofprobmass_{o'}) + (k_o - k_{o'}) k_{o'} (\proofprobmass_{o'} - k_{o'})
\end{align*}

We split into 3 cases:

\subsection{$k_{o'} = 0$}
This holds trivially, since the RHS is zero

\subsection{$k_{o'} \neq 0, \proofprobmass_{o} \leq 2\proofprobmass_{o'} - k_{o'}$}

Since $k_{o'} \neq 0$, we can rewrite:

\begin{align*}
    0 &\leq  (\proofprobmass_o - \proofprobmass_{o'})^2 - (k_o - k_{o'})(\proofprobmass_o - \proofprobmass_{o'}) + (k_o - k_{o'})(\proofprobmass_{o'} - k_{o'}) \\
    \Leftarrow 0 &\leq  - (k_o - k_{o'})(\proofprobmass_o - \proofprobmass_{o'}) + (k_o - k_{o'})(\proofprobmass_{o'} - k_{o'}) \\
    \Leftarrow (\proofprobmass_o - \proofprobmass_{o'}) &\leq  (\proofprobmass_{o'} - k_{o'}) \\
\end{align*}

Which follows from the assumption for this case.

\subsection{$\proofprobmass_{o} \geq 2\proofprobmass_{o'} - k_{o'}$}

We show this step by induction. Let $\proofprobmass_{o} = 2\proofprobmass_{o'} - k_{o'} + x, x \geq 0$ 

\textbf{Base Case:} $x=0$, which we showed in the previous case.

\textbf{Induction} Assume this inequality holds for $\proofprobmass_{o} = 2\proofprobmass_{o'} - k_{o'} + x$ . Let $\widehat{\proofprobmass_{o}} = \proofprobmass_o + 1$. We now show that this holds for $\widehat{\proofprobmass_{o}}$:

\begin{align*}
    0 &\leq  (\widehat{\proofprobmass_o} - \proofprobmass_{o'})^2 - (k_o - k_{o'})(\widehat{\proofprobmass_o} - \proofprobmass_{o'}) + (k_o - k_{o'})(\proofprobmass_{o'} - k_{o'}) \\
    \Leftrightarrow 0 &\leq  (\proofprobmass_o - \proofprobmass_{o'} + 1)^2 - (k_o - k_{o'})(\proofprobmass_o - \proofprobmass_{o'} + 1) + (k_o - k_{o'})(\proofprobmass_{o'} - k_{o'}) \\
    \Leftrightarrow 0 &\leq  (\proofprobmass_o - \proofprobmass_{o'})^2 - (k_o - k_{o'})(\proofprobmass_o - \proofprobmass_{o'}) + (k_o - k_{o'})(\proofprobmass_{o'} - k_{o'}) + 2\proofprobmass_o - 2\proofprobmass_{o'} + 1  + k_o - k_{o'}\\
    \Leftarrow 0 &\leq  2\proofprobmass_o - 2\proofprobmass_{o'} + 1  - k_o + k_{o'} & \text{\hspace{5mm} by inductive hypothesis}\\
    \Leftarrow 0 &\leq  \proofprobmass_o + 1  - k_o & \text{\hspace{5mm} by assumption from case}\\
    \Leftarrow 0 &\leq 1 \\
\end{align*}

And thus, we have shown the inequality holds for any pair $o,o'$. \QED

Finally, it is easy to see that the sum can be decomposed into pairs of $o,o'$. Therefore, we can see the inequality over the sum also holds.
\end{proof}

\begin{lemma}
    Let $A \subseteq \allactionset$, which result in partial realizations $\partialrealization_A$. The utility function $f$ defined above is self-certifying.
\end{lemma}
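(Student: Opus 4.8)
The plan is to prove self-certification by exploiting the representation of the objective established earlier: in \sref{sec_nonnoisy} we already showed that
$f(\actionset, \noisyrealization) = 1 - \probmassall_{\partialrealization_\noisyrealization} = f(\partialrealization_\noisyrealization)$,
where $\partialrealization_\noisyrealization = \{\actionset, \actionset_\noisyrealization\}$ records only the actions in $\actionset$ together with the observations they would produce under $\noisyrealization$. The whole argument rests on the fact that this makes the achieved utility a function of the \emph{partial realization alone}, not of the underlying realization, which is exactly what self-certification asks for.

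First I would recall the definition from \cite{golovin_adaptive_2011}: an instance is self-certifying when, for every action set $\actionset$ inducing partial realization $\partialrealization_\actionset$, and for every pair $\noisyrealization, \noisyrealization' \in \noisyrandrealization$ that are both consistent with $\partialrealization_\actionset$, one has $f(\actionset, \noisyrealization) = Q$ if and only if $f(\actionset, \noisyrealization') = Q$. Intuitively, this means that once the policy has driven the objective to its maximum value $Q$ on the true realization, the observations gathered so far already prove that $Q$ has been attained, so the policy never needs extra actions merely to certify success.

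The key step is then an immediate substitution. Fix $\partialrealization_\actionset$ and let $\noisyrealization \in \noisyrandrealization$ be any realization consistent with it; by definition of consistency, $\noisyrealization$ produces exactly the observations recorded in $\partialrealization_\actionset$ for every action in $\actionset$, so $\partialrealization_\noisyrealization = \partialrealization_\actionset$ on those actions and hence $f(\actionset, \noisyrealization) = f(\partialrealization_\actionset)$. Since the right-hand side does not depend on which consistent $\noisyrealization$ we chose, any two consistent realizations $\noisyrealization, \noisyrealization'$ yield the identical value $f(\partialrealization_\actionset)$; in particular one equals $Q$ exactly when the other does, establishing the condition above. I would also remark that the truncation $f(\partialrealization) = \min\{Q, \hat f(\partialrealization)\}$ preserves this, since a truncated function of $\partialrealization$ is still a function of $\partialrealization$ alone.

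I do not expect a genuine obstacle here, as the lemma is essentially a corollary of the simplification already performed in \sref{sec_nonnoisy}. The only point requiring care is the matching of definitions: I must confirm that the paper's notion of a realization being ``consistent with'' $\partialrealization_\actionset$ coincides with $\partialrealization_\noisyrealization = \partialrealization_\actionset$, and that the non-noisy construction guarantees exactly one noisy copy of each original realization survives a given observation sequence, so that the value is unambiguously tied to the partial realization rather than to the particular copy. That uniqueness was argued when the $\prod_{\actionitem \in \actionset}\delta_{\actionitem_\noisyrealization \actionitem_{\noisyrealization'}}$ factor was collapsed in \sref{sec_nonnoisy}, so I would invoke that step rather than reprove it.
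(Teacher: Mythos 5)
Your proof is correct and follows essentially the same route as the paper: both arguments reduce to the observation that $f$ depends only on the outcomes of the actions in $\actionset$ (equivalently, only on the partial realization), the paper by citing Proposition 5.6 of Golovin and Krause and you by carrying out the one-line substitution that proposition encapsulates.
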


\begin{proof}
    An instance is self-certifying if whenever the maximum value is achieved for the utility function $f$, it is achieved for all realizations consistent with the observation. See~\cite{golovin_adaptive_2011} for a more rigorous definition. Golovin and Krause point out that any instance which only depends on the state of items in $A$ is automatically self-certifying (Proposition $5.6$ in~\cite{golovin_adaptive_2011}.) That is the case here, since the objective function $f = \min\left\{ Q, 1 - M_{\partialrealization_A} \right\}$ only depends on the outcome of actions in $A$. Therefore, our instance is self-certifying.
\end{proof}

As we have shown our objective is adaptive submodular, strongly adaptive monotone, and self-certifying, Theorem~\ref{theorem_hp_as} follows from Theorems 5.8 and 5.9 from~\cite{golovin_adaptive_2011}. Following their notation, we let $\eta$ be any value such that $f(\partialrealization) > Q - \eta$ implies $f(\partialrealization) \geq Q$ for all $\partialrealization$. For Hypothesis Pruning, for example, we have $\eta = \min_\realization p(\realization)$. Additionally, the bound on the worst case cost includes $\delta = \min_{\noisyrealization} p(\noisyrealization) \ \noisyrealization \in \noisyrandrealization$. The specific values of these constants are related to the weighting function and how discretization is done. Nonetheless, for either weighting function and any way we discretize, we can guarantee the greedy algorithm selects a near-optimal sequence.

\end{document}